\documentclass[10pt]{article}

\usepackage[T1]{fontenc}
\usepackage{microtype}
\usepackage[margin=1in]{geometry}
\usepackage{booktabs}
\usepackage{amsmath,amssymb,amsfonts,amsthm}
\usepackage{bm}
\usepackage{enumitem}
\usepackage{graphicx}
\usepackage{caption}
\usepackage{subcaption}
\usepackage[export]{adjustbox}
\usepackage{placeins}
\usepackage{algorithm}
\usepackage{algpseudocode}
\usepackage{multirow}
\usepackage[hidelinks]{hyperref}

\newcommand{\Ccal}{\mathcal{C}}
\newcommand{\Acal}{\mathcal{A}}
\newcommand{\Ucal}{\mathcal{U}}

\newcommand{\argmin}{\operatorname*{arg\,min}}

\theoremstyle{plain}

\newtheorem{proposition}{Proposition}
\newtheorem{lemma}{Lemma}
\newtheorem{corollary}{Corollary}

\theoremstyle{definition}

\newtheorem{assumption}{Assumption}

\theoremstyle{remark}
\newtheorem{remark}{Remark}

\usepackage[backend=biber,style=numeric-comp,sorting=none,maxbibnames=99,doi=true,url=false,isbn=false,eprint=true]{biblatex}
\title{Reference-free logged energy-oracle recovery for neural approximations
of symmetric coercive variational problems: conforming Riesz reconstruction
and archive-level selection}

\author{%
Karim Bounja\textsuperscript{1,*}\quad
Lahcen Laayouni\textsuperscript{2}\quad
Boujemaa Achchab\textsuperscript{1}\quad
Abdeljalil Sakat\textsuperscript{1}
}

\date{}

\hypersetup{
  pdftitle={Reference-free logged energy-oracle recovery for neural approximations of symmetric coercive variational problems: conforming Riesz reconstruction and archive-level selection},
  pdfauthor={Karim Bounja, Lahcen Laayouni, Boujemaa Achchab, Abdeljalil Sakat},
  pdfkeywords={Reference-free checkpoint selection, Neural PDE solvers, Symmetric coercive variational problems, Conforming Riesz reconstruction, Energy-norm error estimation, Oracle selection consistency}
}

\begin{document}

\maketitle

\begingroup
\renewcommand{\thefootnote}{\arabic{footnote}}
\footnotetext[1]{Laboratory for Analysis and Modeling of Systems and Decision Support (LAMSAD), Hassan 1st University of Settat, Settat 26000, Morocco.}
\footnotetext[2]{Department of Computer Science, School of Science and Engineering, Al Akhawayn University in Ifrane, Ifrane 53000, Morocco.}
\renewcommand{\thefootnote}{*}
\footnotetext[3]{Corresponding author: \href{mailto:k.bounja.doc@uhp.ac.ma}{\texttt{k.bounja.doc@uhp.ac.ma}}.}
\endgroup

\begin{abstract}
\normalsize
Neural PDE training yields a finite checkpoint archive, yet its logged energy
errors are inaccessible without the exact solution, while loss-based selection
does not necessarily recover the logged energy oracle. For admissible neural
approximations of symmetric coercive variational problems, we introduce a
reference-free selection rule based on minimizing a computable conforming
Riesz monitor. The exact residual--energy identity and conforming projection make the monitor
an unconditional lower bound converging monotonically to each logged energy
error under nested conforming refinement; under saturation, hierarchical
enrichment yields a computable upper estimate and hence a lower--upper bracket.
A key finding is that archive selection is order-sensitive: unresolved
checkpoint-dependent components can reverse the oracle--non-oracle ranking at
finite resolution, so checkpointwise recovery alone is insufficient. For finite archives, we prove uniform recovery, yielding convergence to the
logged-oracle error and, without saturation, logged-oracle selection at
sufficiently fine auxiliary resolution. Under saturation, the bracket gives a
computable near-oracle bound and certifies unique logged-oracle selection upon
interval separation. We also bound logging-resolution loss and certify oracle
inclusion over prescribed comparison trajectories. The resulting criterion replaces inaccessible exact-error minimization by
computable, training-independent post-training selection on the intrinsic
energy-error scale, requiring only the computed candidates and the variational
problem. Experiments on diffusion and elasticity, including a non-manufactured
perforated plate, demonstrate energy-scale calibration, oracle-level selection,
and modest post-processing cost.
\end{abstract}

\noindent\textbf{Keywords:} Reference-free checkpoint selection; Neural PDE solvers; Symmetric coercive variational problems; Conforming Riesz reconstruction; Energy-norm error estimation; Oracle selection consistency.

\section{Introduction}
\label{sec:introduction}

Neural solvers are increasingly used to approximate solutions of partial
differential equations, yet their practical reliability depends on assessing
the computed output in a norm intrinsic to the underlying problem
\cite{DeRyck2024,ZeinhoferMasriMardal2025}. When training yields a finite archive
\(\{u_{\theta_k}\}_{k\in\mathcal K_{\mathrm{log}}}\), a recorded output is
typically retained as the numerical solution. Let
\((\mathcal X,\|\cdot\|_{\mathcal X})\) denote the problem-dependent solution
space and the norm in which approximation errors are to be assessed. The
reference selection criterion is then to retain the available approximation
closest to the exact solution, that is, to identify
\[
\operatorname*{arg\,min}_{k\in\mathcal K_{\mathrm{log}}}
\|u_{\theta_k}-u^\ast\|_{\mathcal X}.
\]
This ideal error criterion cannot be evaluated directly because the exact
solution \(u^\ast\) is unknown. Accordingly, checkpoint selection on the
problem-intrinsic error scale calls for computable diagnostics aligned with
the same intrinsic geometry.

For the symmetric coercive problems considered here, let
\(\mathcal A:=u_b+V_0\) denote the affine admissible class, where \(V_0\)
is the space of homogeneous admissible directions. Since the exact solution
\(u^\ast\) and every admissible recorded approximation \(u_{\theta_k}\)
belong to \(\mathcal A\), \(u_{\theta_k}-u^\ast\in V_0\). The symmetric
coercive bilinear form \(a\) induces the energy norm
\(\|v\|_a:=a(v,v)^{1/2}\) on \(V_0\). For the finite checkpoint archive, define
\[
E_k:=\|u_{\theta_k}-u^\ast\|_a,
\qquad
\mathcal K_E^\ast
:=
\operatorname*{arg\,min}_{k\in\mathcal K_{\mathrm{log}}}E_k.
\]
Although the errors \(E_k\) are inaccessible, the quadratic Ritz functional
\(J(u):=\frac12 a(u,u)-\ell(u)\) and the variational residual
\[
\langle R(u),v\rangle:=\ell(v)-a(u,v),
\qquad v\in V_0,
\]
satisfy, for every logged checkpoint,
\[
J(u_{\theta_k})-J(u^\ast)
=
\frac12 E_k^2
=
\frac12\|R(u_{\theta_k})\|_{V_{0,a}'}^2.
\]
Thus, the energy-dual residual is exactly calibrated with the target energy
error.

For a finite-dimensional conforming auxiliary space \(V_m\subset V_0\), let
\(z_{m,k}\in V_m\) satisfy
\[
a(z_{m,k},v)
=
\langle R(u_{\theta_k}),v\rangle,
\qquad
v\in V_m,
\]
and define the conforming Riesz monitor
\(\eta_{m,k}:=\|z_{m,k}\|_a\). Conforming projection gives
\[
\eta_{m,k}\leq E_k,
\qquad
\eta_{m,k}\uparrow E_k
\]
along a nested conforming auxiliary hierarchy with dense union in \(V_0\).
This checkpointwise recovery naturally suggests the computable selection rule
\[
\mathcal K_m^\ast
:=
\operatorname*{arg\,min}_{k\in\mathcal K_{\mathrm{log}}}
\eta_{m,k}.
\]

The nontrivial issue is whether checkpointwise recovery lifts to consistent
selection over the archive. Since selection is an ordering problem, oracle
recovery requires preservation of the oracle--non-oracle separation induced
by the exact energy errors. At a fixed auxiliary resolution, this need not
hold: checkpoint-dependent unresolved components may reverse the
oracle--non-oracle ranking. We refer to this as the finite-resolution
ordering obstruction.

For a finite archive, we show that checkpointwise recovery upgrades to the
uniform estimate
\[
\varepsilon_m
:=
\max_{k\in\mathcal K_{\mathrm{log}}}
|E_k-\eta_{m,k}|
\longrightarrow0.
\]
This uniform recovery eventually restores the oracle--non-oracle separation.
Consequently, every \(k_m^\ast\in\mathcal K_m^\ast\) satisfies
\[
0
\leq
E_{k_m^\ast}-E_\ast
\leq
\varepsilon_m,
\qquad
E_\ast:=\min_{k\in\mathcal K_{\mathrm{log}}}E_k,
\]
and, for all sufficiently large \(m\),
\[
\mathcal K_m^\ast
\subseteq
\mathcal K_E^\ast.
\]
Thus, monitor minimization is consistent in value and eventually recovers the
logged energy oracle without any saturation assumption.

At finite auxiliary resolution, nested conforming reconstructions provide,
under saturation, computable near-oracle bounds and exact logged-oracle
certification by interval separation. We further separate auxiliary resolution
from logging resolution: relative to a prescribed finite comparison
trajectory, competitive-coverage bounds quantify the loss induced by archive
subsampling and yield certificates of trajectory-oracle inclusion and unique
recovery. The resulting framework replaces inaccessible exact-error
minimization by a computable, training-independent post-training selection
procedure requiring neither the exact solution nor a reference solve.
For archive selection, the required auxiliary accuracy is set by the
oracle--non-oracle separation: exact recovery of every individual checkpoint
error is not required.

The present work lies at the intersection of two lines of analysis:
error estimates relating residual or variational training objectives to
solution errors under suitable stability assumptions
\cite{ShinZhangKarniadakis2023,ZeinhoferMasriMardal2025},
and a posteriori constructions providing computable error indicators for
neural PDE approximations
\cite{MinakowskiRichter2023,BerroneCanutoPintore2022}.

For training objectives intended to provide quantitative control of the
error with respect to the exact solution, recent work has identified
calibration with the problem-intrinsic error as a structural design
requirement. Consistent collocation losses providing error control in the
target energy norm have been constructed
\cite{BonitoDeVorePetrovaSiegel2026}, while variationally correct residual
losses whose values are uniformly equivalent to the squared solution error
in the norm determined by a stable variational formulation have been
developed in \cite{BachmayrDahmenOster2025}. Related strategies have also
incorporated computable residual-based error information directly into the
training objective, including discrete dual residual norms and a posteriori
estimators
\cite{RojasMaczugaMunozMatutePardoPaszynski2024,FuhrerRojas2025}.

For archive selection, however, even uniform checkpointwise loss--error
equivalence does not by itself guarantee preservation of a minimum-error
checkpoint under loss minimization over a finite archive. Indeed, suppose
that, for some \(p>0\) and constants \(0<c<C\),
\[
cE_k^p
\leq
\mathcal L_k
\leq
CE_k^p,
\qquad
k\in\mathcal K_{\mathrm{log}}.
\]
If two checkpoints satisfy \(E_i<E_j\) while \(CE_i^p>cE_j^p\), these bounds
are compatible with the admissible values
\(\mathcal L_i=CE_i^p\) and \(\mathcal L_j=cE_j^p\), for which
\(\mathcal L_i>\mathcal L_j\). Thus, uniform loss--error equivalence controls
each candidate checkpointwise but does not, by itself, imply preservation of
the target-error ordering or guarantee that every loss minimizer over the
archive is a target-error minimizer.

Independent post-training error assessment offers a complementary route:
rather than inferring target accuracy from the training loss, a posteriori
approaches provide computable information on the error of a computed
approximation. Representative constructions include a dual-weighted-residual
estimator for Deep Ritz approximations of the Laplace and Stokes problems
\cite{MinakowskiRichter2023}, a reliable and efficient energy-norm estimator
for VPINNs combining residual, loss, and data-oscillation contributions
\cite{BerroneCanutoPintore2022}, and rigorous lower and upper error bounds
obtained from Riesz representations of residual extensions and restrictions
to geometrically simpler domains \cite{ErnstRekatsinasUrban2025}. A related line constructs neural approximation spaces sequentially, with the
resulting updates providing error indicators and stopping criteria in Galerkin
and collocation settings \cite{AinsworthDong2021,WengMaoShen2026}; an extended
Galerkin framework addresses general boundary-value problems with singular
solution structures \cite{AinsworthDong2025}. For general
well-posed variational formulations, checkpointwise error control has been
obtained through a computable projected-residual contribution together with a
complementary residual contribution admitting reliable a posteriori control
\cite{FuhrerRojas2025}. These approaches provide checkpointwise error
assessment. The finite-resolution ordering obstruction for the conforming
Riesz monitor, analyzed in
Proposition~\ref{prop:finite-resolution-rank-reversal}, however, shows that
checkpointwise control alone does not ensure reliable archive-level selection.
More generally, target-error order preservation is not guaranteed beyond this
setting; see Section~\ref{sec:discussion}.

The present work therefore makes finite-archive checkpoint selection itself
the object of analysis in the symmetric coercive setting, identifying the
finite-resolution ordering obstruction and establishing its removal through
uniform recovery over the archive. The resulting reference-free rule is
\[
\mathcal K_m^\ast
:=
\operatorname*{arg\,min}_{k\in\mathcal K_{\mathrm{log}}}\eta_{m,k}.
\]
For finite archives, conforming auxiliary refinement yields eventual
logged-oracle selection; under saturation, nested reconstructions provide
finite-resolution near-oracle bounds and interval-separation certificates,
while logging-resolution control quantifies the loss relative to a prescribed
finite comparison trajectory and yields trajectory-oracle certificates. Thus,
once optimization has generated sufficiently accurate admissible
candidates---the exact Ritz objective being energy-calibrated by
Lemma~\ref{lem:energy_gap}---monitor refinement recovers the logged oracle,
while logging-resolution control determines whether it is also an oracle of
the optimization trajectory.

To the best of our knowledge, this passage from checkpointwise error
assessment to reference-free archive-level oracle selection has not previously
been analyzed for neural PDE approximations. The framework enables reference-free recovery and certification of
energy-optimal checkpoints within the retained archive and relative to a
prescribed finite comparison trajectory.

The numerical study tests calibration, auxiliary and logging refinement, and
oracle selection on manufactured diffusion and elasticity benchmarks and on a
non-manufactured perforated plate assessed against an independent
refined FEM reference. The results show energy-error calibration at
sufficient auxiliary resolution, oracle-level selection, controlled logging
loss, and low monitoring overhead.

The remainder of the paper is organized as follows.
Sections~\ref{sec:setting}--\ref{sec:trajectory_wide_control} develop the
variational setting, conforming Riesz reconstruction, logged-oracle guarantees,
and logging-resolution control. Section~\ref{sec:numerical} presents the
numerical validation, and Section~\ref{sec:discussion} discusses broader
implications for archive-level selection.

\section{Problem setting}
\label{sec:setting}

We consider symmetric coercive variational problems posed on an affine
admissible class and approximated by parametrized functions belonging to that
same class.

\subsection{Abstract symmetric coercive setting}
\label{subsec:abstract_setting}

Let \(X\) be a real Hilbert space and \(V_0\subset X\) a closed linear
subspace representing the homogeneous admissible directions associated with
the essential constraints. Given a lifting \(u_b\in X\) of the prescribed
essential data, define
\[
\Acal:=u_b+V_0.
\]
Thus, admissible functions belong to \(\Acal\), while differences of
admissible functions belong to \(V_0\). For homogeneous essential data,
\(u_b=0\) and \(\Acal=V_0\).

Let \(a:X\times X\to\mathbb R\) be a continuous symmetric bilinear form whose
restriction to \(V_0\times V_0\) is coercive. Thus, there exist constants
\(0<\alpha_a\le\beta_a<\infty\) such that
\[
\begin{aligned}
|a(w,v)|
&\le
\beta_a\|w\|_X\|v\|_X,
&&\forall w,v\in X,\\
a(v,v)
&\ge
\alpha_a\|v\|_X^2,
&&\forall v\in V_0.
\end{aligned}
\]
Let \(\ell\in X'\). The exact solution is the unique
\(u^\ast\in\Acal\) satisfying
\begin{equation}
a(u^\ast,v)=\ell(v),
\qquad
\forall v\in V_0.
\label{eq:abstract_variational_problem}
\end{equation}
Indeed, subtracting the lifting \(u_b\) reduces the affine problem to a
coercive problem on \(V_0\), to which the Lax--Milgram theorem applies.

The bilinear form \(a\) induces the energy inner product and norm on \(V_0\),
\[
(w,v)_a:=a(w,v),
\qquad
\|v\|_a^2:=a(v,v),
\qquad
w,v\in V_0.
\]
For \(u\in X\), define the Ritz functional
\(J(u):=\frac12 a(u,u)-\ell(u)\).
Its minimization over \(\Acal\) has
\eqref{eq:abstract_variational_problem} as its first-order optimality
condition.

Let \(V_0'\) denote the topological dual of \(V_0\). For every
\(u\in\Acal\), the map
\[
v\mapsto \ell(v)-a(u,v)
\]
is continuous on \(V_0\). We therefore define the variational residual
\(R(u)\in V_0'\) by
\[
\langle R(u),v\rangle
:=
\ell(v)-a(u,v),
\qquad
\forall v\in V_0,
\]
and measure it in the dual norm induced by the energy norm,
\[
\|F\|_{V_{0,a}'}
:=
\sup_{v\in V_0\setminus\{0\}}
\frac{|\langle F,v\rangle|}{\|v\|_a},
\qquad
F\in V_0'.
\]

The residual--energy identities of
Section~\ref{sec:riesz_error_assessment} are formulated on \(V_0\), while
the Riesz reconstructions and hierarchical estimators of
Subsection~\ref{subsec:conforming_riesz_hierarchy} are defined on conforming
subspaces of \(V_0\). The key admissibility property is
\[
u-u^\ast\in V_0,
\qquad
u,u^\ast\in\Acal,
\]
which allows the energy-dual norm of the variational residual to be
identified with the energy error.

\subsection{Admissible neural Ritz approximations}
\label{subsec:admissible_neural_ritz}

A parametrized approximation \(u_\theta\) is called admissible if
\(u_\theta\in\Acal\), and therefore \(u_\theta-u^\ast\in V_0\), as required
by the residual--energy identity.

For concrete boundary-value problems, let \(\Gamma_D\) denote the portion of
the boundary carrying Dirichlet-type essential conditions. Admissibility may
be enforced through a hard boundary ansatz chosen so that its homogeneous
correction belongs to \(V_0\). For homogeneous essential data, one may use
\begin{equation}
u_\theta(x)
=
g(x)N_\theta(x),
\qquad
g=0
\quad
\text{on } \Gamma_D,
\label{eq:hard_boundary_ansatz_homogeneous}
\end{equation}
where \(N_\theta\) is the free neural network and the boundary factor \(g\)
is chosen so that \(gN_\theta\in V_0\). For nonhomogeneous essential data,
one may use
\begin{equation}
u_\theta(x)
=
u_b(x)+g(x)N_\theta(x),
\qquad
g=0
\quad
\text{on } \Gamma_D.
\label{eq:hard_boundary_ansatz_nonhomogeneous}
\end{equation}
The first construction yields \(u_\theta=0\) on \(\Gamma_D\), while the
second yields \(u_\theta=u_b\) there; with the stated choice of boundary
factor, both place \(u_\theta\) in \(\Acal\).

If the neural trial class is not constrained to lie in \(\Acal\) and
essential boundary conditions are imposed only through a finite-weight
penalty in the training objective, admissibility is not guaranteed. Although
the penalty may reduce essential-boundary violations, it does not in general
enforce the essential boundary condition exactly. Consequently,
\(u_\theta-u^\ast\) is not guaranteed to belong to \(V_0\), and the
residual--energy identity \eqref{eq:main_identity} does not apply directly
to the original constrained problem without an admissible correction or a
modified formulation. This contrasts with
\eqref{eq:hard_boundary_ansatz_homogeneous}--%
\eqref{eq:hard_boundary_ansatz_nonhomogeneous}, which enforce admissibility
by construction. Natural boundary conditions, such as Neumann data and
symmetric Robin terms, instead enter the variational formulation through
\(\ell(\cdot)\), or through \(a(\cdot,\cdot)\) and \(\ell(\cdot)\), rather
than defining additional affine constraints on \(u_\theta\).

\section{Residual--energy structure and conforming Riesz error assessment}
\label{sec:riesz_error_assessment}

Within the symmetric coercive setting of
Section~\ref{sec:setting}, this section develops the residual--energy
structure and its computable conforming realization for admissible neural
approximations. The analysis combines the quadratic Ritz identity,
Fr\'echet--Riesz representation, conforming Galerkin projection, and
nested-space estimation under saturation
\cite{Ciarlet1978,BrennerScott2008,BankWeiser1985,
AinsworthOden2000,Verfurth2013}.
The resulting lower monitor \(\eta_m\) and, under saturation, the
hierarchical bracket \eqref{eq:lower_upper_bracket} provide the analytical
basis for the checkpoint-selection and oracle guarantees developed in
Section~\ref{sec:checkpoint_selection}.

\subsection{Exact residual--energy identities}
\label{subsec:residual_energy_structure}

We first record the exact identities linking the Ritz functional, the
energy-dual residual, and the energy error. For admissible
\(u\in\mathcal A\), these identities identify
\(\|R(u)\|_{V_{0,a}'}\) as the residual quantity exactly calibrated with
\(\|u-u^\ast\|_a\).

\begin{lemma}[Energy-gap identity]
\label{lem:energy_gap}
For every \(u\in\Acal\),
\[
J(u)-J(u^\ast)
=
\tfrac12\|u-u^\ast\|_a^2.
\]
\end{lemma}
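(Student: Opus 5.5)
The plan is to expand the quadratic Ritz functional around the exact solution, working entirely in \(X\) where \(a\) is defined and symmetric. Fix \(u\in\Acal\) and set \(e:=u-u^\ast\). Since both \(u\) and \(u^\ast\) lie in the affine class \(\Acal=u_b+V_0\), their difference satisfies \(e\in V_0\). This admissibility step is the only place the affine structure enters. It is essential, because the variational equation \eqref{eq:abstract_variational_problem} may be tested only against directions in \(V_0\).

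Next I would write \(u=u^\ast+e\) and expand \(J(u)=\tfrac12 a(u^\ast+e,u^\ast+e)-\ell(u^\ast+e)\). Bilinearity and symmetry of \(a\) on \(X\times X\), together with linearity of \(\ell\), give
\[
J(u)=J(u^\ast)+\bigl(a(u^\ast,e)-\ell(e)\bigr)+\tfrac12 a(e,e).
\]
The cross term collects \(\tfrac12 a(u^\ast,e)+\tfrac12 a(e,u^\ast)\) into a single \(a(u^\ast,e)\), which is exactly where symmetry is used. Because \(e\in V_0\), I would then take \(v=e\) in \eqref{eq:abstract_variational_problem}, so that \(a(u^\ast,e)=\ell(e)\) and the middle term vanishes. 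What remains is \(J(u)-J(u^\ast)=\tfrac12 a(e,e)=\tfrac12\|u-u^\ast\|_a^2\), and this is well defined as an energy norm precisely because \(e\in V_0\).

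There is no genuine obstacle in this argument. The only point needing care is to note that the inner product \(a\) is a priori defined on \(X\), while the energy norm is defined only on \(V_0\). The membership \(e\in V_0\) justifies both testing the equation with \(e\) and interpreting \(a(e,e)\) as \(\|e\|_a^2\).
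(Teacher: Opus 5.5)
Your proof is correct and follows essentially the same route as the paper: set \(e:=u-u^\ast\in V_0\), expand \(J(u^\ast+e)\) using bilinearity and symmetry, and eliminate the cross term via \(\ell(e)=a(u^\ast,e)\) from \eqref{eq:abstract_variational_problem} tested with \(v=e\). Your added remark that \(e\in V_0\) is needed both to test the equation and to read \(a(e,e)\) as \(\|e\|_a^2\) is a correct and useful clarification.
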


\begin{proof}
Set \(e:=u-u^\ast\in V_0\). Using \(u=u^\ast+e\), symmetry of \(a\),
and \(\ell(e)=a(u^\ast,e)\), we obtain
\[
\begin{aligned}
J(u)-J(u^\ast)
&=
a(u^\ast,e)+\tfrac12 a(e,e)-\ell(e) \\
&=
\tfrac12 a(e,e)
=
\tfrac12\|e\|_a^2.
\end{aligned}
\]
\end{proof}

\begin{proposition}[Exact residual--energy identity]
\label{prop:residual_identity}
For every \(u\in\Acal\),
\begin{equation}
\|R(u)\|_{V_{0,a}'}
=
\|u-u^\ast\|_a.
\label{eq:residual_error_identity}
\end{equation}
Consequently,
\begin{equation}
J(u)-J(u^\ast)
=
\tfrac12\|R(u)\|_{V_{0,a}'}^2
=
\tfrac12\|u-u^\ast\|_a^2.
\label{eq:main_identity}
\end{equation}
\end{proposition}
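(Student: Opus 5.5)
The plan is to rewrite the residual of an admissible function as an energy inner product with the error, and then read off the dual norm by Cauchy--Schwarz with an explicit maximizer. Fix \(u\in\Acal\) and set \(e:=u-u^\ast\). Since \(u,u^\ast\in\Acal=u_b+V_0\), we have \(e\in V_0\).

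The key step is to subtract the defining relation \eqref{eq:abstract_variational_problem} from the residual. For every \(v\in V_0\) we have \(\ell(v)=a(u^\ast,v)\), so bilinearity gives
\[
\langle R(u),v\rangle=a(u^\ast,v)-a(u,v)=-a(e,v)=-(e,v)_a .
\]
Thus \(R(u)\) is the functional represented, in the energy inner product on \(V_0\), by \(-e\). This is just the Fr\'echet--Riesz identification of \(V_0\) with its dual.

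Next I would compute the dual norm. Because \((\cdot,\cdot)_a\) is an inner product on \(V_0\) (coercivity makes \(\|\cdot\|_a\) a norm there), Cauchy--Schwarz gives
\[
|\langle R(u),v\rangle|\le\|e\|_a\|v\|_a \quad\text{for all } v\in V_0,
\]
hence \(\|R(u)\|_{V_{0,a}'}\le\|e\|_a\). For the reverse inequality with \(e\neq0\), I would test with \(v=e\in V_0\). This gives \(|\langle R(u),e\rangle|/\|e\|_a=\|e\|_a\), so the supremum is attained and \eqref{eq:residual_error_identity} holds. If \(e=0\), then \(R(u)=0\) and both sides vanish.

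Finally, \eqref{eq:main_identity} follows by squaring \eqref{eq:residual_error_identity} and combining it with Lemma~\ref{lem:energy_gap}.

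I do not expect a real obstacle. The only point that needs care is that the test direction \(e\) must lie in \(V_0\), which is exactly the admissibility of \(u\). Without admissibility, \(e\) could not be used as a test function and the identity could fail.
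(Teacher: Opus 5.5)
Your proposal is correct and follows essentially the same argument as the paper: rewrite $\langle R(u),v\rangle=-a(e,v)$ using admissibility, bound the dual norm by Cauchy--Schwarz, attain it by testing with $\pm e$, and conclude \eqref{eq:main_identity} from Lemma~\ref{lem:energy_gap}. Testing with $v=e$ instead of the paper's $v=-e$ makes no difference, because the dual norm is defined with an absolute value.
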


\begin{proof}
Set \(e:=u-u^\ast\in V_0\). For every \(v\in V_0\),
\[
\langle R(u),v\rangle
=
\ell(v)-a(u,v)
=
-a(e,v).
\]
Hence, by Cauchy--Schwarz,
\[
\|R(u)\|_{V_{0,a}'}
\leq
\|e\|_a.
\]
If \(e\neq0\), choosing \(v=-e\) in the definition of the dual norm gives
the reverse inequality. The case \(e=0\) is immediate, and
\eqref{eq:main_identity} follows from Lemma~\ref{lem:energy_gap}.
\end{proof}

Thus, for admissible approximations, the energy-dual residual norm coincides
exactly with the energy error. Minimizing
\(\|R(u)\|_{V_{0,a}'}\) over admissible candidates is therefore equivalent
to minimizing the unknown energy error, independently of how the candidates
were generated. This identifies the error-calibrated quantity targeted by the
conforming Riesz reconstruction.

\begin{remark}[Ritz ordering and error quantification]
\label{rem:ritz_functional_role}
Identity~\eqref{eq:main_identity} also shows that exact Ritz values preserve
the ordering of the energy errors. They do not, however, quantify the energy
error without the unknown reference level \(J(u^\ast)\). The energy-dual
residual norm removes this reference dependence: it equals the energy error
and admits the computable conforming realization introduced in
Subsection~\ref{subsec:conforming_riesz_hierarchy}.
\end{remark}

\begin{remark}[Strong residuals as indirect proxies]
\label{rem:strong_residual}
Suppose that the residual of an admissible approximation \(u\) admits an
\(L^2\)-representation
\[
\langle R(u),v\rangle
=
-\int_\Omega r_s(u)\cdot v\,dx,
\qquad
\forall v\in V_0,
\]
and that, for some \(C_{\rm emb}>0\),
\[
\|v\|_{L^2(\Omega)}
\leq
C_{\rm emb}\|v\|_a,
\qquad
\forall v\in V_0.
\]
Then
\[
\|R(u)\|_{V_{0,a}'}
\leq
C_{\rm emb}\|r_s(u)\|_{L^2(\Omega)}.
\]
In standard elliptic settings, such a bound follows from the relevant
Poincar\'e--Friedrichs, Korn, or norm-equivalence estimates; see, e.g.,
\cite{Ciarlet1978,BrennerScott2008}. Thus, strong-form \(L^2\) residuals may
control the energy-dual residual norm through a problem-dependent constant,
but they remain indirect proxies rather than the exactly calibrated residual
quantity.
\end{remark}

\subsection{Variational consequences for admissible trial sets}
\label{subsec:variational_consequences}

We first record a direct consequence of the exact Ritz--energy identity for
admissible trial sets.

\begin{proposition}[Best energy approximation over arbitrary admissible trial sets]
\label{prop:best_approximation}
Let \(\Sigma\subset\Acal\) be any nonempty admissible trial set, not necessarily linear or convex, and assume that
\[
u_\Sigma\in\argmin_{w\in\Sigma}J(w).
\]
Then
\[
\|u_\Sigma-u^\ast\|_a
=
\inf_{w\in\Sigma}\|w-u^\ast\|_a.
\]
\end{proposition}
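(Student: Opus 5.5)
The plan is to transfer minimality through Lemma~\ref{lem:energy_gap}. Since \(\Sigma\subset\Acal\), the lemma applies to every element of the trial set. For each \(w\in\Sigma\) it gives
\[
J(w)=J(u^\ast)+\tfrac12\|w-u^\ast\|_a^2 .
\]
On \(\Sigma\), the map \(w\mapsto J(w)\) is therefore obtained from \(w\mapsto\|w-u^\ast\|_a\) by squaring, multiplying by \(\tfrac12\), and adding the constant \(J(u^\ast)\). Because \(t\mapsto J(u^\ast)+\tfrac12 t^2\) is strictly increasing on \([0,\infty)\), the two functionals induce the same ordering on \(\Sigma\). No linearity, convexity, or closedness of \(\Sigma\) is needed, since the argument compares values pointwise.

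The key steps run as follows. Let \(u_\Sigma\in\argmin_{w\in\Sigma}J(w)\). Then \(J(u_\Sigma)\le J(w)\) for every \(w\in\Sigma\). Substituting the identity above and cancelling \(J(u^\ast)\) gives
\[
\|u_\Sigma-u^\ast\|_a^2\le\|w-u^\ast\|_a^2 .
\]
Both quantities are nonnegative, so taking square roots yields \(\|u_\Sigma-u^\ast\|_a\le\|w-u^\ast\|_a\) for all \(w\in\Sigma\). Hence \(\|u_\Sigma-u^\ast\|_a\le\inf_{w\in\Sigma}\|w-u^\ast\|_a\). The reverse inequality holds trivially because \(u_\Sigma\in\Sigma\), so the two sides are equal and the infimum is attained at \(u_\Sigma\).

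There is no substantial obstacle here. The only point needing care is the admissibility hypothesis \(\Sigma\subset\Acal\): it guarantees \(w-u^\ast\in V_0\), and this is what makes the energy-gap identity valid. Without it, \(\|w-u^\ast\|_a\) might not even be a norm quantity, and the cross term \(a(u^\ast,e)-\ell(e)\) would not vanish. I would also remark that the same monotonicity shows the converse: any energy-best element of \(\Sigma\) is a Ritz minimizer over \(\Sigma\).
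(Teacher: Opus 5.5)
Your proposal is correct and follows essentially the same route as the paper: apply Lemma~\ref{lem:energy_gap} to every \(w\in\Sigma\) (valid because \(\Sigma\subset\Acal\) gives \(w-u^\ast\in V_0\)), then use that \(J(u^\ast)\) does not depend on \(w\) to conclude that minimizing \(J\) over \(\Sigma\) is the same as minimizing \(\|w-u^\ast\|_a\) over \(\Sigma\). Your explicit square-root step and the reverse inequality via \(u_\Sigma\in\Sigma\) simply spell out details the paper leaves implicit.
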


\begin{proof}
Since \(\Sigma\subset\Acal\), every \(w\in\Sigma\) satisfies
\(w-u^\ast\in V_0\). By Lemma~\ref{lem:energy_gap},
\[
J(w)-J(u^\ast)
=
\tfrac12\|w-u^\ast\|_a^2,
\qquad
w\in\Sigma.
\]
Since \(J(u^\ast)\) is independent of \(w\), minimizing \(J\) over
\(\Sigma\) is equivalent to minimizing \(\|w-u^\ast\|_a\) over
\(\Sigma\).
\end{proof}

Proposition~\ref{prop:best_approximation} identifies the exact continuous
optimization geometry underlying admissible neural Ritz approximation: any
exact minimizer of the Ritz functional over an admissible trial set is an
energy-best element of that set, namely, a closest element to the exact
solution in the energy norm.

This exact statement concerns the continuous functional \(J\). If \(J\) is
replaced by an empirical or quadrature-based approximation \(\widetilde J\),
the exact ordering is not automatically preserved. Indeed, for admissible
candidates \(u_i,u_j\),
\[
J(u_i)<J(u_j)
\quad\Longleftrightarrow\quad
\|u_i-u^\ast\|_a<\|u_j-u^\ast\|_a.
\]
If
\[
\bigl|\widetilde J(u_k)-J(u_k)\bigr|
\leq\varepsilon_k,
\qquad k\in\{i,j\},
\]
then this ordering is guaranteed to be preserved whenever
\[
J(u_j)-J(u_i)>\varepsilon_i+\varepsilon_j.
\]
When this separation condition is not available, the approximation bounds
alone do not guarantee preservation of the energy-error ordering.
Consequently, a certified connection between the practical training problem
and exact Ritz minimization requires quantitative control of both the
functional-approximation error and the optimization error.

This motivates a post-training assessment targeted directly at the energy
error. By Proposition~\ref{prop:residual_identity}, the energy-dual residual
norm provides an exactly calibrated target without access to \(u^\ast\);
the next subsection constructs its computable conforming approximation.

\begin{remark}[Relation to Ritz--Galerkin best approximation]
For \(\Sigma=u_b+W_h\), with \(W_h\subset V_0\) a conforming linear
subspace, Proposition~\ref{prop:best_approximation} reduces to the classical
Ritz--Galerkin best-approximation property
\cite{Ciarlet1978,BrennerScott2008}.
Here the same conclusion holds for any nonempty admissible
\(\Sigma\subset\mathcal A\), without linearity or convexity assumptions.
\end{remark}

\begin{proposition}[Near-minimizer control]
\label{prop:near_minimizer}
Let \(u\in\Acal\) and \(\varepsilon\geq0\). If
\[
J(u)-J(u^\ast)\leq\varepsilon,
\]
then
\[
\|u-u^\ast\|_a\leq\sqrt{2\varepsilon},
\qquad
\|R(u)\|_{V_{0,a}'}\leq\sqrt{2\varepsilon}.
\]
\end{proposition}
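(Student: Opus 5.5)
The plan is to read Proposition~\ref{prop:near_minimizer} directly off the exact identity \eqref{eq:main_identity} of Proposition~\ref{prop:residual_identity}. That identity already links the Ritz gap to both target quantities, so no estimates are needed beyond monotonicity of the square root.

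First, since \(u\in\Acal\), Lemma~\ref{lem:energy_gap} gives
\[
\tfrac12\|u-u^\ast\|_a^2=J(u)-J(u^\ast)\le\varepsilon .
\]
Multiplying by \(2\) yields \(\|u-u^\ast\|_a^2\le 2\varepsilon\). Both sides are nonnegative, because \(\varepsilon\ge0\) by assumption and the left-hand side is a squared norm. Taking square roots therefore gives \(\|u-u^\ast\|_a\le\sqrt{2\varepsilon}\).

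Second, for the residual bound, invoke \eqref{eq:residual_error_identity}: \(\|R(u)\|_{V_{0,a}'}=\|u-u^\ast\|_a\) for every admissible \(u\). The second inequality then follows immediately from the first.

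There is no genuine obstacle. The only point to check is that admissibility \(u\in\Acal\) is exactly the hypothesis that both earlier results require. It guarantees \(u-u^\ast\in V_0\), so that \(\|\cdot\|_a\) is a norm on the error and the dual-norm identity applies.
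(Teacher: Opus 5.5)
Your proof is correct and follows the same route as the paper. The paper's proof also gets the energy bound from Lemma~\ref{lem:energy_gap} and transfers it to the residual via the identity \eqref{eq:residual_error_identity} of Proposition~\ref{prop:residual_identity}, leaving implicit only the nonnegativity check before taking square roots that you spell out.
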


\begin{proof}
The result follows directly from Lemma~\ref{lem:energy_gap} and
Proposition~\ref{prop:residual_identity}.
\end{proof}

\begin{corollary}[Residual convergence and energy-error control]
\label{cor:residual_convergence}
Let \((u_n)_n\subset\mathcal A\). Then:
\begin{enumerate}[label=(\roman*),leftmargin=*]
\item if
\(\|R(u_n)\|_{V_{0,a}'}\to0\), then
\(u_n\to u^\ast\) in the energy norm;
\item if \(u\in\mathcal A\) and
\(\|R(u)\|_{V_{0,a}'}\leq\varepsilon\), then
\(\|u-u^\ast\|_a\leq\varepsilon\).
\end{enumerate}
\end{corollary}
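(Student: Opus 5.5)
Both parts are immediate consequences of the exact residual--energy identity \eqref{eq:residual_error_identity} of Proposition~\ref{prop:residual_identity}. It applies because every element of \(\mathcal A\) has its error in \(V_0\). No stability constants or compactness arguments enter, because the identity holds with constant one on all of \(\mathcal A\).

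For part (i), we fix \(n\) and note that \(u_n\in\mathcal A\). Applying the identity to \(u=u_n\) gives, for every \(n\),
\[
\|u_n-u^\ast\|_a=\|R(u_n)\|_{V_{0,a}'}.
\]
The right-hand side tends to zero by hypothesis, so the left-hand side does as well. Since \(u_n-u^\ast\in V_0\), where \(\|\cdot\|_a\) is a genuine norm, this is exactly convergence \(u_n\to u^\ast\) in the energy norm. If convergence in \(\|\cdot\|_X\) is also wanted, coercivity gives \(\|u_n-u^\ast\|_X\le\alpha_a^{-1/2}\|u_n-u^\ast\|_a\to0\). This is an optional remark and is not part of the statement.

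For part (ii), we apply the same identity to the single admissible \(u\):
\[
\|u-u^\ast\|_a=\|R(u)\|_{V_{0,a}'}\le\varepsilon.
\]

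The only point requiring care is the admissibility hypothesis, and it is the one place where the argument could fail if omitted. The identity relies on \(u-u^\ast\in V_0\), which is what allows the test function \(v=-e\) in the proof of Proposition~\ref{prop:residual_identity}. The statement assumes \((u_n)\subset\mathcal A\) and \(u\in\mathcal A\), so this condition holds and nothing further is needed.
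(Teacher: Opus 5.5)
Your proposal is correct and follows the paper's proof, which derives both parts directly from the identity \(\|R(u)\|_{V_{0,a}'}=\|u-u^\ast\|_a\) of Proposition~\ref{prop:residual_identity} applied to admissible elements. Your extra remark that coercivity also gives convergence in \(\|\cdot\|_X\) is valid as well.
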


\begin{proof}
Both statements follow directly from
Proposition~\ref{prop:residual_identity}.
\end{proof}

\subsection{Conforming Riesz reconstruction and hierarchical error bounds}
\label{subsec:conforming_riesz_hierarchy}

We now turn the exact residual--energy relation into a computable
post-training assessment. By
Corollary~\ref{cor:residual_convergence}, reducing
\(\|R(u_\theta)\|_{V_{0,a}'}\) drives the energy error to zero, while any
bound on this residual norm transfers directly to
\(\|u_\theta-u^\ast\|_a\). We therefore approximate the Riesz representative
of \(R(u_\theta)\) in conforming auxiliary subspaces of \(V_0\).
A single auxiliary space yields a lower monitor of the resolved energy-error
component, while a nested pair quantifies the component revealed by
enrichment and, under saturation, provides a conditional upper estimate.

\subsubsection{Continuous Riesz representative}

For an admissible neural Ritz approximation \(u_\theta\in\mathcal A\), the
residual \(R(u_\theta)\) belongs to \(V_0'\). By the Riesz representation
theorem in the Hilbert space \((V_0,\|\cdot\|_a)\), there exists a unique
\(z\in V_0\) such that
\begin{equation}
a(z,v)
=
\langle R(u_\theta),v\rangle,
\qquad
\forall v\in V_0.
\label{eq:continuous_riesz}
\end{equation}

\begin{proposition}[Continuous Riesz representative as the signed error]
\label{prop:continuous_riesz_error}
Let \(u_\theta\in\mathcal A\), and let \(z\in V_0\) solve
\eqref{eq:continuous_riesz}. Then
\[
z=u^\ast-u_\theta,
\]
and consequently
\begin{equation}
\|z\|_a
=
\|u_\theta-u^\ast\|_a
=
\|R(u_\theta)\|_{V_{0,a}'}.
\label{eq:continuous_riesz_error_norm}
\end{equation}
\end{proposition}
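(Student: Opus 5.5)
The plan is to show that \(u^\ast-u_\theta\) satisfies the defining relation \eqref{eq:continuous_riesz}, and then to conclude by uniqueness of the Riesz representative. First, admissibility gives \(u_\theta,u^\ast\in\mathcal A=u_b+V_0\), so the candidate \(w:=u^\ast-u_\theta\) lies in \(V_0\) and is a legitimate competitor in the representation problem. Next, for every \(v\in V_0\) I use the definition of the residual and the exact variational equation \eqref{eq:abstract_variational_problem}, namely \(\ell(v)=a(u^\ast,v)\), together with bilinearity of \(a\) on \(X\):
\[
\langle R(u_\theta),v\rangle
=
\ell(v)-a(u_\theta,v)
=
a(u^\ast,v)-a(u_\theta,v)
=
a(u^\ast-u_\theta,v).
\]
This is the same computation already used in the proof of Proposition~\ref{prop:residual_identity}, up to sign.

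Then \(z\) and \(w\) both solve \eqref{eq:continuous_riesz}, so \(a(z-w,v)=0\) for all \(v\in V_0\). Choosing \(v=z-w\in V_0\) and invoking coercivity of \(a\) on \(V_0\) gives \(\alpha_a\|z-w\|_X^2\le a(z-w,z-w)=0\), hence \(z=w=u^\ast-u_\theta\). Equivalently, one may simply cite uniqueness in the Riesz representation theorem on the Hilbert space \((V_0,\|\cdot\|_a)\), which is complete because the energy norm is equivalent to \(\|\cdot\|_X\) on the closed subspace \(V_0\).

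Finally, \(\|z\|_a=\|u^\ast-u_\theta\|_a=\|u_\theta-u^\ast\|_a\) by symmetry of the norm, and the second equality in \eqref{eq:continuous_riesz_error_norm} is exactly Proposition~\ref{prop:residual_identity} applied with \(u=u_\theta\). As an alternative check, the Riesz isometry \(\|z\|_a=\|R(u_\theta)\|_{V_{0,a}'}\) gives the same conclusion.

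No step is a genuine obstacle. The only point requiring care is the membership \(u^\ast-u_\theta\in V_0\): the uniqueness argument uses the test function \(v=z-w\), which must lie in \(V_0\), and this holds only because \(u_\theta\) is admissible (the point stressed in Subsection~\ref{subsec:admissible_neural_ritz}).
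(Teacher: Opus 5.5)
Your proposal is correct and follows essentially the same route as the paper: you check that \(u^\ast-u_\theta\in V_0\) satisfies \eqref{eq:continuous_riesz} via \(\ell(v)=a(u^\ast,v)\), conclude by uniqueness of the Riesz representative, and read off the norm identities from Proposition~\ref{prop:residual_identity}. Your coercivity-based uniqueness argument just spells out the uniqueness that the paper cites from the Riesz representation theorem.
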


\begin{proof}
Since \(u_\theta,u^\ast\in\mathcal A\),
\(u^\ast-u_\theta\in V_0\). For every \(v\in V_0\),
\[
a(z,v)
=
\langle R(u_\theta),v\rangle
=
\ell(v)-a(u_\theta,v)
=
a(u^\ast-u_\theta,v).
\]
Uniqueness of the Riesz representative gives
\(z=u^\ast-u_\theta\). The norm identities follow from
Proposition~\ref{prop:residual_identity}.
\end{proof}

Thus, the exact signed error is the continuous Riesz representative of the
residual. The computational problem is therefore to approximate this
representative in a conforming auxiliary space.

\subsubsection{Single-space conforming realization}

Let \(V_m\subset V_0\) be a finite-dimensional conforming auxiliary
subspace. The discrete Riesz representative \(z_m\in V_m\) is defined by
\begin{equation}
a(z_m,v_m)
=
\langle R(u_\theta),v_m\rangle,
\qquad
\forall v_m\in V_m.
\label{eq:discrete_riesz}
\end{equation}
We define
\begin{equation}
\eta_m(u_\theta):=\|z_m\|_a.
\label{eq:eta_definition}
\end{equation}

Because the continuous Riesz representative satisfies
\(z=u^\ast-u_\theta\), an exact auxiliary solve would recover the signed
primal error. In the present selection framework, however, \(z_m\) enters
only through the scalar monitor \(\eta_m(u_\theta)=\|z_m\|_a\).
Accordingly, archive selection does not require exact reconstruction of
\(z\); it requires only sufficient resolution to preserve the
oracle--non-oracle ordering. This distinction is quantified in
Corollary~\ref{cor:oracle_recovery}.

Since \(V_m\subset V_0\), problem~\eqref{eq:discrete_riesz} is the
conforming Galerkin approximation of the continuous Riesz problem
\eqref{eq:continuous_riesz}. Galerkin orthogonality identifies \(z_m\)
as the \(a\)-orthogonal projection of \(z\) onto \(V_m\)
\cite{Ciarlet1978,BrennerScott2008}. In the present residual--Riesz
setting, this projection yields the computable lower-monitor properties
stated below.

\begin{proposition}[Conforming projection and lower monitor]
\label{prop:projection_decomposition}
Let \(z\in V_0\) and \(z_m\in V_m\) solve
\eqref{eq:continuous_riesz} and \eqref{eq:discrete_riesz},
respectively. Then
\begin{equation}
a(z-z_m,v_m)=0,
\qquad
\forall v_m\in V_m,
\label{eq:riesz_galerkin_orthogonality}
\end{equation}
so that \(z_m\) is the best approximation of \(z\) in \(V_m\):
\[
\|z-z_m\|_a
=
\inf_{v_m\in V_m}\|z-v_m\|_a.
\]
Moreover,
\begin{equation}
\|u_\theta-u^\ast\|_a^2
=
\eta_m(u_\theta)^2
+
\|z-z_m\|_a^2,
\label{eq:pythagorean}
\end{equation}
and therefore
\begin{equation}
\eta_m(u_\theta)
\leq
\|u_\theta-u^\ast\|_a
=
\|R(u_\theta)\|_{V_{0,a}'}.
\label{eq:lower_bound_monitor}
\end{equation}
\end{proposition}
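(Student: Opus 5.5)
The argument is a direct Galerkin-projection computation in the energy Hilbert space \((V_0,(\cdot,\cdot)_a)\), combined with the identification of \(z\) from Proposition~\ref{prop:continuous_riesz_error}. I would proceed in four short steps.

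\textbf{Step 1: Galerkin orthogonality.} Since \(V_m\subset V_0\), every \(v_m\in V_m\) is an admissible test function in \eqref{eq:continuous_riesz}. Subtracting \eqref{eq:discrete_riesz} from \eqref{eq:continuous_riesz} tested with \(v_m\) gives \(a(z-z_m,v_m)=\langle R(u_\theta),v_m\rangle-\langle R(u_\theta),v_m\rangle=0\). This is exactly \eqref{eq:riesz_galerkin_orthogonality}.

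\textbf{Step 2: best approximation.} Fix any \(v_m\in V_m\) and write \(z-v_m=(z-z_m)+(z_m-v_m)\). The second summand lies in \(V_m\), so Step 1 makes the cross term vanish. Hence
\[
\|z-v_m\|_a^2=\|z-z_m\|_a^2+\|z_m-v_m\|_a^2\ge\|z-z_m\|_a^2 .
\]
Equality is attained at \(v_m=z_m\), so the infimum is achieved there.

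\textbf{Step 3: Pythagorean identity.} Decompose \(z=z_m+(z-z_m)\). Testing Step 1 with \(v_m=z_m\) gives \(a(z-z_m,z_m)=0\), so
\[
\|z\|_a^2=\|z_m\|_a^2+\|z-z_m\|_a^2 .
\]
Now substitute \(\|z\|_a=\|u_\theta-u^\ast\|_a\) from \eqref{eq:continuous_riesz_error_norm} and \(\|z_m\|_a=\eta_m(u_\theta)\) from \eqref{eq:eta_definition}. This yields \eqref{eq:pythagorean}.

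\textbf{Step 4: lower bound.} Drop the nonnegative term \(\|z-z_m\|_a^2\) in \eqref{eq:pythagorean} and take square roots. This gives \(\eta_m(u_\theta)\le\|u_\theta-u^\ast\|_a\). The remaining equality with \(\|R(u_\theta)\|_{V_{0,a}'}\) is Proposition~\ref{prop:residual_identity}, which applies because \(u_\theta\in\mathcal A\).

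No step is a genuine obstacle. The only points needing care are bookkeeping ones. First, the conformity \(V_m\subset V_0\) is what licenses testing the continuous problem with \(v_m\); without it, Step 1 fails. Second, \(z_m\) must exist uniquely. This holds because \(a\) restricted to \(V_m\) is coercive, so the finite-dimensional system \eqref{eq:discrete_riesz} is uniquely solvable and \(\eta_m\) is well defined. I would state this existence briefly at the outset of the proof.
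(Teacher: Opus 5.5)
Your proposal is correct and follows essentially the same route as the paper: subtract the two Riesz problems to get Galerkin orthogonality, read off the $a$-orthogonal (Pythagorean) splitting $\|z\|_a^2=\|z_m\|_a^2+\|z-z_m\|_a^2$, and substitute $z=u^\ast-u_\theta$ from Proposition~\ref{prop:continuous_riesz_error}. The only additions are that you spell out the best-approximation computation and the unique solvability of \eqref{eq:discrete_riesz}, which the paper leaves implicit.
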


\begin{proof}
Subtracting \eqref{eq:discrete_riesz} from
\eqref{eq:continuous_riesz} gives
\eqref{eq:riesz_galerkin_orthogonality}. Hence \(z_m\) is the
\(a\)-orthogonal projection of \(z\) onto \(V_m\), so
\[
\|z\|_a^2
=
\|z_m\|_a^2+\|z-z_m\|_a^2.
\]
Using Proposition~\ref{prop:continuous_riesz_error} and
\(\eta_m(u_\theta)=\|z_m\|_a\) yields
\eqref{eq:pythagorean} and \eqref{eq:lower_bound_monitor}.
\end{proof}

Thus, solving the finite-dimensional auxiliary problem
\eqref{eq:discrete_riesz} yields \(\eta_m(u_\theta)\) as a computable
conforming lower monitor for the energy error, while \(\|z-z_m\|_a\) is the
component unresolved by \(V_m\). This decomposition motivates nested
auxiliary spaces, which quantify the additional component revealed by
enrichment.

\begin{corollary}[Monotone recovery under conforming refinement]
\label{cor:conforming_refinement}
Let \((V_m)_{m\geq1}\) be a nested sequence of finite-dimensional conforming
subspaces such that
\[
V_m\subset V_{m+1}\subset V_0,
\qquad
\overline{\bigcup_{m\geq1}V_m}^{\|\cdot\|_a}=V_0.
\]
For a fixed \(u_\theta\in\mathcal A\), let \(z_m\in V_m\) solve
\eqref{eq:discrete_riesz}. Then \(\eta_m(u_\theta)=\|z_m\|_a\) is
nondecreasing and
\begin{equation}
\eta_m(u_\theta)
\uparrow
\|u_\theta-u^\ast\|_a
=
\|R(u_\theta)\|_{V_{0,a}'}.
\label{eq:monotone_recovery}
\end{equation}
\end{corollary}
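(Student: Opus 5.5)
The plan is to view \(z_m\) as the \(a\)-orthogonal projection of the fixed continuous representative \(z=u^\ast-u_\theta\) onto \(V_m\). Both monotonicity and convergence then reduce to standard properties of orthogonal projections onto a nested family of subspaces with dense union. Throughout, let \(P_m\) denote the \(a\)-orthogonal projector of \(V_0\) onto \(V_m\). By Proposition~\ref{prop:projection_decomposition}, \(z_m=P_m z\) with \(z\) independent of \(m\), and Proposition~\ref{prop:continuous_riesz_error} gives \(\|z\|_a=\|u_\theta-u^\ast\|_a=\|R(u_\theta)\|_{V_{0,a}'}\).

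For monotonicity, I will use the nesting \(V_m\subset V_{m+1}\). It implies \(P_mP_{m+1}=P_m\), so \(z_m\) is the \(a\)-orthogonal projection of \(z_{m+1}\) onto \(V_m\). Pythagoras in \(V_{m+1}\) then gives \(\|z_{m+1}\|_a^2=\|z_m\|_a^2+\|z_{m+1}-z_m\|_a^2\ge\|z_m\|_a^2\). An alternative route, which avoids the projector identity, is to use the best-approximation property: since \(V_m\subset V_{m+1}\), we have \(\|z-z_{m+1}\|_a\le\|z-z_m\|_a\), and combined with the Pythagorean identity \eqref{eq:pythagorean}, \(\eta_m^2=\|z\|_a^2-\|z-z_m\|_a^2\), this yields \(\eta_m\le\eta_{m+1}\). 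Together with the bound \eqref{eq:lower_bound_monitor}, the sequence \(\eta_m\) is nondecreasing and bounded above by \(\|z\|_a\).

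For convergence, the only real step is density. Fix \(\varepsilon>0\). By density of \(\bigcup_m V_m\) in \((V_0,\|\cdot\|_a)\), there are an index \(M\) and an element \(w\in V_M\) with \(\|z-w\|_a<\varepsilon\). For every \(m\ge M\), nesting gives \(w\in V_m\), and best approximation then gives \(\|z-z_m\|_a\le\|z-w\|_a<\varepsilon\). Hence \(\|z-z_m\|_a\to0\), and \eqref{eq:pythagorean} gives \(\eta_m^2=\|z\|_a^2-\|z-z_m\|_a^2\to\|z\|_a^2\). Combining this with monotonicity yields \(\eta_m\uparrow\|u_\theta-u^\ast\|_a=\|R(u_\theta)\|_{V_{0,a}'}\).

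I do not expect any genuine obstacle. The main care point is that density is stated in the energy norm; this is exactly the norm in which best approximation holds, so no norm-equivalence argument (via \(\alpha_a,\beta_a\)) is needed. It is also essential that \(z\) does not depend on \(m\), which Proposition~\ref{prop:continuous_riesz_error} guarantees.
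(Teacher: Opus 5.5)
Your proposal is correct and follows essentially the same route as the paper. The paper also writes \(\eta_m^2=\|z\|_a^2-\|z-z_m\|_a^2\) via Proposition~\ref{prop:projection_decomposition}, gets monotonicity from nestedness through the nonincreasing best-approximation error (your ``alternative route''), and gets convergence from density plus best approximation; your explicit \(\varepsilon\)-argument for \(\|z-z_m\|_a\to0\) spells out the step the paper states in one line.
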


\begin{proof}
By Proposition~\ref{prop:projection_decomposition},
\[
\eta_m(u_\theta)^2
=
\|z\|_a^2-\|z-z_m\|_a^2.
\]
Nestedness makes \(\|z-z_m\|_a\) nonincreasing, while density and the
best-approximation property imply \(\|z-z_m\|_a\to0\).
Hence \(\eta_m(u_\theta)\) increases to \(\|z\|_a\), yielding
\eqref{eq:monotone_recovery}.
\end{proof}

Thus, \(\eta_m(u_\theta)\) recovers the exact energy error monotonically
from below under conforming refinement. Over a finite checkpoint archive,
this property provides the basis for the reference-free checkpoint-selection
rule and oracle guarantees developed in
Section~\ref{sec:checkpoint_selection}.

A finite enrichment \(V_m\subset V_M\) is considered next to quantify the
additional error component resolved beyond \(V_m\).

\subsubsection{Nested-space enrichment and hierarchical gap}

Let \(V_m\subset V_M\subset V_0\) be two finite-dimensional conforming auxiliary subspaces.
In addition to the reconstruction \(z_m\in V_m\), define the enriched Riesz reconstruction \(z_M\in V_M\) by
\begin{equation}
a(z_M,v_M)
=
\langle R(u_\theta),v_M\rangle,
\qquad
\forall v_M\in V_M.
\label{eq:discrete_riesz_large}
\end{equation}
We set
\[
\eta_M(u_\theta):=\|z_M\|_a
\]
and define the hierarchical gap
\begin{equation}
\delta_{m,M}(u_\theta)
:=
\left(
\eta_M(u_\theta)^2-\eta_m(u_\theta)^2
\right)^{1/2}.
\label{eq:hierarchical_gap}
\end{equation}

\begin{proposition}[Nested-space orthogonal decomposition]
\label{prop:two_space_decomposition}
Let \(u_\theta\in\mathcal A\), let \(z\in V_0\) solve
\eqref{eq:continuous_riesz}, and let \(z_m\in V_m\) and \(z_M\in V_M\) solve
\eqref{eq:discrete_riesz} and \eqref{eq:discrete_riesz_large}, respectively.
If \(V_m\subset V_M\subset V_0\), then
\[
\eta_m(u_\theta)^2
\leq
\eta_M(u_\theta)^2
\leq
\|u_\theta-u^\ast\|_a^2.
\]
Moreover,
\begin{equation}
\delta_{m,M}(u_\theta)^2
=
\eta_M(u_\theta)^2-\eta_m(u_\theta)^2
=
\|z_M-z_m\|_a^2,
\label{eq:two_space_gap_identity}
\end{equation}
and the exact energy error admits the orthogonal decomposition
\begin{equation}
\|u_\theta-u^\ast\|_a^2
=
\eta_m(u_\theta)^2
+
\delta_{m,M}(u_\theta)^2
+
\|z-z_M\|_a^2.
\label{eq:three_term_decomposition}
\end{equation}
\end{proposition}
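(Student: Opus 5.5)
The plan is to reduce everything to Galerkin orthogonality in the Hilbert space \((V_0,(\cdot,\cdot)_a)\) and to the key observation that \(z_m\) is also the \(a\)-orthogonal projection of \(z_M\) onto \(V_m\).

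\emph{Step 1: nested projection.} Apply Proposition~\ref{prop:projection_decomposition} once with \(V_m\) and once with \(V_M\). This gives
\[
a(z-z_m,v_m)=0 \quad \forall v_m\in V_m,
\qquad
a(z-z_M,v_M)=0 \quad \forall v_M\in V_M.
\]
Since \(V_m\subset V_M\), we may take \(v_M=v_m\in V_m\) in the second relation. Subtracting the two relations then yields \(a(z_M-z_m,v_m)=0\) for all \(v_m\in V_m\). Equivalently, one can subtract \eqref{eq:discrete_riesz} from \eqref{eq:discrete_riesz_large} tested on \(V_m\). Hence \(z_m\) is the \(a\)-orthogonal projection of \(z_M\) onto \(V_m\).

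\emph{Step 2: the gap identity.} Because \(z_m\in V_m\), Step 1 gives \(a(z_M-z_m,z_m)=0\). Therefore
\[
\eta_M(u_\theta)^2=\|z_m\|_a^2+\|z_M-z_m\|_a^2,
\]
which is exactly \eqref{eq:two_space_gap_identity}. In particular \(\eta_m(u_\theta)^2\le \eta_M(u_\theta)^2\), so the square root in \eqref{eq:hierarchical_gap} is well defined.

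\emph{Step 3: the upper inequality.} Applying \eqref{eq:pythagorean} with \(V_M\) in place of \(V_m\) gives
\[
\|u_\theta-u^\ast\|_a^2=\eta_M(u_\theta)^2+\|z-z_M\|_a^2 .
\]
This immediately yields \(\eta_M(u_\theta)^2\le\|u_\theta-u^\ast\|_a^2\).

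\emph{Step 4: the three-term decomposition.} Substitute the Step 2 expression for \(\eta_M(u_\theta)^2\) into the Step 3 identity. This produces \eqref{eq:three_term_decomposition}.

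The whole argument is routine. The only point requiring care is Step 1: the orthogonality must be tested only on the smaller space \(V_m\), and the inclusion \(V_m\subset V_M\) is used precisely there.
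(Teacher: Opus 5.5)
Your proposal is correct and follows essentially the same route as the paper: subtracting the two discrete Riesz problems on \(V_m\) gives \(a(z_M-z_m,v_m)=0\), Pythagoras then splits \(\|z_M\|_a^2\), and Galerkin orthogonality at level \(M\) together with \(z=u^\ast-u_\theta\) yields the three-term decomposition. Your remark that the nonnegativity of \(\eta_M(u_\theta)^2-\eta_m(u_\theta)^2\) makes \(\delta_{m,M}(u_\theta)\) well defined is a small addition that the paper leaves implicit.
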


\begin{proof}
Since \(V_m\subset V_M\), subtracting the two discrete Riesz problems gives
\[
a(z_M-z_m,v_m)=0,
\qquad
\forall v_m\in V_m.
\]
Hence
\[
\|z_M\|_a^2
=
\|z_m\|_a^2+\|z_M-z_m\|_a^2,
\]
which yields \eqref{eq:two_space_gap_identity} and
\(\eta_m(u_\theta)\leq\eta_M(u_\theta)\). Moreover, Galerkin
orthogonality at level \(M\) gives
\[
\|z\|_a^2
=
\|z_M\|_a^2+\|z-z_M\|_a^2.
\]
Combining the two identities and using \(z=u^\ast-u_\theta\) yields
\eqref{eq:three_term_decomposition} and the remaining inequality.
\end{proof}

\begin{remark}[Hierarchical interpretation]
The decomposition in
Proposition~\ref{prop:two_space_decomposition} is the nested-projection
geometry underlying hierarchical a posteriori estimation
\cite{BankWeiser1985,AinsworthOden2000,Verfurth1996}.
Here the projected quantity is the signed energy error:
\(\eta_m(u_\theta)\) measures the component resolved in \(V_m\),
\(\delta_{m,M}(u_\theta)\) the additional component revealed by enrichment
to \(V_M\), and \(\|z-z_M\|_a\) the remaining unresolved component.

Because the auxiliary hierarchy is independent of the network
parametrization and training procedure, the same construction can be applied
post-training to a finite checkpoint archive. Under saturation,
\(\eta_m(u_\theta)\) and \(\delta_{m,M}(u_\theta)\) then provide the
lower--upper information used for the finite-level selection guarantees of
Section~\ref{sec:checkpoint_selection}.
\end{remark}

To control the unresolved component \(\|z-z_M\|_a\) through the computable
gap \(\delta_{m,M}(u_\theta)\), nestedness alone is insufficient. We therefore
introduce the standard saturation condition from hierarchical a posteriori
estimation
\cite{BankWeiser1985,AinsworthOden2000,Verfurth1996}.
It enters only the conditional upper estimate; the lower monitor and its
monotone recovery remain unconditional.

\begin{assumption}[Saturation]
\label{ass:saturation}
Let \(u_\theta\in\mathcal A\) be fixed, with continuous Riesz representative
\(z=u^\ast-u_\theta\). A nested pair
\(V_m\subset V_M\subset V_0\) satisfies saturation for \(u_\theta\), with
factor \(q\in(0,1)\), if
\begin{equation}
\|z-z_M\|_a
\leq
q\,\|z-z_m\|_a.
\label{eq:saturation}
\end{equation}
\end{assumption}

\begin{proposition}[Conditional upper estimate under saturation]
\label{prop:saturated_upper_estimate}
Let \(u_\theta\in\mathcal A\), and suppose that the nested pair
\(V_m\subset V_M\subset V_0\) satisfies
Assumption~\ref{ass:saturation} for \(u_\theta\), with factor
\(q\in(0,1)\). Then
\begin{equation}
\|u_\theta-u^\ast\|_a^2
\leq
\eta_m(u_\theta)^2
+
\frac{\delta_{m,M}(u_\theta)^2}{1-q^2}.
\label{eq:saturated_upper_bound_squared}
\end{equation}
Accordingly, defining
\begin{equation}
\mathcal U_{m,M,q}(u_\theta)
:=
\left(
\eta_m(u_\theta)^2
+
\frac{\delta_{m,M}(u_\theta)^2}{1-q^2}
\right)^{1/2},
\label{eq:saturated_upper_estimator_definition}
\end{equation}
one has
\begin{equation}
\|u_\theta-u^\ast\|_a
\leq
\mathcal U_{m,M,q}(u_\theta).
\label{eq:saturated_upper_estimator}
\end{equation}
\end{proposition}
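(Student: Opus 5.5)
The plan is to combine the two orthogonal identities from Proposition~\ref{prop:projection_decomposition} and Proposition~\ref{prop:two_space_decomposition} with the saturation inequality~\eqref{eq:saturation}. All unresolved quantities will be expressed in terms of the single unknown \(\|z-z_m\|_a\), and then eliminated.

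First, I will write the level-\(M\) Galerkin orthogonality in the form
\[
\|z-z_m\|_a^2=\|z_M-z_m\|_a^2+\|z-z_M\|_a^2.
\]
This holds because \(z_M-z_m\in V_M\) and \(a(z-z_M,v_M)=0\) for all \(v_M\in V_M\). By \eqref{eq:two_space_gap_identity}, the first term equals \(\delta_{m,M}(u_\theta)^2\). Inserting saturation, \(\|z-z_M\|_a^2\le q^2\|z-z_m\|_a^2\), gives
\[
(1-q^2)\|z-z_m\|_a^2\le\delta_{m,M}(u_\theta)^2,
\]
and hence \(\|z-z_m\|_a^2\le\delta_{m,M}(u_\theta)^2/(1-q^2)\), since \(q<1\).

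Second, I will substitute this bound into the Pythagorean identity~\eqref{eq:pythagorean}, \(\|u_\theta-u^\ast\|_a^2=\eta_m(u_\theta)^2+\|z-z_m\|_a^2\). This yields \eqref{eq:saturated_upper_bound_squared} directly. Taking square roots, with both sides nonnegative, gives \eqref{eq:saturated_upper_estimator} with \(\mathcal U_{m,M,q}\) as defined in \eqref{eq:saturated_upper_estimator_definition}.

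There is no real obstacle here. The only point that needs care is the orthogonal splitting of \(z-z_m\) at level \(M\), which relies on nestedness \(V_m\subset V_M\) so that \(z_M-z_m\in V_M\). An alternative route goes through \eqref{eq:three_term_decomposition} and bounds \(\|z-z_M\|_a^2\le q^2(\delta_{m,M}(u_\theta)^2+\|z-z_M\|_a^2)\). Solving this for \(\|z-z_M\|_a^2\) gives the same constant, since \(1+q^2/(1-q^2)=1/(1-q^2)\).
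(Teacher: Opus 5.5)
Your proposal is correct and follows essentially the same route as the paper: it uses the identity \(\|z-z_m\|_a^2=\delta_{m,M}(u_\theta)^2+\|z-z_M\|_a^2\), which you derive directly from level-\(M\) Galerkin orthogonality and the paper reads off Proposition~\ref{prop:two_space_decomposition}. It then applies saturation to bound \(\|z-z_m\|_a^2\) by \(\delta_{m,M}(u_\theta)^2/(1-q^2)\) and substitutes this into the Pythagorean identity~\eqref{eq:pythagorean}.
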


\begin{proof}
By Proposition~\ref{prop:two_space_decomposition},
\[
\delta_{m,M}(u_\theta)^2
=
\|z-z_m\|_a^2-\|z-z_M\|_a^2.
\]
Assumption~\ref{ass:saturation} therefore gives
\[
\delta_{m,M}(u_\theta)^2
\geq
(1-q^2)\|z-z_m\|_a^2,
\]
and hence
\[
\|z-z_m\|_a^2
\leq
\frac{\delta_{m,M}(u_\theta)^2}{1-q^2}.
\]
Combining this with
\[
\|u_\theta-u^\ast\|_a^2
=
\eta_m(u_\theta)^2+\|z-z_m\|_a^2
\]
yields \eqref{eq:saturated_upper_bound_squared} and
\eqref{eq:saturated_upper_estimator}.
\end{proof}

Combining Proposition~\ref{prop:projection_decomposition} with
Proposition~\ref{prop:saturated_upper_estimate} yields the
lower--upper bracket
\begin{equation}
\eta_m(u_\theta)
\leq
\|u_\theta-u^\ast\|_a
\leq
\mathcal U_{m,M,q}(u_\theta).
\label{eq:lower_upper_bracket}
\end{equation}
The lower bound is unconditional, whereas the upper bound at a given
checkpoint requires Assumption~\ref{ass:saturation} for that checkpoint and
the prescribed factor \(q\). In the finite-archive guarantees developed in
Section~\ref{sec:checkpoint_selection}, saturation is required only where the
corresponding upper estimate is invoked.

\begin{remark}[Dependence on enrichment and eventual saturation]
\label{rem:eventual_saturation}
Nestedness alone does not guarantee saturation for an arbitrary finite pair.
For fixed \(V_m\), however, let \((V_M)_{M\geq m}\) be a nested enrichment
satisfying
\[
\overline{\bigcup_{M\geq m}V_M}^{\,\|\cdot\|_a}=V_0.
\]
For each fixed checkpoint \(k\),
\[
\|z_k-z_{M,k}\|_a\longrightarrow0,
\]
while \(\|z_k-z_{m,k}\|_a\) is fixed. Hence, for every prescribed
\(q\in(0,1)\), saturation holds for all sufficiently large \(M\); if the
coarse projection error vanishes, it holds trivially for every \(M\geq m\).
For a finite checkpoint archive, the corresponding thresholds can be
maximized to obtain a common enrichment level. This argument does not provide
a computable threshold, so the upper estimate for a prescribed finite pair
remains conditional.
\end{remark}

\paragraph{Operational interpretation of the hierarchical bracket}

Under the saturation condition, the bracket implies
\[
0
\leq
\|u_\theta-u^\ast\|_a-\eta_m(u_\theta)
\leq
\mathcal U_{m,M,q}(u_\theta)-\eta_m(u_\theta).
\]
Thus, its width bounds the possible underestimation of the energy error by
the lower monitor. A wide bracket indicates that further auxiliary refinement
may be required. Applied over a finite checkpoint archive, these intervals
provide the finite-level information used in
Section~\ref{sec:checkpoint_selection}.

\begin{algorithm*}[t]
\caption{Evaluation of the conforming Riesz monitor and hierarchical estimator}
\label{alg:riesz_monitor}
\begin{algorithmic}[1]
\Require Admissible approximation \(u_\theta\in\mathcal A\);
preassembled auxiliary matrices \(G_m\) and \(G_M\);
data for assembling the residual vectors on \(V_m\) and \(V_M\);
optional prescribed factor \(q\in(0,1)\).
\Ensure Lower monitor \(\eta_m\), enriched monitor \(\eta_M\),
hierarchical gap \(\delta_{m,M}\), and, when \(q\) is prescribed,
conditional upper quantity \(\mathcal U_{m,M,q}\).

\For{\(h\in\{m,M\}\)}
    \State Assemble the checkpoint-dependent residual vector \(r_h\).
    \State Solve \(G_hc^{(h)}=r_h\).
    \State Compute
    \[
    \eta_h(u_\theta)
    =
    \bigl(r_h^\top c^{(h)}\bigr)^{1/2}.
    \]
\EndFor

\State Compute
\[
\delta_{m,M}(u_\theta)
=
\bigl(\eta_M(u_\theta)^2-\eta_m(u_\theta)^2\bigr)^{1/2}.
\]

\If{\(q\in(0,1)\) is prescribed}
    \State Compute
    \[
    \mathcal U_{m,M,q}(u_\theta)
    =
    \left(
    \eta_m(u_\theta)^2
    +
    \frac{\delta_{m,M}(u_\theta)^2}{1-q^2}
    \right)^{1/2}.
    \]
\EndIf

\State \Return
\(\eta_m(u_\theta)\), \(\eta_M(u_\theta)\),
\(\delta_{m,M}(u_\theta)\), and, when computed,
\(\mathcal U_{m,M,q}(u_\theta)\).
\end{algorithmic}
\end{algorithm*}

\subsubsection{Matrix realization}

For \(h\in\{m,M\}\), let
\(\{\phi_i^{(h)}\}_{i=1}^{N_h}\) be a basis of \(V_h\), and write
\[
z_h
=
\sum_{j=1}^{N_h} c_j^{(h)}\phi_j^{(h)}.
\]
Define the auxiliary stiffness matrix
\(G_h\in\mathbb R^{N_h\times N_h}\) and the checkpoint-dependent residual
vector \(r_h\in\mathbb R^{N_h}\) by
\[
(G_h)_{ij}
=
a\bigl(\phi_j^{(h)},\phi_i^{(h)}\bigr),
\qquad
(r_h)_i
=
\ell\bigl(\phi_i^{(h)}\bigr)
-
a\bigl(u_\theta,\phi_i^{(h)}\bigr).
\]
The discrete Riesz problem is then equivalent to
\begin{equation}
G_hc^{(h)}=r_h.
\label{eq:matrix_system_h}
\end{equation}
By symmetry and coercivity of \(a\), \(G_h\) is symmetric positive
definite. Once \eqref{eq:matrix_system_h} has been solved, the monitor is
obtained from
\begin{equation}
\eta_h(u_\theta)
=
\|z_h\|_a
=
\left((c^{(h)})^\top G_hc^{(h)}\right)^{1/2}
=
\left(r_h^\top c^{(h)}\right)^{1/2}.
\label{eq:matrix_monitor}
\end{equation}

The matrices \(G_m\) and \(G_M\) depend only on the bilinear form and the
auxiliary spaces and may therefore be preassembled and factorized once. For
each checkpoint, only the checkpoint-dependent residual vectors are assembled
and the corresponding linear systems solved. Applying the construction at
\(h=m\) and \(h=M\) yields \(\eta_m(u_\theta)\) and
\(\eta_M(u_\theta)\), from which \(\delta_{m,M}(u_\theta)\) and, for a
prescribed \(q\in(0,1)\), \(\mathcal U_{m,M,q}(u_\theta)\) follow as
summarized in Algorithm~\ref{alg:riesz_monitor}.

The identities above refer to the exact auxiliary variational forms; their
numerical realization under quadrature is assessed separately in
Section~\ref{sec:numerical} and~\ref{app:discrete_quadrature_audit}.

\section{Reference-free checkpoint selection and logged-oracle guarantees}
\label{sec:checkpoint_selection}

This section develops the reference-free selection rule by minimizing the
conforming Riesz monitor over a finite archive. We identify the
finite-resolution ordering obstruction, show that conforming refinement
removes it uniformly and yields eventual logged-oracle selection, and derive,
under saturation, a computable near-oracle bound and a unique-oracle
certificate.

\subsection{Monitor minimization and finite-resolution ordering obstruction}
\label{subsec:monitor_minimization_obstruction}

Let \(\mathcal K_{\mathrm{log}}\) denote the finite set of recorded
checkpoints and, for brevity, write \(u_k:=u_{\theta_k}\). Define
\[
E_k:=\|u^\ast-u_k\|_a,
\qquad
k\in\mathcal K_{\mathrm{log}}.
\]
By Proposition~\ref{prop:projection_decomposition} and
Corollary~\ref{cor:conforming_refinement}, for every fixed
\(k\in\mathcal K_{\mathrm{log}}\),
\begin{equation}
\eta_{m,k}\leq E_k,
\qquad
\eta_{m,k}\longrightarrow E_k
\quad\text{as }m\to\infty.
\label{eq:checkpointwise_monitor_recovery}
\end{equation}

These checkpointwise properties motivate the computable selection rule
\[
k_m^\ast
\in
\operatorname*{arg\,min}_{k\in\mathcal K_{\mathrm{log}}}
\eta_{m,k}.
\]
The corresponding consistency question is whether, for all sufficiently fine
auxiliary resolutions,
\[
\operatorname*{arg\,min}_{k\in\mathcal K_{\mathrm{log}}}
\eta_{m,k}
\subseteq
\operatorname*{arg\,min}_{k\in\mathcal K_{\mathrm{log}}}
E_k.
\]

At a fixed auxiliary level, checkpointwise recovery alone does not guarantee
this ordering. Let \(i\) be an energy-oracle checkpoint and \(j\) a
non-oracle checkpoint, so that \(E_i<E_j\). Although both monitors are valid
lower bounds, one may still have
\[
\eta_{m,j}<\eta_{m,i},
\]
because the unresolved error component may differ across checkpoints.

To characterize this obstruction, define
\[
z_k:=u^\ast-u_k,
\qquad
\rho_{m,k}:=\|(I-P_m)z_k\|_a,
\]
where \(P_m\) denotes the \(a\)-orthogonal projection onto \(V_m\).
Then
\[
E_k^2
=
\eta_{m,k}^2+\rho_{m,k}^2.
\]

\begin{proposition}[Finite-resolution ordering obstruction]
\label{prop:finite-resolution-rank-reversal}
For any two checkpoints \(i,j\in\mathcal K_{\mathrm{log}}\),
\[
\eta_{m,j}^2-\eta_{m,i}^2
=
\bigl(E_j^2-E_i^2\bigr)
-
\bigl(\rho_{m,j}^2-\rho_{m,i}^2\bigr).
\]
Consequently, if \(E_i<E_j\), then the monitor ordering is reversed,
\[
\eta_{m,i}>\eta_{m,j},
\]
if and only if
\[
\rho_{m,j}^2-\rho_{m,i}^2
>
E_j^2-E_i^2.
\]

Moreover, if
\[
\{0\}\subsetneq V_m\subsetneq V_0,
\]
then such a reversal is compatible with the present assumptions: there
exist admissible approximations \(u_i,u_j\in\mathcal A\) such that
\[
E_i<E_j
\qquad\text{and}\qquad
0<\eta_{m,j}<\eta_{m,i}.
\]
\end{proposition}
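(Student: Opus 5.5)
The identity is immediate from the Pythagorean decomposition \eqref{eq:pythagorean} of Proposition~\ref{prop:projection_decomposition}, which reads \(E_k^2=\eta_{m,k}^2+\rho_{m,k}^2\) for each checkpoint, since \(z_{m,k}=P_mz_k\) by Galerkin orthogonality. Subtracting the instances \(k=i\) and \(k=j\) and rearranging gives the stated formula for \(\eta_{m,j}^2-\eta_{m,i}^2\). Because monitors are nonnegative, \(\eta_{m,i}>\eta_{m,j}\) is equivalent to \(\eta_{m,j}^2-\eta_{m,i}^2<0\). By the identity, this holds exactly when \(\rho_{m,j}^2-\rho_{m,i}^2>E_j^2-E_i^2\), which is the stated characterization.

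For the realizability part, I will construct the errors directly rather than the approximations. Since \(V_m\neq\{0\}\), pick \(\phi\in V_m\) with \(\|\phi\|_a=1\). Since \(V_m\) is finite-dimensional, hence closed, and \(V_m\subsetneq V_0\), its \(a\)-orthogonal complement in \(V_0\) is nontrivial; pick \(\psi\in V_0\) with \(\psi\perp_a V_m\) and \(\|\psi\|_a=1\). Set
\[
z_i:=2\phi,
\qquad
z_j:=\phi+2\psi,
\]
and define \(u_i:=u^\ast-z_i\) and \(u_j:=u^\ast-z_j\). These lie in \(\mathcal A\) because \(u^\ast\in\mathcal A\) and \(z_i,z_j\in V_0\). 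By Proposition~\ref{prop:continuous_riesz_error}, the continuous Riesz representatives are exactly \(z_i\) and \(z_j\). Hence \(\eta_{m,k}=\|P_mz_k\|_a\), which gives \(\eta_{m,i}=2\) and \(\eta_{m,j}=1\). For the errors, \(E_i=2\) and \(E_j=\sqrt{1+4}=\sqrt5\). This yields \(E_i<E_j\) and \(0<\eta_{m,j}<\eta_{m,i}\), as required.

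The only step needing care is the existence of \(\psi\): it uses that \(V_m\) is closed, so that \(V_0=V_m\oplus V_m^{\perp_a}\) and properness forces a nonzero complement. Strict positivity of \(\eta_{m,j}\) is secured by including the \(\phi\) component in \(z_j\). There is no need to realize \(u_i,u_j\) as neural networks, since the statement asks only for admissible approximations in \(\mathcal A\).
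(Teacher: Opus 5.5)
Your proof is correct and follows essentially the paper's argument: the identity comes from $E_k^2=\eta_{m,k}^2+\rho_{m,k}^2$, and the reversal is realized by the same construction $z_i=\alpha x$, $z_j=\gamma x+\beta y$ with $x\in V_m$ and $y\perp_a V_m$, here with $\alpha=\beta=2$ and $\gamma=1$. These values lie outside the paper's family $0<\gamma<\alpha<\beta$, but only $\alpha^2<\gamma^2+\beta^2$ is needed, and you also justify the existence of $y$ via closedness of the finite-dimensional $V_m$, which the paper leaves implicit.
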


\begin{proof}
For each checkpoint \(k\), the \(a\)-orthogonal decomposition
\[
z_k=P_mz_k+(I-P_m)z_k
\]
gives
\[
\eta_{m,k}^2
=
E_k^2-\rho_{m,k}^2.
\]
Subtracting the identities for \(i\) and \(j\) yields the first claim and,
when \(E_i<E_j\), the stated characterization of rank reversal.

To show that such a reversal can occur, choose \(a\)-unit vectors
\[
x\in V_m,
\qquad
y\in V_m^{\perp_a}\cap V_0,
\]
and let \(0<\gamma<\alpha<\beta\). Define
\[
z_i:=\alpha x,
\qquad
z_j:=\gamma x+\beta y,
\qquad
u_i:=u^\ast-z_i,
\qquad
u_j:=u^\ast-z_j.
\]
Then \(u_i,u_j\in\mathcal A\), and
\[
E_i=\alpha
<
\sqrt{\gamma^2+\beta^2}
=
E_j.
\]
Since
\[
P_mz_i=\alpha x,
\qquad
P_mz_j=\gamma x,
\]
one has
\[
\eta_{m,i}=\alpha>\gamma=\eta_{m,j}>0.
\]
\end{proof}

Thus, the finite-resolution obstruction is caused by checkpoint-dependent
projection defects, not by failure of the lower-bound property itself.
At fixed \(V_m\), the unresolved component \(\rho_{m,k}\) depends on the
direction of the error \(z_k\) relative to the auxiliary space, not only on
its norm. Different checkpoints may therefore have different resolved and unresolved
error components, allowing the unresolved-component difference to overcome
the exact energy gap and reverse the ordering.

\subsection{Logged-oracle consistency under conforming refinement}
\label{subsec:logged_oracle_consistency}

We now show that the finite-resolution obstruction identified in
Proposition~\ref{prop:finite-resolution-rank-reversal} cannot persist under a
common nested conforming refinement of a finite checkpoint archive. The key
point is that checkpointwise recovery becomes uniform over finitely many
checkpoints.

Define the logged energy-oracle level and oracle set by
\[
E_\ast
:=
\min_{k\in\mathcal K_{\mathrm{log}}}E_k,
\qquad
\mathcal K_E^\ast
:=
\operatorname*{arg\,min}_{k\in\mathcal K_{\mathrm{log}}}E_k.
\]
Since \(u^\ast\) is unknown, \(E_\ast\) and
\(\mathcal K_E^\ast\) are unavailable in practical computations.

At auxiliary level \(m\), define
\[
\mathcal K_m^\ast
:=
\operatorname*{arg\,min}_{k\in\mathcal K_{\mathrm{log}}}
\eta_{m,k}.
\]
A reference-free checkpoint may therefore be selected as any
\(k_m^\ast\in\mathcal K_m^\ast\). This rule uses only the unconditional
lower monitor and is independent of saturation, which enters only the
finite-resolution guarantees developed later.

\begin{lemma}[Uniform recovery over the logged set]
\label{lem:uniform_checkpoint_recovery}
Under the assumptions of
Corollary~\ref{cor:conforming_refinement}, define
\[
\varepsilon_m
:=
\max_{k\in\mathcal K_{\mathrm{log}}}
\bigl(E_k-\eta_{m,k}\bigr).
\]
Then
\[
\varepsilon_m
=
\max_{k\in\mathcal K_{\mathrm{log}}}
\left|E_k-\eta_{m,k}\right|
\longrightarrow0
\qquad\text{as }m\to\infty.
\]
\end{lemma}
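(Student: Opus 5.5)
The plan is to combine the lower-bound property with checkpointwise convergence, and to use that the archive is finite. By Proposition~\ref{prop:projection_decomposition}, every checkpoint and every level satisfy \(0\le\eta_{m,k}\le E_k\). Hence each term \(E_k-\eta_{m,k}\) is nonnegative and equals \(|E_k-\eta_{m,k}|\). Taking the maximum over \(k\in\mathcal K_{\mathrm{log}}\) then gives the claimed identity
\[
\varepsilon_m=\max_{k\in\mathcal K_{\mathrm{log}}}|E_k-\eta_{m,k}|,
\]
together with \(\varepsilon_m\ge0\).

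For convergence, I would invoke Corollary~\ref{cor:conforming_refinement} for each fixed \(k\). This is legitimate because the hierarchy \((V_m)_m\) is common to all checkpoints. The corollary gives \(\eta_{m,k}\uparrow E_k\), so \(E_k-\eta_{m,k}\) is nonincreasing in \(m\) and tends to \(0\).

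Given \(\epsilon>0\), I would choose for each \(k\) an index \(m_k\) with \(E_k-\eta_{m,k}<\epsilon\) for all \(m\ge m_k\). Set \(\bar m:=\max_k m_k\), which is finite because \(\mathcal K_{\mathrm{log}}\) is finite. Then \(\varepsilon_m<\epsilon\) for all \(m\ge\bar m\).

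Alternatively, the convergence follows directly from the fact that a maximum of finitely many null sequences is itself a null sequence. I would also note that \(\varepsilon_m\) is nonincreasing in \(m\), being a maximum of nonincreasing sequences.

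There is no serious obstacle; the argument is routine. The only point needing care is that uniformity relies on finiteness of the archive and on the auxiliary hierarchy being shared by all checkpoints. For an infinite family of candidates, the thresholds \(m_k\) could be unbounded, and the argument would fail.
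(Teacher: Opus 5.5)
Your proposal is correct and follows essentially the same route as the paper: checkpointwise monotone recovery from Corollary~\ref{cor:conforming_refinement} on the common hierarchy, the lower-bound property giving $E_k-\eta_{m,k}=|E_k-\eta_{m,k}|\ge 0$, and passage to the maximum over the finite archive. Your explicit $\bar m:=\max_k m_k$ argument and the remark that $\varepsilon_m$ is nonincreasing are valid additions but not needed for the statement.
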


\begin{proof}
For every fixed \(k\in\mathcal K_{\mathrm{log}}\),
Corollary~\ref{cor:conforming_refinement} gives
\[
0\leq E_k-\eta_{m,k}\longrightarrow0.
\]
Since \(\mathcal K_{\mathrm{log}}\) is finite, the maximum of these finitely
many convergent defects also tends to zero.
\end{proof}

Thus,
\[
0
\leq
E_k-\eta_{m,k}
\leq
\varepsilon_m
\qquad
\forall k\in\mathcal K_{\mathrm{log}},
\]
with \(\varepsilon_m\to0\). Combined with monitor minimality, this uniform
control yields value consistency.

\begin{proposition}[Value consistency of monitor-based selection]
\label{prop:monitor_selection_consistency}
Let \(k_m^\ast\in\mathcal K_m^\ast\) be any monitor-minimizing checkpoint.
Then
\[
0
\leq
E_{k_m^\ast}-E_\ast
\leq
\varepsilon_m.
\]
Consequently,
\[
E_{k_m^\ast}\longrightarrow E_\ast
\qquad\text{as }m\to\infty.
\]
\end{proposition}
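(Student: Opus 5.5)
The plan is a three-link chain of inequalities. It uses only the unconditional lower-bound property \eqref{eq:checkpointwise_monitor_recovery}, the minimality of \(k_m^\ast\) for the monitor, and the uniform defect bound of Lemma~\ref{lem:uniform_checkpoint_recovery}. No saturation is needed.

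The lower inequality \(0\le E_{k_m^\ast}-E_\ast\) is immediate. Since \(k_m^\ast\in\mathcal K_{\mathrm{log}}\) and \(E_\ast\) is the minimum of \(E_k\) over this finite set, we have \(E_\ast\le E_{k_m^\ast}\).

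For the upper inequality, fix any oracle index \(k_E\in\mathcal K_E^\ast\), which exists because the archive is finite. We then chain three estimates:
\[
E_{k_m^\ast}
\le
\eta_{m,k_m^\ast}+\varepsilon_m
\le
\eta_{m,k_E}+\varepsilon_m
\le
E_{k_E}+\varepsilon_m
=
E_\ast+\varepsilon_m .
\]
The first step applies the uniform bound \(E_k-\eta_{m,k}\le\varepsilon_m\) at \(k=k_m^\ast\). The second uses \(k_m^\ast\in\mathcal K_m^\ast\), so that \(\eta_{m,k_m^\ast}\le\eta_{m,k_E}\). The third is the lower-bound property \(\eta_{m,k_E}\le E_{k_E}\) from Proposition~\ref{prop:projection_decomposition}.

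A point to note is that the error is incurred only once. We do not need to control \(E_{k_E}-\eta_{m,k_E}\), because the unconditional lower bound at the oracle absorbs that term. This is why the bound is \(\varepsilon_m\) rather than \(2\varepsilon_m\).

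The convergence \(E_{k_m^\ast}\to E_\ast\) then follows from the two-sided bound together with \(\varepsilon_m\to0\) from Lemma~\ref{lem:uniform_checkpoint_recovery}. It holds for any choice of selected index \(k_m^\ast\), which may vary with \(m\).

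There is no real obstacle in this argument. The only care required is to use the signed definition of \(\varepsilon_m\), where each defect is nonnegative by the lower-bound property, and to apply the lower bound and the upper defect bound at the correct checkpoints.
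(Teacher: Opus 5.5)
Your proof is correct and follows the paper's argument: the same chain \(E_{k_m^\ast}\le\eta_{m,k_m^\ast}+\varepsilon_m\le\eta_{m,k_E^\ast}+\varepsilon_m\le E_\ast+\varepsilon_m\), obtained from the definition of \(\varepsilon_m\), monitor minimality, and the lower bound at an oracle index, together with \(E_\ast\le E_{k_m^\ast}\). The convergence then follows from Lemma~\ref{lem:uniform_checkpoint_recovery}, as you state.
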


\begin{proof}
Let \(k_E^\ast\in\mathcal K_E^\ast\). By monitor minimality, the lower-bound
property, and the definition of \(\varepsilon_m\),
\[
E_{k_m^\ast}
\leq
\eta_{m,k_m^\ast}+\varepsilon_m
\leq
\eta_{m,k_E^\ast}+\varepsilon_m
\leq
E_\ast+\varepsilon_m.
\]
Since \(E_\ast\leq E_{k_m^\ast}\), the stated bound follows.
\end{proof}

Thus, monitor minimization is asymptotically consistent in value. Exact
checkpointwise error recovery is stronger than required for oracle selection:
it suffices that the unresolved monitor error fall below the
oracle--non-oracle gap. The following corollary makes this threshold explicit.

\begin{corollary}[Finite-gap criterion and eventual logged-oracle selection]
\label{cor:oracle_recovery}
Under the assumptions of
Corollary~\ref{cor:conforming_refinement}, suppose first that
\[
\mathcal K_E^\ast\neq\mathcal K_{\mathrm{log}},
\]
and define the oracle--non-oracle gap
\[
\delta_E
:=
\min_{k\in
\mathcal K_{\mathrm{log}}\setminus\mathcal K_E^\ast}
\bigl(E_k-E_\ast\bigr)
>0.
\]
At every auxiliary level \(m\) such that
\[
\varepsilon_m<\delta_E,
\]
one has
\[
\mathcal K_m^\ast\subseteq\mathcal K_E^\ast.
\]
Consequently, in all cases there exists \(m_0\) such that
\[
\mathcal K_m^\ast
\subseteq
\mathcal K_E^\ast
\qquad
\forall m\geq m_0.
\]
If the logged energy oracle is unique,
\[
\mathcal K_E^\ast=\{k_E^\ast\},
\]
then
\[
\mathcal K_m^\ast=\{k_E^\ast\}
\qquad
\forall m\geq m_0.
\]
\end{corollary}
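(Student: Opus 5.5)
The plan combines the value bound of Proposition~\ref{prop:monitor_selection_consistency} with the uniform recovery of Lemma~\ref{lem:uniform_checkpoint_recovery}.

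\emph{Step 1: the gap criterion.} Suppose \(\mathcal K_E^\ast\neq\mathcal K_{\mathrm{log}}\). The index set \(\mathcal K_{\mathrm{log}}\setminus\mathcal K_E^\ast\) is finite and nonempty, and each of its elements satisfies \(E_k>E_\ast\). Hence \(\delta_E>0\) is a minimum of finitely many positive numbers.

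Fix \(m\) with \(\varepsilon_m<\delta_E\) and take any \(k_m^\ast\in\mathcal K_m^\ast\). Proposition~\ref{prop:monitor_selection_consistency} gives
\[
E_{k_m^\ast}-E_\ast\le\varepsilon_m<\delta_E .
\]
If \(k_m^\ast\) were a non-oracle index, the definition of \(\delta_E\) would force \(E_{k_m^\ast}-E_\ast\ge\delta_E\), which is a contradiction. Therefore \(k_m^\ast\in\mathcal K_E^\ast\), so \(\mathcal K_m^\ast\subseteq\mathcal K_E^\ast\).

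The only subtle point is to use the value bound strictly. The argument compares the selected checkpoint with the oracle level, not two monitors against each other. This is exactly how the ordering obstruction of Proposition~\ref{prop:finite-resolution-rank-reversal} is avoided: the selected error is pinned within \(\varepsilon_m\) of \(E_\ast\), whatever the unresolved components of the competitors are.

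\emph{Step 2: eventual inclusion.} There are two cases.

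If \(\mathcal K_E^\ast=\mathcal K_{\mathrm{log}}\), the inclusion \(\mathcal K_m^\ast\subseteq\mathcal K_{\mathrm{log}}=\mathcal K_E^\ast\) holds trivially for every \(m\), so \(m_0=1\) works.

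Otherwise, Lemma~\ref{lem:uniform_checkpoint_recovery} gives \(\varepsilon_m\to0\), so there is an \(m_0\) with \(\varepsilon_m<\delta_E\) for all \(m\ge m_0\). Step 1 then yields the inclusion for all such \(m\).

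\emph{Step 3: the unique-oracle case.} Suppose \(\mathcal K_E^\ast=\{k_E^\ast\}\). The set \(\mathcal K_m^\ast\) is nonempty because it is the argmin of a real function over a nonempty finite set. Combined with \(\mathcal K_m^\ast\subseteq\{k_E^\ast\}\), this gives \(\mathcal K_m^\ast=\{k_E^\ast\}\) for \(m\ge m_0\).

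I expect no real obstacle here. The only care needed is the degenerate case where every checkpoint is an oracle and \(\delta_E\) is undefined, together with the nonemptiness of the argmin used in Step 3.
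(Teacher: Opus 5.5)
Your proposal is correct and follows essentially the same route as the paper: the value bound $E_{k_m^\ast}-E_\ast\le\varepsilon_m$ from Proposition~\ref{prop:monitor_selection_consistency} is played against the definition of $\delta_E$, the case $\mathcal K_E^\ast=\mathcal K_{\mathrm{log}}$ is handled separately, and Lemma~\ref{lem:uniform_checkpoint_recovery} supplies $m_0$. Your remark that $\mathcal K_m^\ast$ is nonempty makes explicit a point the paper leaves implicit in the unique-oracle case.
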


\begin{proof}
If \(\mathcal K_E^\ast=\mathcal K_{\mathrm{log}}\), the conclusion is
immediate. Otherwise, let \(k_m^\ast\in\mathcal K_m^\ast\). By
Proposition~\ref{prop:monitor_selection_consistency},
\[
E_{k_m^\ast}-E_\ast
\leq
\varepsilon_m.
\]
Hence, whenever \(\varepsilon_m<\delta_E\),
\[
E_{k_m^\ast}-E_\ast<\delta_E.
\]
By definition of \(\delta_E\), no checkpoint outside
\(\mathcal K_E^\ast\) can satisfy this inequality. Therefore
\[
k_m^\ast\in\mathcal K_E^\ast,
\]
and thus
\[
\mathcal K_m^\ast\subseteq\mathcal K_E^\ast.
\]
Finally, Lemma~\ref{lem:uniform_checkpoint_recovery} gives
\(\varepsilon_m\to0\), so the finite-gap condition holds for all
sufficiently large \(m\). The unique-oracle statement follows immediately.
\end{proof}

\begin{remark}[Scope of logged-oracle recovery]
\label{rem:scope_logged_oracle}
Proposition~\ref{prop:monitor_selection_consistency} and
Corollary~\ref{cor:oracle_recovery} concern the oracle over the prescribed
archive \(\mathcal K_{\mathrm{log}}\). If this archive contains a minimizer
of the energy error over the computed training trajectory, then its logged
oracle is also trajectory-wide. Whether such a minimizer is retained is the
distinct archive-coverage question addressed in
Section~\ref{sec:trajectory_wide_control}.
\end{remark}

\begin{remark}[Geometric removal of the ordering obstruction]
\label{rem:geometric_oracle_recovery}
If \(\mathcal K_E^\ast=\mathcal K_{\mathrm{log}}\), there is no
oracle--non-oracle ordering to resolve. Otherwise, define
\[
\gamma_E
:=
\min_{k\in
\mathcal K_{\mathrm{log}}\setminus\mathcal K_E^\ast}
\left(E_k^2-E_\ast^2\right)
>0.
\]
For any \(s\in\mathcal K_E^\ast\) and
\(k\in\mathcal K_{\mathrm{log}}\setminus\mathcal K_E^\ast\), the orthogonal
decomposition gives
\[
\eta_{m,k}^2-\eta_{m,s}^2
\geq
\gamma_E
-
\max_{j\in\mathcal K_{\mathrm{log}}}\rho_{m,j}^2.
\]
Since the archive is finite and \(\rho_{m,j}\to0\) for every fixed \(j\),
\[
\max_{j\in\mathcal K_{\mathrm{log}}}\rho_{m,j}^2
\longrightarrow0.
\]
Hence, for all sufficiently large \(m\),
\[
\eta_{m,s}<\eta_{m,k}
\qquad
\forall s\in\mathcal K_E^\ast,\quad
\forall k\in
\mathcal K_{\mathrm{log}}\setminus\mathcal K_E^\ast.
\]
Thus, conforming refinement makes the unresolved projection defects uniformly
too small to offset the oracle--non-oracle energy gap, eliminating the
finite-resolution rank-reversal mechanism.
\end{remark}

The finite-gap criterion above is not directly verifiable at finite
resolution. Both \(\varepsilon_m\) and \(\delta_E\) depend on the inaccessible
exact energy errors \(E_k\), and no computable convergence rate for
\(\varepsilon_m\) is assumed. Hence, observing the same monitor
minimizer over finitely many auxiliary levels does not imply that
\(\varepsilon_m<\delta_E\); the ordering obstruction may disappear only under
further refinement.

Such stabilization nevertheless provides an empirical indication of selection
robustness under the tested auxiliary refinements. In experiments with
reference information, the ratios \(\eta_{m,k}/E_k\) quantify how closely
the monitors recover the corresponding energy errors. Without reference
information, however, stabilization alone is not a certificate, motivating
the computable finite-resolution guarantees derived next from the hierarchical
lower--upper intervals.

\subsection{Finite-resolution near-oracle control and logged-oracle certification}
\label{subsec:finite_oracle_guarantees}

Throughout this subsection, the finite archive
\(\mathcal K_{\mathrm{log}}\) is prescribed before selection and remains
fixed under auxiliary refinement. Changing the archive changes the target
oracle and therefore defines a distinct selection problem. Write
\[
E_{\mathrm{log}}^\ast
:=
\min_{k\in\mathcal K_{\mathrm{log}}}E_k.
\]

Fix a nested auxiliary pair
\[
V_m\subset V_M
\]
and a prescribed factor \(q\in(0,1)\). The lower-monitor selection rule is
independent of \(q\), which enters only through the conditional upper
estimates. For each \(k\in\mathcal K_{\mathrm{log}}\), define
\[
\mathcal U_{m,M,q,k}
:=
\mathcal U_{m,M,q}(u_k).
\]
Whenever saturation holds at checkpoint \(k\),
\[
\eta_{m,k}
\leq
E_k
\leq
\mathcal U_{m,M,q,k}.
\]

\begin{proposition}[Finite-resolution near-oracle control]
\label{prop:finite_near_oracle}
Let
\[
k_m^\ast\in\mathcal K_m^\ast
\]
be selected by the lower monitor. Assume that saturation holds at
\(k_m^\ast\) and that
\[
\eta_{m,k_m^\ast}>0.
\]
Then
\[
\frac{E_{k_m^\ast}}{E_{\mathrm{log}}^\ast}
\leq
\frac{\mathcal U_{m,M,q,k_m^\ast}}
{\eta_{m,k_m^\ast}}.
\]
\end{proposition}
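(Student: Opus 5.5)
The plan is to chain the saturated upper bound at the selected checkpoint with the unconditional lower bound at an oracle checkpoint, linking them through monitor minimality. There are three steps: bound the numerator \(E_{k_m^\ast}\) from above, bound the denominator \(E_{\mathrm{log}}^\ast\) from below by \(\eta_{m,k_m^\ast}\), and then divide, using the positivity assumption.

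For the numerator, since saturation is assumed at \(k_m^\ast\), the bracket \eqref{eq:lower_upper_bracket} (equivalently Proposition~\ref{prop:saturated_upper_estimate} applied to \(u_{k_m^\ast}\)) gives
\[
E_{k_m^\ast}\leq\mathcal U_{m,M,q,k_m^\ast}.
\]

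For the denominator, pick any \(k_E^\ast\in\mathcal K_E^\ast\), so that \(E_{k_E^\ast}=E_{\mathrm{log}}^\ast\). Proposition~\ref{prop:projection_decomposition} gives the unconditional lower bound \(\eta_{m,k_E^\ast}\leq E_{k_E^\ast}\), which requires no saturation at \(k_E^\ast\). Monitor minimality of \(k_m^\ast\) over \(\mathcal K_{\mathrm{log}}\) gives \(\eta_{m,k_m^\ast}\leq\eta_{m,k_E^\ast}\). Chaining these,
\[
0<\eta_{m,k_m^\ast}\leq\eta_{m,k_E^\ast}\leq E_{\mathrm{log}}^\ast.
\]

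Since \(E_{\mathrm{log}}^\ast\geq\eta_{m,k_m^\ast}>0\), the ratio on the left of the statement is well defined. Dividing the numerator bound by the denominator bound yields
\[
\frac{E_{k_m^\ast}}{E_{\mathrm{log}}^\ast}\leq\frac{\mathcal U_{m,M,q,k_m^\ast}}{\eta_{m,k_m^\ast}}.
\]

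The argument is short, and there is no real obstacle. The one point needing care is that saturation is invoked only at the selected checkpoint, never at the unknown oracle. At the oracle, only the unconditional lower monitor is used, and monitor minimality is what transfers information from \(k_E^\ast\) to \(k_m^\ast\). The hypothesis \(\eta_{m,k_m^\ast}>0\) serves two purposes: it ensures \(E_{\mathrm{log}}^\ast>0\), so the left-hand side is defined, and it makes the right-hand side finite.
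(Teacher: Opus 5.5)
Your proposal is correct and follows the paper's proof essentially step for step. Monitor minimality combined with the unconditional lower bound at an oracle checkpoint gives \(0<\eta_{m,k_m^\ast}\leq E_{\mathrm{log}}^\ast\), saturation at the selected checkpoint alone bounds the numerator, and dividing the two bounds concludes.
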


\begin{proof}
Let \(k_E^\ast\in\mathcal K_E^\ast\). By monitor minimality and the
lower-bound property,
\[
\eta_{m,k_m^\ast}
\leq
\eta_{m,k_E^\ast}
\leq
E_{\mathrm{log}}^\ast.
\]
Since \(\eta_{m,k_m^\ast}>0\), one has
\(E_{\mathrm{log}}^\ast>0\). Saturation at \(k_m^\ast\) gives
\[
E_{k_m^\ast}
\leq
\mathcal U_{m,M,q,k_m^\ast}.
\]
Dividing the two bounds yields the result.
\end{proof}

\begin{corollary}[Exact logged-oracle certification by interval separation]
\label{cor:interval_separation_oracle}
Let \(s\in\mathcal K_{\mathrm{log}}\), and assume that saturation holds at
\(s\). If
\[
\mathcal U_{m,M,q,s}
<
\min_{\substack{k\in\mathcal K_{\mathrm{log}}\\ k\neq s}}
\eta_{m,k},
\]
then \(s\) is the unique logged energy-oracle checkpoint:
\[
E_s<E_k
\qquad
\forall k\in\mathcal K_{\mathrm{log}},\quad k\neq s.
\]
\end{corollary}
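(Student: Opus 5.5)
The plan is to chain the conditional upper bound at \(s\) with the unconditional lower bounds at all competitors. First, since saturation is assumed at \(s\), the bracket \eqref{eq:lower_upper_bracket}, which is Proposition~\ref{prop:saturated_upper_estimate} applied to \(u_s\), gives \(E_s\leq\mathcal U_{m,M,q,s}\). Second, for every \(k\in\mathcal K_{\mathrm{log}}\) with \(k\neq s\), Proposition~\ref{prop:projection_decomposition} gives \(\eta_{m,k}\leq E_k\). This bound requires no saturation at \(k\), so the hypothesis is needed only at the candidate \(s\).

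Combining these with the separation hypothesis, for each \(k\neq s\) we get
\[
E_s\leq\mathcal U_{m,M,q,s}<\min_{j\neq s}\eta_{m,j}\leq\eta_{m,k}\leq E_k .
\]
The middle inequality is strict, so \(E_s<E_k\) for every \(k\neq s\). Hence \(s\in\mathcal K_E^\ast\), and no other index attains \(E_\ast\), so \(\mathcal K_E^\ast=\{s\}\).

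There is no real obstacle. The only points to note are the degenerate case and what the corollary does not claim. If \(\mathcal K_{\mathrm{log}}=\{s\}\), the minimum over the empty set is read as \(+\infty\) and the claim is vacuous. The result also says nothing about whether the monitor rule selects \(s\). It does, however, follow from the same chain: since \(\eta_{m,s}\leq\mathcal U_{m,M,q,s}<\eta_{m,k}\) for all \(k\neq s\), we get \(\mathcal K_m^\ast=\{s\}\). I would add this as a one-line remark, since it links the certificate to the selection rule.
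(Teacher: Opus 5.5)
Your proof is correct and is exactly the paper's argument: the chain \(E_s\leq\mathcal U_{m,M,q,s}<\eta_{m,k}\leq E_k\) for every \(k\neq s\), with saturation used only at \(s\). Your added remark that \(\mathcal K_m^\ast=\{s\}\) is also valid. It needs no saturation, because \(\eta_{m,s}\leq\mathcal U_{m,M,q,s}\) follows directly from the definition of \(\mathcal U_{m,M,q}\) since \(\delta_{m,M}^2\geq0\), and it is consistent with the paper's operational procedure, which applies the certificate at the monitor-selected checkpoint.
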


\begin{proof}
For every \(k\in\mathcal K_{\mathrm{log}}\setminus\{s\}\),
\[
E_s
\leq
\mathcal U_{m,M,q,s}
<
\eta_{m,k}
\leq
E_k.
\]
\end{proof}

\paragraph{Operational selection procedure}

At a given auxiliary resolution, evaluate \(\eta_{m,k}\) over the prescribed
archive and select
\[
k_m^\ast
\in
\operatorname*{arg\,min}_{k\in\mathcal K_{\mathrm{log}}}
\eta_{m,k}.
\]
The hierarchical upper estimate
\(\mathcal U_{m,M,q,k_m^\ast}\) is then computed at the selected checkpoint.
Under saturation at \(k_m^\ast\), the separation condition
\[
\mathcal U_{m,M,q,k_m^\ast}
<
\min_{\substack{k\in\mathcal K_{\mathrm{log}}\\ k\neq k_m^\ast}}
\eta_{m,k}
\]
certifies \(k_m^\ast\) as the unique logged energy oracle.

If separation is not attained and
\(\eta_{m,k_m^\ast}>0\), the same saturation hypothesis yields the computable
factor
\[
\Gamma_{\mathrm{log},k_m^\ast}
:=
\frac{\mathcal U_{m,M,q,k_m^\ast}}
{\eta_{m,k_m^\ast}},
\]
with
\[
\frac{E_{k_m^\ast}}{E_{\mathrm{log}}^\ast}
\leq
\Gamma_{\mathrm{log},k_m^\ast}.
\]
If this bound is sufficiently sharp for the intended use,
\(k_m^\ast\) may be retained; otherwise, the lower auxiliary space is refined,
the monitors are reevaluated over the same archive, and the procedure is
repeated with a further nested enrichment.

\paragraph{Scope and contribution}

In the present setting, selection is restricted to candidates generated by the
optimization run and retained in the prescribed archive; it cannot compensate
for a trajectory that fails to contain a sufficiently accurate approximation.
The symmetric coercive Ritz structure nevertheless yields an important
alignment:
\[
J(u_k)-J(u^\ast)=\tfrac12 E_k^2.
\]
Hence the exact Ritz functional and the energy error induce the same ordering
over admissible candidates. Practical empirical or quadrature-based evaluations
may, however, perturb this ordering unless the associated approximation errors
are sufficiently controlled; see the discussion following
Proposition~\ref{prop:best_approximation}.

The contribution is therefore to recover, without a reference solution,
the minimum-energy-error candidate contained in the prescribed archive.
Conforming refinement yields eventual logged-oracle recovery, while the
hierarchical bounds provide finite-resolution near-oracle control and, under
saturation, exact certification by interval separation. Whether the archive
omits a better iterate from a denser computed trajectory is the distinct
coverage question addressed next.

\section{Logging resolution and trajectory-wide guarantees}
\label{sec:trajectory_wide_control}

A logging schedule determines the finite archive over which checkpoint
selection is performed. Consequently, the corresponding logged oracle depends
on the logging resolution: a coarser archive may exclude lower-energy iterates
and thereby increase the smallest attainable energy error among the retained
checkpoints.

For a uniform logging stride \(h\) dividing \(T\), let
\[
\mathcal K_h:=\{0,h,2h,\ldots,T\},
\qquad
E_h^\ast:=\min_{k\in\mathcal K_h}E_k.
\]
If \(h_2\) is a multiple of \(h_1\), then
\[
\mathcal K_{h_2}\subseteq\mathcal K_{h_1},
\qquad
E_{h_1}^\ast\leq E_{h_2}^\ast.
\]
Thus, coarser logging cannot improve the best energy-error level available
within the archive.

To quantify this coverage effect, let
\(\mathcal K_{\mathrm{traj}}\) be a prescribed finite comparison trajectory,
which may in particular consist of a denser set of iterates from the same
optimization run, and fix an archive
\[
\mathcal K_{\mathrm{log}}
\subseteq
\mathcal K_{\mathrm{traj}}.
\]
Define
\[
E_{\mathrm{log}}^\ast
:=
\min_{k\in\mathcal K_{\mathrm{log}}}E_k,
\qquad
E_{\mathrm{traj}}^\ast
:=
\min_{j\in\mathcal K_{\mathrm{traj}}}E_j.
\]
Then
\[
0
\leq
E_{\mathrm{log}}^\ast-E_{\mathrm{traj}}^\ast,
\]
and this difference measures the loss induced by archive subsampling relative
to the comparison trajectory.

\subsection{Competitive archive coverage and trajectory-wide control}
\label{subsec:archive_resolution}

Assume that \(\eta_{m,j}\) is available for every
\(j\in\mathcal K_{\mathrm{traj}}\), and select
\[
s
\in
\operatorname*{arg\,min}_{k\in\mathcal K_{\mathrm{log}}}
\eta_{m,k}.
\]
Fix a nested pair \(V_m\subset V_M\) and a prescribed
\(q\in(0,1)\), and assume that saturation holds at \(s\). Define
\[
U_s:=\mathcal U_{m,M,q,s},
\qquad
E_s\leq U_s.
\]

The available lower--upper information allows trajectory iterates that cannot
improve upon \(s\) to be excluded. Define
\begin{equation}
\mathcal K_{\mathrm{comp},s}
:=
\left\{
j\in\mathcal K_{\mathrm{traj}}
:
\eta_{m,j}\leq U_s
\right\}.
\label{eq:competitive_trajectory_set}
\end{equation}
Indeed, if \(j\notin\mathcal K_{\mathrm{comp},s}\), then
\[
E_j
\geq
\eta_{m,j}
>
U_s
\geq
E_s,
\]
so \(j\) cannot improve upon the selected checkpoint. Conversely, if
\(j^\ast\in\operatorname*{arg\,min}_{j\in\mathcal K_{\mathrm{traj}}}E_j\)
is a trajectory oracle, then
\[
\eta_{m,j^\ast}
\leq
E_{\mathrm{traj}}^\ast
\leq
E_s
\leq
U_s.
\]
Hence every trajectory oracle belongs to
\(\mathcal K_{\mathrm{comp},s}\), and trajectory-wide coverage may be
restricted to this competitive set.

Define the competitive coverage radius by
\begin{equation}
r_{\mathrm{comp},s}
:=
\max_{j\in\mathcal K_{\mathrm{comp},s}}
\min_{k\in\mathcal K_{\mathrm{log}}}
\lVert u_j-u_k\rVert_a.
\label{eq:competitive_coverage_radius}
\end{equation}
It measures the largest energy distance from a competitive trajectory iterate
to its nearest retained checkpoint. These pairwise distances do not require
access to \(u^\ast\).

\begin{proposition}[Trajectory-wide control through competitive coverage]
\label{prop:competitive_trajectory_control}
Under the preceding assumptions,
\begin{equation}
0
\leq
E_{\mathrm{log}}^\ast-E_{\mathrm{traj}}^\ast
\leq
r_{\mathrm{comp},s}.
\label{eq:archive_subsampling_bound}
\end{equation}
Moreover,
\begin{equation}
0
\leq
E_s-E_{\mathrm{traj}}^\ast
\leq
\underbrace{U_s-\eta_{m,s}}_{
\substack{\text{auxiliary-resolution}\\\text{gap}}
}
+
\underbrace{r_{\mathrm{comp},s}}_{
\substack{\text{archive-coverage}\\\text{loss}}
}.
\label{eq:trajectory_additive_bound}
\end{equation}
The trajectory-oracle level satisfies
\begin{equation}
E_{\mathrm{traj}}^\ast
\geq
\eta_{m,s}-r_{\mathrm{comp},s}.
\label{eq:trajectory_oracle_lower_bound}
\end{equation}
Hence, if
\[
r_{\mathrm{comp},s}<\eta_{m,s},
\]
then
\begin{equation}
\Gamma_{\mathrm{traj},s}
:=
\frac{U_s}
{\eta_{m,s}-r_{\mathrm{comp},s}}
\label{eq:trajectory_near_oracle_factor}
\end{equation}
is well defined and satisfies
\begin{equation}
\frac{E_s}{E_{\mathrm{traj}}^\ast}
\leq
\Gamma_{\mathrm{traj},s}.
\label{eq:trajectory_near_oracle_bound}
\end{equation}

If, in addition,
\begin{equation}
U_s
<
\min_{k\in\mathcal K_{\mathrm{log}}\setminus\{s\}}
\eta_{m,k},
\label{eq:trajectory_logged_interval_separation}
\end{equation}
then \(s\) is the unique logged energy oracle and
\begin{equation}
E_s
=
E_{\mathrm{log}}^\ast,
\qquad
0
\leq
E_s-E_{\mathrm{traj}}^\ast
\leq
r_{\mathrm{comp},s}.
\label{eq:certified_trajectory_coverage_bound}
\end{equation}
\end{proposition}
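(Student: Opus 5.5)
The whole argument turns on one observation: the trajectory oracle lies in the competitive set. I would fix a trajectory oracle \(j^\ast\in\operatorname*{arg\,min}_{j\in\mathcal K_{\mathrm{traj}}}E_j\). The discussion preceding the statement already shows that \(j^\ast\in\mathcal K_{\mathrm{comp},s}\), using only the chain \(\eta_{m,j^\ast}\le E^\ast_{\mathrm{traj}}\le E^\ast_{\mathrm{log}}\le E_s\le U_s\). The lower bound \(0\le E^\ast_{\mathrm{log}}-E^\ast_{\mathrm{traj}}\) is just the inclusion \(\mathcal K_{\mathrm{log}}\subseteq\mathcal K_{\mathrm{traj}}\).

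For the upper bound in \eqref{eq:archive_subsampling_bound}, I would pick \(k\in\mathcal K_{\mathrm{log}}\) closest to \(u_{j^\ast}\) in energy norm. By the definition \eqref{eq:competitive_coverage_radius}, this gives \(\|u_{j^\ast}-u_k\|_a\le r_{\mathrm{comp},s}\). The triangle inequality for \(\|\cdot\|_a\) applies to \(u^\ast-u_k=(u^\ast-u_{j^\ast})+(u_{j^\ast}-u_k)\), and all terms lie in \(V_0\) by admissibility. This yields \(E^\ast_{\mathrm{log}}\le E_k\le E_{j^\ast}+r_{\mathrm{comp},s}\). The same estimate rearranged gives \(E^\ast_{\mathrm{traj}}\ge E^\ast_{\mathrm{log}}-r_{\mathrm{comp},s}\). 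Monitor minimality over the archive and the lower-bound property give \(\eta_{m,s}\le\eta_{m,k}\le E_k\) for every \(k\in\mathcal K_{\mathrm{log}}\), hence \(\eta_{m,s}\le E^\ast_{\mathrm{log}}\). Combining the two gives \eqref{eq:trajectory_oracle_lower_bound}.

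The additive bound \eqref{eq:trajectory_additive_bound} then follows by writing \(E_s-E^\ast_{\mathrm{traj}}\le U_s-(\eta_{m,s}-r_{\mathrm{comp},s})\), using \(E_s\le U_s\) from saturation at \(s\). Nonnegativity holds because \(s\in\mathcal K_{\mathrm{traj}}\). If \(r_{\mathrm{comp},s}<\eta_{m,s}\), the denominator of \eqref{eq:trajectory_near_oracle_factor} is positive. Then \(E^\ast_{\mathrm{traj}}\ge\eta_{m,s}-r_{\mathrm{comp},s}>0\), and dividing \(E_s\le U_s\) by this lower bound gives \eqref{eq:trajectory_near_oracle_bound}.

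Finally, under the separation condition \eqref{eq:trajectory_logged_interval_separation}, I would invoke Corollary~\ref{cor:interval_separation_oracle} with this \(s\). It shows that \(s\) is the unique logged oracle, so \(E_s=E^\ast_{\mathrm{log}}\). Substituting into \eqref{eq:archive_subsampling_bound} gives \eqref{eq:certified_trajectory_coverage_bound}.

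No step is a real obstacle. The one point needing care is that the comparison to the archive must go through the trajectory oracle \(j^\ast\), not through an arbitrary trajectory iterate, since the radius \(r_{\mathrm{comp},s}\) only controls competitive iterates. That is why membership of \(j^\ast\) in \(\mathcal K_{\mathrm{comp},s}\) is established first.
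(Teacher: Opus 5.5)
Your proposal is correct and follows the paper's proof step for step. Membership of the trajectory oracle \(j^\ast\) in \(\mathcal K_{\mathrm{comp},s}\), combined with the triangle inequality, gives \eqref{eq:archive_subsampling_bound}; monitor minimality and the lower-bound property give \(\eta_{m,s}\le E^\ast_{\mathrm{log}}\) and hence \eqref{eq:trajectory_oracle_lower_bound}; \(E_s\le U_s\) then yields the additive and ratio bounds; and Corollary~\ref{cor:interval_separation_oracle} settles the separated case.
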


\begin{proof}
Let \(j^\ast\in\mathcal K_{\mathrm{traj}}\) attain
\(E_{\mathrm{traj}}^\ast\). Since
\(j^\ast\in\mathcal K_{\mathrm{comp},s}\), choose
\(k^\ast\in\mathcal K_{\mathrm{log}}\) such that
\[
\lVert u_{j^\ast}-u_{k^\ast}\rVert_a
\leq
r_{\mathrm{comp},s}.
\]
The triangle inequality gives
\[
E_{k^\ast}
\leq
E_{\mathrm{traj}}^\ast+r_{\mathrm{comp},s}.
\]
Since
\(E_{\mathrm{traj}}^\ast\leq E_{\mathrm{log}}^\ast\leq E_{k^\ast}\),
\eqref{eq:archive_subsampling_bound} follows.

Let \(k_{\mathrm{log}}^\ast\) attain
\(E_{\mathrm{log}}^\ast\). Monitor minimality and the lower-bound property give
\[
\eta_{m,s}
\leq
\eta_{m,k_{\mathrm{log}}^\ast}
\leq
E_{\mathrm{log}}^\ast.
\]
Together with \eqref{eq:archive_subsampling_bound}, this yields
\[
E_{\mathrm{traj}}^\ast
\geq
\eta_{m,s}-r_{\mathrm{comp},s},
\]
proving \eqref{eq:trajectory_oracle_lower_bound}. Since \(E_s\leq U_s\),
\[
E_s-E_{\mathrm{traj}}^\ast
\leq
U_s-\eta_{m,s}+r_{\mathrm{comp},s},
\]
which gives \eqref{eq:trajectory_additive_bound}. If
\(r_{\mathrm{comp},s}<\eta_{m,s}\), division by the positive lower bound in
\eqref{eq:trajectory_oracle_lower_bound} yields
\eqref{eq:trajectory_near_oracle_bound}.

Finally, \eqref{eq:trajectory_logged_interval_separation} and
Corollary~\ref{cor:interval_separation_oracle} certify \(s\) as the unique
logged energy oracle. Hence \(E_s=E_{\mathrm{log}}^\ast\), and
\eqref{eq:certified_trajectory_coverage_bound} follows from
\eqref{eq:archive_subsampling_bound}.
\end{proof}

The trajectory-wide suboptimality admits the exact decomposition
\[
E_s-E_{\mathrm{traj}}^\ast
=
\bigl(E_s-E_{\mathrm{log}}^\ast\bigr)
+
\bigl(E_{\mathrm{log}}^\ast-E_{\mathrm{traj}}^\ast\bigr).
\]
The first term is the loss incurred by monitor-based selection within the
prescribed archive, whereas the second is the loss caused by archive
subsampling relative to the comparison trajectory. Proposition~\ref{prop:competitive_trajectory_control} bounds these two contributions by
\[
0
\leq
E_s-E_{\mathrm{log}}^\ast
\leq
U_s-\eta_{m,s},
\qquad
0
\leq
E_{\mathrm{log}}^\ast-E_{\mathrm{traj}}^\ast
\leq
r_{\mathrm{comp},s}.
\]
Hence \eqref{eq:trajectory_additive_bound} separates the trajectory-wide
suboptimality into an auxiliary-resolution contribution and an
archive-coverage contribution.

\subsection{Competitive coverage evaluation and path-length bounds}
\label{subsec:coverage_estimates}

When the required pairwise energy distances are available, the competitive
coverage radius can be evaluated directly. Define
\[
d_a\bigl(j,\mathcal K_{\mathrm{log}}\bigr)
:=
\min_{k\in\mathcal K_{\mathrm{log}}}
\lVert u_j-u_k\rVert_a.
\]
Then
\[
r_{\mathrm{comp},s}
=
\max_{j\in\mathcal K_{\mathrm{comp},s}}
d_a\bigl(j,\mathcal K_{\mathrm{log}}\bigr).
\]

Direct evaluation may, however, require many pairwise energy-distance
computations when the comparison trajectory or retained archive is large.
Successive trajectory increments provide an alternative based only on local
inter-iterate distances or their upper bounds, for example when these are
available from densely recorded consecutive checkpoints or from bounds on
successive iterate displacements.

Suppose that
\[
\mathcal K_{\mathrm{traj}}=\{0,\ldots,T\},
\qquad
\mathcal K_{\mathrm{log}}=\{t_0,\ldots,t_N\},
\qquad
0=t_0<\cdots<t_N=T,
\]
and that
\[
\lVert u_{j+1}-u_j\rVert_a
\leq
\overline d_j,
\qquad
j=0,\ldots,T-1.
\]

\begin{corollary}[Competitive path-length bound]
\label{cor:competitive_path_length}
Under the preceding assumptions,
\begin{equation}
r_{\mathrm{comp},s}
\leq
\overline r_{\mathrm{comp},s},
\label{eq:competitive_path_length_bound}
\end{equation}
where
\begin{equation}
\overline r_{\mathrm{comp},s}
:=
\max_{\substack{0\leq i<N\\
j\in\mathcal K_{\mathrm{comp},s}\cap\{t_i,\ldots,t_{i+1}\}}}
\min
\left\{
\sum_{\ell=t_i}^{j-1}\overline d_\ell,
\,
\sum_{\ell=j}^{t_{i+1}-1}\overline d_\ell
\right\},
\label{eq:competitive_path_length_radius}
\end{equation}
with empty sums interpreted as zero. For a uniform stride \(h\) and
\(\overline d_j\leq d_{\max}\),
\begin{equation}
r_{\mathrm{comp},s}
\leq
\left\lfloor\frac{h}{2}\right\rfloor d_{\max}.
\label{eq:uniform_stride_coverage_bound}
\end{equation}
\end{corollary}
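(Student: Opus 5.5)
The plan is to bound, for each competitive iterate \(j\in\mathcal K_{\mathrm{comp},s}\), the distance \(d_a(j,\mathcal K_{\mathrm{log}})\) using the triangle inequality along the trajectory. Taking the maximum over competitive iterates then gives \eqref{eq:competitive_path_length_bound}.

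Since \(0=t_0<\cdots<t_N=T\), every \(j\in\{0,\ldots,T\}\) lies in some interval \(\{t_i,\ldots,t_{i+1}\}\), with \(0\leq i<N\). For such a pair \((i,j)\), both endpoints \(t_i\) and \(t_{i+1}\) belong to \(\mathcal K_{\mathrm{log}}\). Therefore
\[
d_a\bigl(j,\mathcal K_{\mathrm{log}}\bigr)
\leq
\min\bigl\{\lVert u_j-u_{t_i}\rVert_a,\,\lVert u_{t_{i+1}}-u_j\rVert_a\bigr\}.
\]
I would then telescope each difference,
\[
u_j-u_{t_i}=\sum_{\ell=t_i}^{j-1}(u_{\ell+1}-u_\ell),
\qquad
u_{t_{i+1}}-u_j=\sum_{\ell=j}^{t_{i+1}-1}(u_{\ell+1}-u_\ell),
\]
and apply the triangle inequality in \(\|\cdot\|_a\) together with \(\lVert u_{\ell+1}-u_\ell\rVert_a\leq\overline d_\ell\). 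When \(j\) coincides with an endpoint, the corresponding sum is empty and the bound is zero, which matches the convention in the statement. Maximizing over all admissible pairs \((i,j)\) with \(j\in\mathcal K_{\mathrm{comp},s}\) yields \(r_{\mathrm{comp},s}\leq\overline r_{\mathrm{comp},s}\). The only care needed is that every competitive \(j\) appears in at least one such pair. This holds because the intervals cover \(\{0,\ldots,T\}\); iterates at shared endpoints may appear twice, which is harmless for a maximum.

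For the uniform-stride bound \eqref{eq:uniform_stride_coverage_bound}, set \(t_i=ih\) and \(\overline d_\ell\leq d_{\max}\). The two sums are then at most \((j-t_i)d_{\max}\) and \((t_{i+1}-j)d_{\max}\). These two integers are nonnegative and add up to \(h\), so their minimum is at most \(\lfloor h/2\rfloor\). This bound holds for every \(j\), competitive or not, so the competitive restriction only makes it smaller.

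I expect no real obstacle here. The step to state carefully is the integer minimum bound, including the odd-\(h\) case, where the minimum of \(a\) and \(h-a\) over integers \(a\) is \((h-1)/2=\lfloor h/2\rfloor\).
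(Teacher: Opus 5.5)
Your proposal is correct and follows essentially the same route as the paper. You bound the distance from each competitive iterate to the two retained endpoints of its logging interval by the triangle inequality along consecutive increments, take the smaller path length, and maximize over the competitive set; for uniform stride you note that the two increment counts sum to \(h\), so the smaller is at most \(\lfloor h/2\rfloor\). Your remarks on shared endpoints and the odd-\(h\) case only make explicit what the paper's one-line argument leaves implicit.
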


\begin{proof}
For \(j\in\{t_i,\ldots,t_{i+1}\}\), the triangle inequality gives
\[
\lVert u_j-u_{t_i}\rVert_a
\leq
\sum_{\ell=t_i}^{j-1}\overline d_\ell,
\qquad
\lVert u_j-u_{t_{i+1}}\rVert_a
\leq
\sum_{\ell=j}^{t_{i+1}-1}\overline d_\ell.
\]
Since both endpoints are retained, the distance from \(u_j\) to the archive
is bounded by the smaller path length. Maximizing over the competitive set
gives \eqref{eq:competitive_path_length_bound}. For uniform stride \(h\),
every index is at most \(\lfloor h/2\rfloor\) increments from a retained
endpoint, yielding \eqref{eq:uniform_stride_coverage_bound}.
\end{proof}

The path-length construction yields the valid upper bound
\[
r_{\mathrm{comp},s}
\leq
\overline r_{\mathrm{comp},s},
\]
and, for uniform logging, makes the dependence on the stride \(h\) explicit
through \eqref{eq:uniform_stride_coverage_bound}. Direct evaluation of
\(r_{\mathrm{comp},s}\), when available, avoids this additional relaxation,
whereas the path-length bound may therefore be less sharp. More generally, any
valid upper bound
\(\widehat r_{\mathrm{comp},s}\geq r_{\mathrm{comp},s}\) may replace
\(r_{\mathrm{comp},s}\) in the trajectory-wide guarantees, yielding
correspondingly conservative bounds.

Thus, evaluating or bounding \(r_{\mathrm{comp},s}\) quantifies the price of
archive subsampling. This enables the logging stride to be chosen by balancing
sparsity against trajectory-wide accuracy.

\subsection{Logging-resolution design with trajectory-wide control}
\label{subsec:coverage_logging_design}

Coarser logging reduces the number of retained checkpoints and can make the
interval-separation condition \eqref{eq:trajectory_logged_interval_separation}
easier to satisfy by reducing the competitor set; however, it may also omit
lower-energy trajectory iterates. The objective is therefore to determine how
far the logging stride can be increased while keeping the resulting
trajectory-wide loss under quantitative control.

Let \(\mathcal H\) be a finite family of candidate uniform strides, chosen so
that the corresponding archives \((\mathcal K_h)_{h\in\mathcal H}\) are
nested, with larger \(h\) corresponding to sparser logging. All candidates are
assessed relative to the same prescribed comparison trajectory
\(\mathcal K_{\mathrm{traj}}\).

For each \(h\in\mathcal H\), select
\[
s_h
\in
\operatorname*{arg\,min}_{k\in\mathcal K_h}\eta_{m,k},
\]
and, assuming saturation at \(s_h\), define
\[
U_h:=\mathcal U_{m,M,q,s_h},
\qquad
a_h:=U_h-\eta_{m,s_h},
\qquad
r_h:=r_{\mathrm{comp},s_h},
\]
where \(r_h\) is evaluated relative to
\(\mathcal K_{\mathrm{log}}=\mathcal K_h\).
Proposition~\ref{prop:competitive_trajectory_control} gives
\[
0
\leq
E_h^\ast-E_{\mathrm{traj}}^\ast
\leq
r_h,
\qquad
0
\leq
E_{s_h}-E_{\mathrm{traj}}^\ast
\leq
a_h+r_h.
\]
Thus, \(r_h\) bounds the loss induced by archive coarsening, whereas
\(a_h+r_h\) bounds the trajectory-wide suboptimality of the checkpoint
selected from that archive. If only a valid upper bound
\(\widehat r_h\geq r_h\) is available, it may replace \(r_h\) in these
guarantees conservatively.

Whenever \(r_h<\eta_{m,s_h}\), the corresponding relative guarantee is
\[
\frac{E_{s_h}}{E_{\mathrm{traj}}^\ast}
\leq
\Gamma_{\mathrm{traj},s_h}
:=
\frac{U_h}{\eta_{m,s_h}-r_h}.
\]

For prescribed tolerances \(\tau_{\mathrm{log}}>0\) and
\(\tau_{\mathrm{tot}}>0\), the preceding bounds yield
\begin{equation}
h_{\mathrm{cov}}^\ast
:=
\max
\left\{
h\in\mathcal H:
r_h\leq\tau_{\mathrm{log}}
\right\},
\label{eq:coverage_stride_design}
\end{equation}
and
\begin{equation}
h_{\mathrm{sel}}^\ast
:=
\max
\left\{
h\in\mathcal H:
a_h+r_h\leq\tau_{\mathrm{tot}}
\right\}.
\label{eq:selected_stride_design}
\end{equation}
Whenever the corresponding feasible sets are nonempty, these are the sparsest
tested resolutions satisfying, respectively,
\[
E_{h_{\mathrm{cov}}^\ast}^\ast
\leq
E_{\mathrm{traj}}^\ast+\tau_{\mathrm{log}},
\qquad
E_{s_{h_{\mathrm{sel}}^\ast}}
\leq
E_{\mathrm{traj}}^\ast+\tau_{\mathrm{tot}}.
\]
A relative alternative is to retain the largest tested \(h\) satisfying
\[
\Gamma_{\mathrm{traj},s_h}\leq1+\varepsilon,
\]
which guarantees
\[
E_{s_h}
\leq
(1+\varepsilon)E_{\mathrm{traj}}^\ast.
\]

Candidate strides can be assessed by deterministic subsampling of a single
densely recorded comparison run, without retraining or modifying the
optimization schedule. The quantities \(r_h\), \(a_h+r_h\), and
\(\Gamma_{\mathrm{traj},s_h}\) quantify how far logging can be coarsened:
without a prescribed accuracy criterion they expose the sparsity--accuracy
trade-off, whereas with one they identify the sparsest tested resolution
satisfying it.

\subsection{Exact oracle inclusion and trajectory-oracle certification}
\label{subsec:exact_trajectory_certificates}

The preceding analysis quantifies the accuracy loss induced by archive
coarsening and thereby provides a basis for choosing the logging resolution.
A stronger question is whether a coarsened archive has incurred any loss at
all, that is, whether it still contains a trajectory oracle. We therefore
derive exact certificates for trajectory-oracle inclusion and, more strongly,
for the selected checkpoint itself to be a trajectory oracle. When
\(\mathcal K_{\mathrm{traj}}\) contains the full recorded optimization
trajectory, these certificates respectively show that sparse logging has
preserved, or that the selected checkpoint attains, the best energy-error
level reached along that trajectory.

To compare the best retained and omitted trajectory iterates, define
\[
L_h^{\mathrm{out}}
:=
\min_{j\in\mathcal K_{\mathrm{traj}}\setminus\mathcal K_h}
\eta_{m,j},
\]
with the convention \(L_h^{\mathrm{out}}=+\infty\) if
\(\mathcal K_h=\mathcal K_{\mathrm{traj}}\). When valid hierarchical upper
estimates \(U_k:=\mathcal U_{m,M,q,k}\) are available for all
\(k\in\mathcal K_h\), define
\[
B_h
:=
\min_{k\in\mathcal K_h}U_k.
\]
Then \(L_h^{\mathrm{out}}\) is a lower bound on the best energy error among
omitted trajectory iterates, whereas \(B_h\) is an upper bound on the best
energy error attained within the archive.

\begin{proposition}[Exact trajectory-oracle certificates]
\label{prop:exact_trajectory_oracle_certificates}
For a candidate archive \(\mathcal K_h\), the following statements hold.

\begin{enumerate}
\item If valid upper estimates \(U_k\) are available for all
\(k\in\mathcal K_h\) and
\begin{equation}
B_h<L_h^{\mathrm{out}},
\label{eq:exact_oracle_inclusion}
\end{equation}
then \(\mathcal K_h\) contains a trajectory oracle, and hence
\begin{equation}
E_h^\ast=E_{\mathrm{traj}}^\ast.
\label{eq:exact_archive_oracle_recovery}
\end{equation}

\item If \(U_h\) is a valid upper estimate at \(s_h\) and
\begin{equation}
U_h
<
\min_{j\in\mathcal K_{\mathrm{traj}}\setminus\{s_h\}}
\eta_{m,j},
\label{eq:full_trajectory_interval_separation}
\end{equation}
then \(s_h\) is the unique trajectory oracle, so that
\begin{equation}
E_{s_h}
=
E_h^\ast
=
E_{\mathrm{traj}}^\ast.
\label{eq:full_trajectory_oracle_recovery}
\end{equation}
\end{enumerate}
\end{proposition}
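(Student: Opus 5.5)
Both parts follow the same interval-separation pattern as Corollary~\ref{cor:interval_separation_oracle}. They combine the unconditional lower bound $\eta_{m,j}\le E_j$ from Proposition~\ref{prop:projection_decomposition}, valid at every $j\in\mathcal K_{\mathrm{traj}}$, with the saturation-based upper bound $E_k\le\mathcal U_{m,M,q,k}$ from Proposition~\ref{prop:saturated_upper_estimate}. The latter is used only where it is assumed valid: at every $k\in\mathcal K_h$ in part~1, and at $s_h$ in part~2.

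\textbf{Part 1.} First I dispose of the trivial case $\mathcal K_h=\mathcal K_{\mathrm{traj}}$. Then $L_h^{\mathrm{out}}=+\infty$, every trajectory oracle lies in $\mathcal K_h$, and $E_h^\ast=E_{\mathrm{traj}}^\ast$ by definition.

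Otherwise, fix $k_0\in\mathcal K_h$ attaining $B_h=U_{k_0}$. For every omitted index $j\in\mathcal K_{\mathrm{traj}}\setminus\mathcal K_h$ I chain
\[
E_h^\ast\le E_{k_0}\le U_{k_0}=B_h<L_h^{\mathrm{out}}\le\eta_{m,j}\le E_j .
\]
Hence every omitted iterate has strictly larger energy error than some retained one. Since $\mathcal K_{\mathrm{traj}}$ is finite, a minimizer $j^\ast$ of $E$ over $\mathcal K_{\mathrm{traj}}$ exists. By the chain it cannot lie outside $\mathcal K_h$, so $j^\ast\in\mathcal K_h$. Therefore
\[
E_h^\ast\le E_{j^\ast}=E_{\mathrm{traj}}^\ast\le E_h^\ast,
\]
where the last inequality holds because $\mathcal K_h\subseteq\mathcal K_{\mathrm{traj}}$. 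This gives \eqref{eq:exact_archive_oracle_recovery}.

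\textbf{Part 2.} For every $j\in\mathcal K_{\mathrm{traj}}\setminus\{s_h\}$,
\[
E_{s_h}\le U_h<\eta_{m,j}\le E_j ,
\]
so $s_h$ is the unique minimizer of $E$ over $\mathcal K_{\mathrm{traj}}$, and $E_{s_h}=E_{\mathrm{traj}}^\ast$. Because $s_h\in\mathcal K_h\subseteq\mathcal K_{\mathrm{traj}}$, we also have $E_{\mathrm{traj}}^\ast\le E_h^\ast\le E_{s_h}$, which collapses to \eqref{eq:full_trajectory_oracle_recovery}.

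Note that part~2 is simply Corollary~\ref{cor:interval_separation_oracle} applied with the archive replaced by $\mathcal K_{\mathrm{traj}}$. I would state this, since that corollary only requires saturation at the single distinguished index.

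The only point needing care is bookkeeping rather than analysis. Part~1 uses upper estimates for all retained checkpoints but only lower monitors for omitted ones. Part~2 uses the upper estimate solely at $s_h$. The trivial-archive convention must also be handled explicitly so that the strict inequality in \eqref{eq:exact_oracle_inclusion} makes sense. I expect no genuine obstacle beyond stating these hypotheses precisely.
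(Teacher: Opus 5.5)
Your proposal is correct and follows the paper's proof: in both parts the paper uses exactly your chains, $E_{\bar k}\le U_{\bar k}=B_h<L_h^{\mathrm{out}}\le\eta_{m,j}\le E_j$ for every omitted $j$, and $E_{s_h}\le U_h<\eta_{m,j}\le E_j$ for every $j\neq s_h$. Your explicit treatment of the case $\mathcal K_h=\mathcal K_{\mathrm{traj}}$ and the sandwich $E_{\mathrm{traj}}^\ast\le E_h^\ast\le E_{s_h}$ are harmless bookkeeping that the paper leaves implicit.
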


\begin{proof}
Choose \(\bar k\in\mathcal K_h\) such that
\(U_{\bar k}=B_h\). Under \eqref{eq:exact_oracle_inclusion}, every
\(j\in\mathcal K_{\mathrm{traj}}\setminus\mathcal K_h\) satisfies
\[
E_{\bar k}
\leq
U_{\bar k}
=
B_h
<
L_h^{\mathrm{out}}
\leq
\eta_{m,j}
\leq
E_j.
\]
Thus no omitted checkpoint can attain the trajectory minimum, proving
\eqref{eq:exact_archive_oracle_recovery}.

Under \eqref{eq:full_trajectory_interval_separation},
\[
E_{s_h}
\leq
U_h
<
\eta_{m,j}
\leq
E_j
\qquad
\forall j\in
\mathcal K_{\mathrm{traj}}\setminus\{s_h\}.
\]
Hence \(s_h\) is the unique trajectory oracle, which yields
\eqref{eq:full_trajectory_oracle_recovery}.
\end{proof}

For nested candidate archives assessed with the same valid checkpointwise
upper estimates, \(B_h\) cannot increase under archive refinement, whereas
\(L_h^{\mathrm{out}}\) cannot decrease. Hence
\eqref{eq:exact_oracle_inclusion}, once satisfied, remains valid under further
refinement. The sparsest tested archive certified to contain a trajectory
oracle is therefore
\begin{equation}
h_{\mathrm{incl}}^\ast
:=
\max
\left\{
h\in\mathcal H:
B_h<L_h^{\mathrm{out}}
\right\},
\label{eq:oracle_inclusion_stride}
\end{equation}
whenever the feasible set is nonempty.

If a valid upper estimate is available only at the selected checkpoint, the
selected-only condition
\[
U_h<L_h^{\mathrm{out}}
\]
still certifies
\[
E_h^\ast=E_{\mathrm{traj}}^\ast.
\]

Likewise, the sparsest tested archive for which the selected checkpoint is
certified as the unique trajectory oracle is
\begin{equation}
h_{\mathrm{cert}}^\ast
:=
\max
\left\{
h\in\mathcal H:
U_h
<
\min_{j\in\mathcal K_{\mathrm{traj}}\setminus\{s_h\}}
\eta_{m,j}
\right\},
\label{eq:certified_oracle_stride}
\end{equation}
whenever the feasible set is nonempty.

Together, these criteria yield a hierarchy from quantitative control to exact
certification: \(r_h\leq\tau_{\mathrm{log}}\) controls the loss caused by
archive coarsening, \(a_h+r_h\leq\tau_{\mathrm{tot}}\) controls the
trajectory-wide suboptimality of the selected checkpoint,
\(B_h<L_h^{\mathrm{out}}\) certifies preservation of a trajectory oracle, and
\eqref{eq:full_trajectory_interval_separation} certifies the selected
checkpoint itself as the unique trajectory oracle.

\subsection{Joint logging and auxiliary refinement}
\label{subsec:joint_trajectory_refinement}

The preceding results provide finite-resolution guarantees and exact
certificates. We now consider the complementary asymptotic regime in which
logging and auxiliary resolutions are refined simultaneously.

\begin{corollary}[Trajectory-oracle recovery under joint refinement]
\label{cor:joint_trajectory_refinement}
Consider a sequence of logging and auxiliary configurations, all assessed
relative to the same finite comparison trajectory. Let \(s_n\) be the selected
checkpoint, let \(U_n\) be a valid upper estimate at \(s_n\), and define
\[
a_n:=U_n-\eta_{m_n,s_n},
\qquad
r_n:=r_{\mathrm{comp},s_n}.
\]
If
\[
a_n\longrightarrow0,
\qquad
r_n\longrightarrow0,
\]
then
\[
E_{s_n}\longrightarrow E_{\mathrm{traj}}^\ast.
\]
If \(E_{\mathrm{traj}}^\ast>0\), then
\(\Gamma_{\mathrm{traj},s_n}\) is well defined for all sufficiently large \(n\)
and
\[
\Gamma_{\mathrm{traj},s_n}\longrightarrow1.
\]
\end{corollary}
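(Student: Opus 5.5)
The plan is to apply Proposition~\ref{prop:competitive_trajectory_control} configuration by configuration and then pass to the limit. For each \(n\), the \(n\)-th configuration consists of an archive \(\mathcal K_{\mathrm{log},n}\subseteq\mathcal K_{\mathrm{traj}}\), an auxiliary level \(m_n\), a selected \(s_n\in\operatorname*{arg\,min}_{k\in\mathcal K_{\mathrm{log},n}}\eta_{m_n,k}\), and a valid upper estimate \(U_n\geq E_{s_n}\). These are exactly the hypotheses of that proposition with \(s=s_n\), \(U_s=U_n\), \(m=m_n\). Its additive bound \eqref{eq:trajectory_additive_bound} then gives
\[
0\leq E_{s_n}-E_{\mathrm{traj}}^\ast\leq a_n+r_n
\]
for every \(n\). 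Since \(E_{\mathrm{traj}}^\ast\) is the same fixed number for all \(n\) (the comparison trajectory is common), letting \(a_n,r_n\to0\) yields \(E_{s_n}\to E_{\mathrm{traj}}^\ast\) by squeezing.

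For the relative statement, assume \(E_{\mathrm{traj}}^\ast>0\). First show that \(\eta_{m_n,s_n}\to E_{\mathrm{traj}}^\ast\). By definition of \(a_n\) and the bracket \(\eta_{m_n,s_n}\leq E_{s_n}\leq U_n\), we have \(0\leq E_{s_n}-\eta_{m_n,s_n}\leq a_n\to0\). Combined with the first part, this gives \(\eta_{m_n,s_n}\to E_{\mathrm{traj}}^\ast\), and also \(U_n=\eta_{m_n,s_n}+a_n\to E_{\mathrm{traj}}^\ast\). Because \(r_n\to0\), the difference \(\eta_{m_n,s_n}-r_n\) converges to \(E_{\mathrm{traj}}^\ast>0\). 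Hence \(r_n<\eta_{m_n,s_n}\) for all sufficiently large \(n\), which is exactly the well-definedness condition for \(\Gamma_{\mathrm{traj},s_n}=U_n/(\eta_{m_n,s_n}-r_n)\) in \eqref{eq:trajectory_near_oracle_factor}. The ratio of two sequences converging to the same positive limit tends to \(1\), so \(\Gamma_{\mathrm{traj},s_n}\to1\). As a consistency check, \eqref{eq:trajectory_near_oracle_bound} gives \(1\leq E_{s_n}/E_{\mathrm{traj}}^\ast\leq\Gamma_{\mathrm{traj},s_n}\).

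There is no real obstacle; the only point needing care is bookkeeping. Proposition~\ref{prop:competitive_trajectory_control} was stated for a single fixed configuration, so I would note explicitly that its hypotheses hold for each \(n\): saturation or validity of \(U_n\) at \(s_n\), monitors available on the whole \(\mathcal K_{\mathrm{traj}}\), and \(r_n\) computed relative to \(\mathcal K_{\mathrm{log},n}\). Given that, no uniformity in \(n\) is needed, since every bound is pointwise in \(n\) and the limit quantity \(E_{\mathrm{traj}}^\ast\) does not depend on \(n\).
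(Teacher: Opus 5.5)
Your proposal is correct and follows essentially the same route as the paper: apply the additive bound \eqref{eq:trajectory_additive_bound} of Proposition~\ref{prop:competitive_trajectory_control} at each \(n\) and squeeze. Then use the bracket \(\eta_{m_n,s_n}\leq E_{s_n}\leq U_n\) to get \(\eta_{m_n,s_n}\to E_{\mathrm{traj}}^\ast\), \(U_n\to E_{\mathrm{traj}}^\ast\), and \(\eta_{m_n,s_n}-r_n\to E_{\mathrm{traj}}^\ast>0\), hence \(\Gamma_{\mathrm{traj},s_n}\to1\). Your explicit checks that the proposition's hypotheses hold at each \(n\) and that the denominator is eventually positive are left implicit in the paper's proof.
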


\begin{proof}
Proposition~\ref{prop:competitive_trajectory_control} gives
\[
0
\leq
E_{s_n}-E_{\mathrm{traj}}^\ast
\leq
a_n+r_n,
\]
so \(E_{s_n}\to E_{\mathrm{traj}}^\ast\). Moreover,
\[
\eta_{m_n,s_n}
\leq
E_{s_n}
\leq
U_n,
\qquad
U_n-\eta_{m_n,s_n}=a_n,
\]
hence
\[
\eta_{m_n,s_n}
\longrightarrow
E_{\mathrm{traj}}^\ast,
\qquad
U_n
\longrightarrow
E_{\mathrm{traj}}^\ast.
\]
If \(E_{\mathrm{traj}}^\ast>0\), then
\[
\eta_{m_n,s_n}-r_n
\longrightarrow
E_{\mathrm{traj}}^\ast>0,
\]
and therefore
\[
\Gamma_{\mathrm{traj},s_n}
=
\frac{U_n}{\eta_{m_n,s_n}-r_n}
\longrightarrow1.
\]
\end{proof}

Corollary~\ref{cor:joint_trajectory_refinement} yields trajectory-oracle
consistency for the complete selection procedure relative to the prescribed
comparison trajectory. The auxiliary-resolution gap \(a_n\) controls selection
within the logged archive, while \(r_n\) controls the loss due to archive
coverage; their joint decay is sufficient for the selected checkpoint to
recover the trajectory-oracle level asymptotically.

\section{Numerical validation}
\label{sec:numerical}

This section evaluates the conforming Riesz monitor \(\eta_m\) as a
post-training error-assessment and checkpoint-selection criterion. The
experiments examine energy-scale calibration, reference-free oracle selection,
auxiliary refinement, logging resolution and trajectory coverage, and
applicability to a non-manufactured benchmark. Reference solutions are used
only for validation and enter neither training, Riesz reconstruction,
checkpoint selection, nor certification criteria.

We denote by
\[
E_{\rm ref}
=
\begin{cases}
E_{\rm quad}:=\|u_\theta-u^\ast\|_{a,{\rm quad}},
& \text{manufactured benchmarks},\\[1mm]
E_{\rm FEM}:=\|u_\theta-u_{\rm FEM}^{\rm ref}\|_a,
& \text{perforated plate},
\end{cases}
\]
the numerical validation energy error. Thus \(\eta_m/E_{\rm ref}\) measures
monitor calibration. For manufactured problems it approximates the fraction
of the exact energy error resolved by the conforming reconstruction; for the
perforated plate it provides an external calibration ratio against the
independent refined FEM reference.

We use the hierarchical gap \(\delta_{m,M}\) and conditional upper estimate
\(\Ucal_{m,M,q}\) from
Subsection~\ref{subsec:conforming_riesz_hierarchy}, with \(q=0.9\) unless
stated otherwise. Monitor-based selection is independent of \(q\); sensitivity
of the conditional certificates and upper estimates to \(q\) is reported
in~\ref{app:q_sensitivity}. For validation and refinement diagnostics
we also use
\[
q_{\rm obs}
:=
\left(
\frac{E_{\rm quad}^2-\eta_M^2}
     {E_{\rm quad}^2-\eta_m^2}
\right)^{1/2},
\qquad
\gamma_{m,M,L}
:=
\frac{\delta_{M,L}}{\delta_{m,M}},
\qquad
D_c
:=
\frac{E_{{\rm ref},k_c}}
     {E_{{\rm ref},k_{\rm oracle}}},
\]
whenever defined. Here \(q_{\rm obs}\le q\) checks saturation against the
manufactured reference at the tested checkpoint, while
\(\gamma_{m,M,L}<1\) indicates decay of successive newly resolved Riesz
components but does not itself verify saturation. The factor \(D_c\) measures
the energy deterioration of criterion \(c\) relative to the oracle of the same
logged archive; \(D_c=1\) denotes oracle-level selection with respect to the
validation reference. When \(E_{\rm quad}\) is available, we also report
\(\rho_{m,{\rm rel}}^{\rm quad}
:=(E_{\rm quad}^2-\eta_m^2)^{1/2}/E_{\rm quad}\)
as an external measure of the unresolved Riesz component. The experimental
protocols are summarized in Table~\ref{tab:experimental_protocol}.

\begin{table}[!t]
\centering
\small
\setlength{\tabcolsep}{4pt}
\caption{Experimental protocols. All networks use hyperbolic tangent
activations and Adam with learning rate \(10^{-3}\); each run uses a fixed
independent validation set of \(2\times10^4\) points for \(J_{\rm val}\), and
each benchmark is repeated over five seeds \(0,\ldots,4\). The notation
\(64^4\) denotes four hidden layers of width \(64\), and \(M_{\rm op}\)
denotes the operational auxiliary level.}
\label{tab:experimental_protocol}
\begin{adjustbox}{max width=\textwidth}
\begin{tabular}{lcccccc}
\toprule
Family
& Network
& Steps
& Log
& Training points
& Validation reference
& Auxiliary hierarchy \\
\midrule
Scalar diffusion
& \(2\!-\!64^4\!-\!1\)
& \(2000\)
& \(50\)
& \(10^4\)
& \(E_{\rm quad}\): \(96^2\), \(3{\times}3\) Gauss
& \(6,12,24,48\), \(M_{\rm op}=24\) \\
Elasticity
& \(2\!-\!80^4\!-\!2\)
& \(3000\)
& \(100\)
& \(12000\)
& \(E_{\rm quad}\): \(96^2\), \(3{\times}3\) Gauss
& \(6,12,24,48\), \(M_{\rm op}=24\) \\
L-shaped domain
& \(2\!-\!80^4\!-\!1\)
& \(4000\)
& \(100\)
& \(15000\)
& \(E_{\rm quad}\): \(128^2\), \(3{\times}3\) Gauss
& \(8,16,32,64,128\), \(M_{\rm op}=64\) \\
Perforated plate
& \(2\!-\!80^4\!-\!2\)
& \(4000\)
& \(100\)
& \(15000\)
& FEM \(256\) (check: \(192\))
& \(24,48,96\), \(M_{\rm op}=48\) \\
\bottomrule
\end{tabular}
\end{adjustbox}
\end{table}

Unless stated otherwise, reported values of the form \(x\pm y\) denote the
mean and one sample standard deviation over the five seeds.

For the manufactured square-domain tests, the auxiliary spaces are the
conforming \(Q_1\) spaces
\[
V_m^{Q_1}
=
\left\{
v_m\in C^0(\overline\Omega)\cap H_0^1(\Omega):
\ v_m|_K\in Q_1(K)\ \ \forall K\in\mathcal T_m
\right\},
\]
with \([V_m^{Q_1}]^2\) in elasticity; the L-shaped benchmark uses the same
construction on its active quadrilateral cells. To realize
Algorithm~\ref{alg:riesz_monitor} on the nested auxiliary hierarchies, we use
the common finest-level Galerkin form
\(G_m=P_{m,L}^{\top}G_LP_{m,L}\) and
\(r_m=P_{m,L}^{\top}r_L\), so nestedness and monitor monotonicity hold
algebraically at the assembled discrete level. The finest levels are \(L=48\)
for the standard scalar and elasticity hierarchies, \(L=96\) for the
additional \(\kappa_4\) certification hierarchy, and \(L=128\) for the
L-shaped hierarchy.

For the perforated plate, let \(V_{m,\circ}^{P_1}\) denote the conforming
piecewise-linear space satisfying the homogeneous essential condition on the
outer boundary. We use the exactly nested vector-valued hierarchy
\[
[V_{24,\circ}^{P_1}]^2
\subset
[V_{48,\circ}^{P_1}]^2
\subset
[V_{96,\circ}^{P_1}]^2,
\]
obtained by straight-sided uniform refinement of a common conforming triangular
mesh fitted to the fixed polygonal hole. All levels therefore represent the
same polygonal domain. The discrete-hierarchy and quadrature-order audit is
reported in~\ref{app:discrete_quadrature_audit}.
All approximations are trained with the Deep Ritz energy \cite{EYu2018}, with
essential boundary conditions imposed strongly. \(J_{\rm train}\) and
\(J_{\rm val}\) denote energy evaluations on the fixed training and independent
validation sets, respectively. The strong-form residual is used only as an
\(L^2\)-type baseline. Neural training and automatic differentiation use
single precision, whereas auxiliary assembly, Riesz solves, reference
quadratures, and FEM computations use double precision. The operational levels \(M_{\rm op}\) in
Table~\ref{tab:experimental_protocol} are those used for checkpoint ranking;
enriched levels are used only for post-training refinement, calibration, and
certification diagnostics.

\subsection{Manufactured-reference validation benchmarks}
\label{subsec:exact_reference_benchmarks}

\subsubsection{Scalar variable-coefficient diffusion}
\label{subsubsec:scalar_results}

We first consider
\[
-\nabla\cdot(\kappa\nabla u)=f\quad\text{in }\Omega=(0,1)^2,
\qquad
u=0\quad\text{on }\partial\Omega,
\]
with \(V_0=H_0^1(\Omega)\) and
\(a_i(u,v)=\int_\Omega \kappa_i\nabla u\cdot\nabla v\,dx\). The first three cases use the common manufactured solution
\[
u_i^\ast(x,y)=\sin(\pi x)\sin(\pi y),
\qquad i=1,2,3,
\]
with
\[
\kappa_1=1+\tfrac12\sin(2\pi x)\sin(2\pi y),\qquad
\kappa_2=1+\tfrac14\sin(6\pi x)\sin(6\pi y),
\]
\[
\kappa_3=1+0.45\sin(10\pi x)\sin(10\pi y).
\]
The fourth combines the more oscillatory coefficient
\[
\kappa_4=1+0.49\sin(14\pi x)\sin(14\pi y)
\]
with
\[
\begin{aligned}
u_4^\ast(x,y)
&=\sin(\pi x)\sin(\pi y)
+0.35\,x(1-x)y(1-y)\\
&\qquad\times
\exp\!\left[-80\big((x-0.35)^2+(y-0.65)^2\big)\right]
\left(1+0.5\sin(2\pi x)\sin(3\pi y)\right).
\end{aligned}
\]
In each case,
\[
f_i=-\nabla\cdot(\kappa_i\nabla u_i^\ast).
\]

\begin{figure}[!t]
\centering
\begin{subfigure}[t]{0.49\textwidth}
\centering
\includegraphics[width=\linewidth]{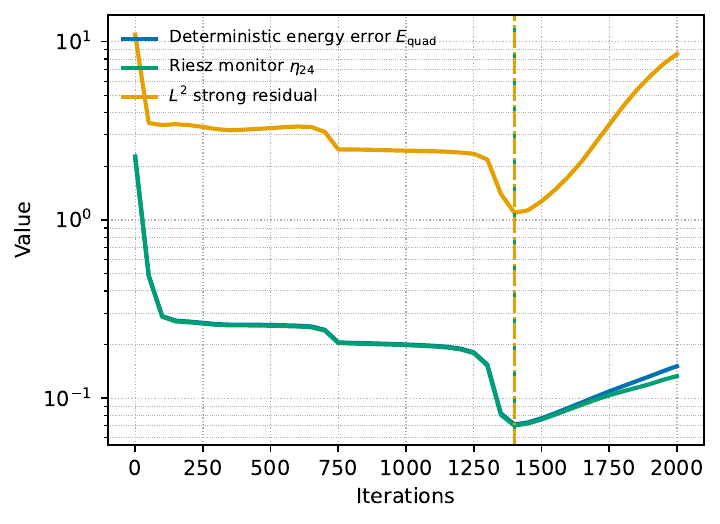}
\caption{Training diagnostics.}
\label{fig:scalar_training:a}
\end{subfigure}\hfill
\begin{subfigure}[t]{0.49\textwidth}
\centering
\includegraphics[width=\linewidth]{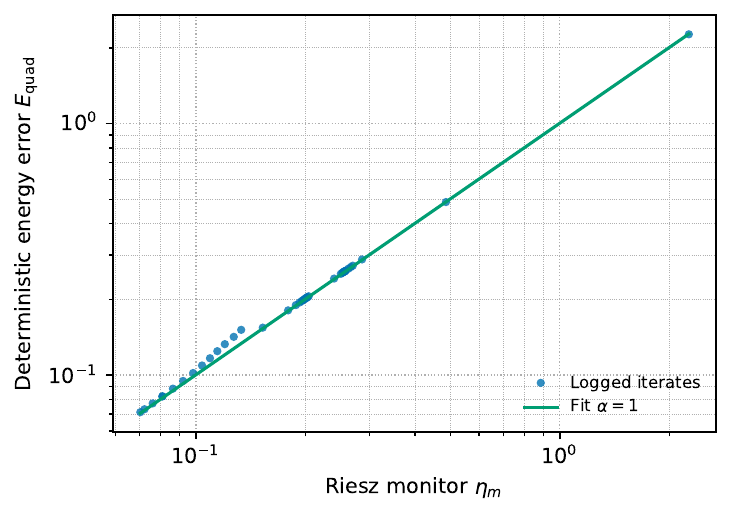}
\caption{Calibration against \(E_{\rm quad}\).}
\label{fig:scalar_training:b}
\end{subfigure}
\caption{Representative \(\kappa_1\) run (seed \(0\)): temporal tracking and
calibration of the operational monitor \(\eta_{24}\) against
\(E_{\rm quad}\).}
\label{fig:scalar_training}
\end{figure}

Figure~\ref{fig:scalar_training} shows that \(\eta_{24}\) tracks the energy
error on its natural scale and attains its minimum in the same late-training
regime. The strong \(L^2\) residual evolves on a different scale, as expected
from Remark~\ref{rem:strong_residual}.

\begin{table}[!t]
\centering
\scriptsize
\setlength{\tabcolsep}{3.4pt}
\renewcommand{\arraystretch}{1.12}
\caption{Scalar diffusion: checkpoint selection over five seeds.}
\label{tab:scalar_checkpoint_selection}
\begin{adjustbox}{max width=\textwidth}
\begin{tabular}{lcccccccc}
\toprule
\multirow{2}{*}{Selection criterion}
& \multicolumn{2}{c}{\(\kappa_1\)}
& \multicolumn{2}{c}{\(\kappa_2\)}
& \multicolumn{2}{c}{\(\kappa_3\)}
& \multicolumn{2}{c}{\(\kappa_4\)} \\
\cmidrule(lr){2-3}
\cmidrule(lr){4-5}
\cmidrule(lr){6-7}
\cmidrule(lr){8-9}
& Step & \(D_c\)
& Step & \(D_c\)
& Step & \(D_c\)
& Step & \(D_c\) \\
\midrule
Energy oracle \(\min E_{\rm quad}\)
& \(1390 \pm 201\) & \(1.000 \pm 0.000\)
& \(1420 \pm 175\) & \(1.000 \pm 0.000\)
& \(1410 \pm 222\) & \(1.000 \pm 0.000\)
& \(1480 \pm 289\) & \(1.000 \pm 0.000\) \\
Riesz monitor \(\min \eta_{24}\)
& \(1390 \pm 201\) & \(1.000 \pm 0.000\)
& \(1420 \pm 175\) & \(1.000 \pm 0.000\)
& \(1410 \pm 222\) & \(1.000 \pm 0.000\)
& \(1480 \pm 289\) & \(1.000 \pm 0.000\) \\
Strong residual \(L^2\)
& \(1390 \pm 201\) & \(1.000 \pm 0.000\)
& \(1250 \pm 332\) & \(1.105 \pm 0.223\)
& \(1230 \pm 355\) & \(1.074 \pm 0.163\)
& \(1300 \pm 237\) & \(1.049 \pm 0.109\) \\
Training Ritz loss \(\min J_{\rm train}\)
& \(2000 \pm 0\) & \(1.723 \pm 0.368\)
& \(2000 \pm 0\) & \(1.778 \pm 0.294\)
& \(2000 \pm 0\) & \(1.603 \pm 0.389\)
& \(2000 \pm 0\) & \(1.529 \pm 0.358\) \\
Validation Ritz loss \(\min J_{\rm val}\)
& \(1370 \pm 182\) & \(1.036 \pm 0.067\)
& \(1390 \pm 156\) & \(1.020 \pm 0.029\)
& \(1420 \pm 160\) & \(1.030 \pm 0.040\)
& \(1460 \pm 268\) & \(1.012 \pm 0.020\) \\
\bottomrule
\end{tabular}
\end{adjustbox}
\end{table}

Across all twenty scalar runs (Table~\ref{tab:scalar_checkpoint_selection}),
\(\eta_{24}\) selects the same logged checkpoint as the
\(E_{\rm quad}\)-oracle. The validation Ritz loss remains
close to the oracle, whereas the training loss selects the final iterate and
the strong residual departs from the oracle in the three more demanding
cases. This does not contradict the exact Ritz--energy identity, which applies to the
continuous functional \(J\); the discrete evaluations \(J_{\rm train}\) and
\(J_{\rm val}\) may perturb its ordering, as discussed after
Proposition~\ref{prop:best_approximation}. None of these baselines carries the energy-dual ordering guarantee used by the
Riesz criterion
\(k_m^\ast\in\operatorname*{arg\,min}_{k\in\mathcal K_{\mathrm{log}}}\eta_{m,k}\).

\begin{table}[!t]
\centering
\small
\setlength{\tabcolsep}{5pt}
\renewcommand{\arraystretch}{1.12}
\caption{Scalar diffusion: calibration and hierarchical diagnostics at the
\(\eta_{24}\)-selected checkpoint.}
\label{tab:scalar_summary}
\begin{adjustbox}{max width=\textwidth}
\begin{tabular}{lccccc}
\toprule
\multirow{2}{*}{Coefficient}
& \multicolumn{2}{c}{Monitor calibration}
& \multicolumn{3}{c}{Refinement / estimator diagnostics} \\
\cmidrule(lr){2-3}
\cmidrule(lr){4-6}
& \(\eta_{24}/E_{\rm quad}\)
& \(\eta_{48}/E_{\rm quad}\)
& \(\gamma_{12,24,48}\)
& \(q_{\rm obs}\)
& \(\Ucal_{24,48,0.9}/E_{\rm quad}\) \\
\midrule
\(\kappa_1\)
& \(0.981 \pm 0.007\)
& \(0.995 \pm 0.002\)
& \(0.519 \pm 0.004\)
& \(0.503 \pm 0.001\)
& \(1.052 \pm 0.018\) \\
\(\kappa_2\)
& \(0.975 \pm 0.011\)
& \(0.994 \pm 0.003\)
& \(0.535 \pm 0.019\)
& \(0.506 \pm 0.004\)
& \(1.070 \pm 0.029\) \\
\(\kappa_3\)
& \(0.983 \pm 0.008\)
& \(0.996 \pm 0.002\)
& \(0.528 \pm 0.013\)
& \(0.506 \pm 0.003\)
& \(1.048 \pm 0.020\) \\
\(\kappa_4\)
& \(0.979 \pm 0.013\)
& \(0.995 \pm 0.004\)
& \(0.525 \pm 0.028\)
& \(0.509 \pm 0.011\)
& \(1.057 \pm 0.033\) \\
\bottomrule
\end{tabular}
\end{adjustbox}
\end{table}

At the selected checkpoints (Table~\ref{tab:scalar_summary}),
\(\eta_{24}/E_{\rm quad}=0.975\)--\(0.983\), increasing to
\(\eta_{48}/E_{\rm quad}=0.994\)--\(0.996\). The observed saturation factors are
near \(0.5\), and the conditional upper estimates remain within about
\(5\%\)--\(7\%\) of the reference error for \(q=0.9\).

Thus, in these scalar tests, the level-\(24\) monitor already resolves the
archive ordering needed for oracle selection and incurs below-\(1\%\) observed
overhead (Table~\ref{tab:computational_cost_summary}), supporting its use as
the operational monitor; enrichment further sharpens energy-scale calibration.

Across all four coefficients, auxiliary refinement increases
\(\eta_m/E_{\rm quad}\) toward unity
(Figure~\ref{fig:manufactured_aux_refinement:a}), consistent with
Corollary~\ref{cor:conforming_refinement}.

Having established energy-scale calibration, we next test finite-resolution
certification of the selected archive minimizer.

\paragraph{Finite-level interval certification}
We use \(\kappa_4\) to test finite-resolution certification across auxiliary
resolutions and logging strides. The dense log is
\[
\mathcal K_{50}=\{0,50,\ldots,2000\},
\]
and the nested subarchives are
\(\mathcal K_h=\{0,h,2h,\ldots,2000\}\) for
\(h\in\{50,100,200,400\}\). For the monitor-selected checkpoint \(s_h\), let
\(U_h:=\Ucal_{m,M,q,s_h}\). We report the conditional near-oracle factor
\(\Gamma_{{\rm log},h}=U_h/\eta_{m,s_h}\); interval separation certifies
\(s_h\) as the unique oracle of \(\mathcal K_h\), conditional on saturation.

\begin{table}[!t]
\centering
\small
\setlength{\tabcolsep}{5pt}
\renewcommand{\arraystretch}{1.08}
\caption{\(\kappa_4\): logged-oracle selection and interval certification for
\(q=0.9\). Match uses \(E_{\rm quad}\) only for external validation; certified
counts are conditional on saturation.}
\label{tab:kappa4_interval_separation}
\begin{adjustbox}{max width=\textwidth}
\begin{tabular}{lccccc}
\toprule
Auxiliary pair & \(h\) & \(|\mathcal K_h|\) & Match & Certified
& \(\Gamma_{{\rm log},h}\) \\
\midrule
\multirow{4}{*}{\(V_{24}\subset V_{48}\)}
& 50  & 41 & \(5/5\) & \(0/5\) & \(1.079\pm0.048\) \\
& 100 & 21 & \(5/5\) & \(2/5\) & \(1.080\pm0.048\) \\
& 200 & 11 & \(4/5\) & \(4/5\) & \(1.088\pm0.065\) \\
& 400 & 6  & \(5/5\) & \(4/5\) & \(1.094\pm0.066\) \\
\addlinespace[0.35em]
\multirow{4}{*}{\(V_{48}\subset V_{96}\)}
& 50  & 41 & \(5/5\) & \(2/5\) & \(1.021\pm0.014\) \\
& 100 & 21 & \(5/5\) & \(5/5\) & \(1.021\pm0.014\) \\
& 200 & 11 & \(4/5\) & \(4/5\) & \(1.025\pm0.023\) \\
& 400 & 6  & \(5/5\) & \(5/5\) & \(1.027\pm0.023\) \\
\bottomrule
\end{tabular}
\end{adjustbox}
\end{table}

Auxiliary enrichment improves certifiability
(Table~\ref{tab:kappa4_interval_separation}): with
\(V_{48}\subset V_{96}\), all five \(\mathcal K_{100}\) selections are
certified and match the observed \(E_{\rm quad}\)-oracle. Higher certification
counts are observed on several coarser subarchives, although the dependence on
\(h\) is not monotone. On \(\mathcal K_{50}\), all five selections still match
but only two are certified, illustrating that failure of interval separation
is inconclusive rather than evidence of incorrect selection. The single
mismatch observed on \(\mathcal K_{200}\) is not certified and has only
\(0.72\%\) deterioration. No certified mismatch occurs.

\paragraph{Trajectory-wide archive control}
Certification within \(\mathcal K_h\) does not control checkpoints omitted by
logging. We therefore take \(\mathcal K_{50}\) as the prescribed finite
comparison trajectory and apply Section~\ref{sec:trajectory_wide_control} to
\(h\in\{100,200,400\}\) using \(V_{48}\subset V_{96}\) and \(q=0.9\).
For external validation we report
\[
\delta^{\rm traj}_{s_h}
:=100\left(
\frac{E_{{\rm quad},s_h}}{\min_{j\in\mathcal K_{50}}E_{{\rm quad},j}}-1
\right).
\]
The hatted quantities below are deterministic-quadrature realizations; they are
reference-free diagnostics relative to \(\mathcal K_{50}\), not certified
continuous-energy-norm bounds unless quadrature error is controlled.

\begin{table}[!t]
\centering
\scriptsize
\setlength{\tabcolsep}{3.0pt}
\renewcommand{\arraystretch}{1.10}
\caption{Trajectory-wide \(\kappa_4\) diagnostics relative to
\(\mathcal K_{50}\). Oracle inclusion is reported as observed/certified; in
these runs the selected-trajectory-oracle counts are identical. The last
column gives mean/max observed deterioration.}
\label{tab:kappa4_trajectory_wide}
\begin{adjustbox}{max width=\textwidth}
\begin{tabular}{ccccccc}
\toprule
\(h\) & \(|\mathcal K_h|\) & Oracle inclusion
& \(\widehat r_h\) & \(\widehat a_h+\widehat r_h\)
& \(\widehat\Gamma_{{\rm traj},h}\)
& \(\operatorname{mean/max}\,\delta^{\rm traj}_{s_h}\) \\
\midrule
100 & 21 & \(3/5\,/\,2/5\)
& \(0.012\pm0.013\) & \(0.015\pm0.015\)
& \(1.127\pm0.111\) & \(0.22\%/0.91\%\) \\
200 & 11 & \(1/5\,/\,1/5\)
& \(0.032\pm0.026\) & \(0.036\pm0.029\)
& \(1.397\pm0.334\) & \(3.38\%/7.91\%\) \\
400 & 6 & \(0/5\,/\,0/5\)
& \(0.061\pm0.041\) & \(0.065\pm0.045\)
& \(2.197\pm1.148\) & \(8.15\%/23.89\%\) \\
\bottomrule
\end{tabular}
\end{adjustbox}
\end{table}

The trajectory-wide results (Table~\ref{tab:kappa4_trajectory_wide}) separate
within-archive certifiability from trajectory coverage.
Although \(\mathcal K_{400}\) admits a logged-oracle certificate in all five
runs with \(V_{48}\subset V_{96}\), it contains no
\(\mathcal K_{50}\)-oracle. By contrast, \(\mathcal K_{100}\) halves the
archive from \(41\) to \(21\) checkpoints and limits the maximum observed
trajectory deterioration to \(0.91\%\). Coarser logging increases the reported trajectory-wide bounds and the observed
deterioration. All trajectory-wide certificates obtained here satisfy the external check \(q_{\rm obs}\le0.509<0.9\).
Thus auxiliary refinement controls selection within a fixed archive, whereas
logging refinement controls coverage of the prescribed trajectory.

We next test the same construction in the vector-valued energy geometry of
linear elasticity and under localized material contrast.

\subsubsection{Plane-strain elasticity and smoothed high-contrast inclusion}
\label{subsubsec:elasticity_inclusion_results}

We consider plane-strain elasticity on \(\Omega=(0,1)^2\), with homogeneous
Dirichlet conditions, \(V_0=H_0^1(\Omega;\mathbb R^2)\), and
\[
a(u,v)=\int_\Omega
2\mu(x)\varepsilon(u):\varepsilon(v)
+\lambda(x)\operatorname{div}u\,\operatorname{div}v\,dx,
\]
where
\[
\varepsilon(v)=\tfrac12(\nabla v+\nabla v^\top),
\qquad
\sigma(v)=2\mu\varepsilon(v)+\lambda\operatorname{div}(v)I.
\]
The manufactured displacement is
\[
u^\ast(x,y)=
\begin{pmatrix}
\sin(\pi x)\sin(\pi y)\\[1mm]
\tfrac12\sin(2\pi x)\sin(\pi y)
\end{pmatrix},
\qquad
-\nabla\cdot\sigma(u^\ast)=f.
\]
With \(E_0=1\), \(\nu=0.30\), and the corresponding baseline Lam\'e
parameters \((\lambda_0,\mu_0)\), we set
\(\lambda(x)=s(x)\lambda_0\), \(\mu(x)=s(x)\mu_0\) and consider
\[
s_1=1,
\qquad
s_2=1+0.45\sin(6\pi x)\sin(4\pi y),
\]
plus the smoothed high-contrast inclusion
\[
s_{\rm inc}(x,y)=1+19\chi_\varepsilon(x,y),
\qquad
\chi_\varepsilon(x,y)
=
\frac12\left[
1-\tanh\!\left(
\frac{\sqrt{(x-0.55)^2+(y-0.52)^2}-0.18}{0.035}
\right)
\right].
\]
The three cases are denoted by \(\Ccal_1\), \(\Ccal_2\), and
\(\Ccal_{\rm inc}\).

\begin{figure}[!t]
\centering
\begin{subfigure}[t]{0.49\textwidth}
\centering
\includegraphics[width=\linewidth]{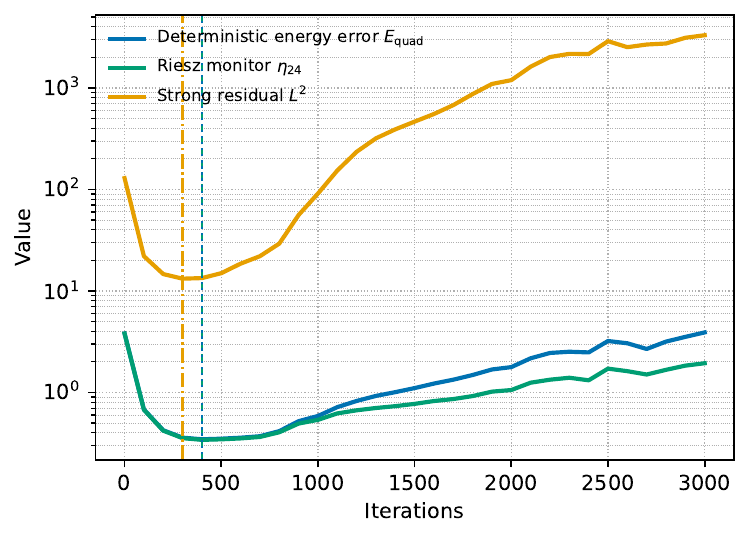}
\caption{Training diagnostics.}
\label{fig:elasticity_inclusion_monitor:a}
\end{subfigure}\hfill
\begin{subfigure}[t]{0.49\textwidth}
\centering
\includegraphics[width=\linewidth]{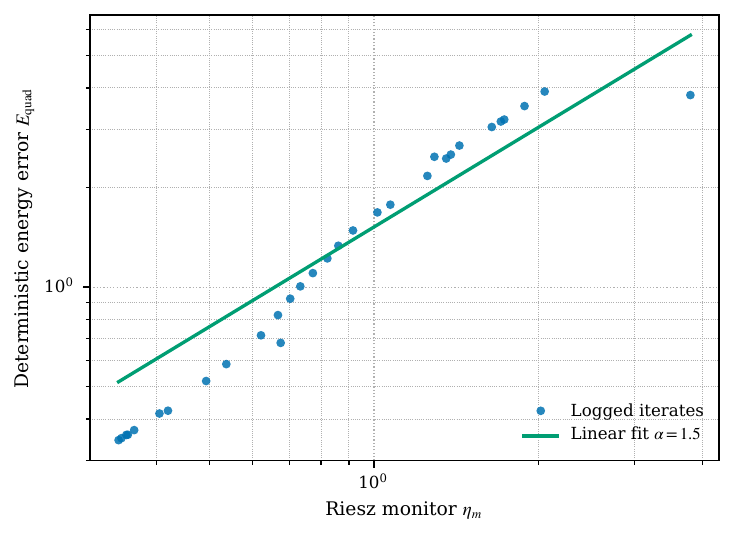}
\caption{Calibration against \(E_{\rm quad}\).}
\label{fig:elasticity_inclusion_monitor:b}
\end{subfigure}
\caption{Representative high-contrast inclusion run: temporal tracking and
calibration of the vector-valued monitor \(\eta_{24}\).}
\label{fig:elasticity_inclusion_monitor}
\end{figure}

Figure~\ref{fig:elasticity_inclusion_monitor} shows that the vector-valued
monitor remains aligned with the elastic energy error, including in the
high-contrast inclusion case.

\begin{table}[!t]
\centering
\footnotesize
\setlength{\tabcolsep}{4.5pt}
\renewcommand{\arraystretch}{1.12}
\caption{Elasticity: checkpoint selection over five seeds.}
\label{tab:elasticity_checkpoint_selection}
\begin{adjustbox}{max width=\textwidth}
\begin{tabular}{lcccccc}
\toprule
\multirow{2}{*}{Selection criterion}
& \multicolumn{2}{c}{\(\Ccal_1\)}
& \multicolumn{2}{c}{\(\Ccal_2\)}
& \multicolumn{2}{c}{\(\Ccal_{\rm inc}\)} \\
\cmidrule(lr){2-3}
\cmidrule(lr){4-5}
\cmidrule(lr){6-7}
& Step & \(D_c\) & Step & \(D_c\) & Step & \(D_c\) \\
\midrule
Energy oracle \(\min E_{\rm quad}\)
& \(960 \pm 114\) & \(1.000 \pm 0.000\)
& \(960 \pm 55\)  & \(1.000 \pm 0.000\)
& \(400 \pm 122\) & \(1.000 \pm 0.000\) \\
Riesz monitor \(\min \eta_{24}\)
& \(960 \pm 114\) & \(1.000 \pm 0.000\)
& \(960 \pm 55\)  & \(1.000 \pm 0.000\)
& \(420 \pm 130\) & \(1.000 \pm 0.000\) \\
Strong residual \(L^2\)
& \(680 \pm 84\)  & \(1.249 \pm 0.141\)
& \(700 \pm 71\)  & \(1.206 \pm 0.127\)
& \(320 \pm 110\) & \(1.011 \pm 0.016\) \\
Training Ritz loss \(\min J_{\rm train}\)
& \(3000 \pm 0\) & \(6.597 \pm 2.770\)
& \(3000 \pm 0\) & \(6.819 \pm 2.047\)
& \(3000 \pm 0\) & \(12.287 \pm 3.022\) \\
Validation Ritz loss \(\min J_{\rm val}\)
& \(940 \pm 89\)  & \(1.011 \pm 0.024\)
& \(940 \pm 55\)  & \(1.000 \pm 0.001\)
& \(500 \pm 245\) & \(1.059 \pm 0.089\) \\
\bottomrule
\end{tabular}
\end{adjustbox}
\end{table}

The Riesz criterion is oracle-level in all three material cases
(Table~\ref{tab:elasticity_checkpoint_selection}); in one
inclusion seed it selects a neighboring checkpoint with negligible
deterioration \((D_c=1.000077)\). The training Ritz loss always selects the
final logged iterate with mean deterioration factors \(D_c=6.60\)--\(12.29\), while independent validation is substantially closer to
the oracle. As in the scalar benchmarks, the comparison with \(J_{\rm val}\) is empirical,
since finite-set evaluations may perturb the ordering of the continuous Ritz
functional.

\begin{table}[!t]
\centering
\small
\setlength{\tabcolsep}{6pt}
\renewcommand{\arraystretch}{1.12}
\caption{Elasticity: calibration and hierarchical diagnostics at the
\(\eta_{24}\)-selected checkpoint.}
\label{tab:elasticity_summary}
\begin{adjustbox}{max width=\textwidth}
\begin{tabular}{lcccc}
\toprule
\multirow{2}{*}{Material}
& \multicolumn{2}{c}{Monitor calibration}
& \multicolumn{2}{c}{Hierarchical estimator} \\
\cmidrule(lr){2-3}
\cmidrule(lr){4-5}
& \(\eta_{24}/E_{\rm quad}\)
& \(\eta_{48}/E_{\rm quad}\)
& \(q_{\rm obs}\)
& \(\Ucal_{24,48,0.9}/E_{\rm quad}\) \\
\midrule
\(\Ccal_1\)
& \(0.971 \pm 0.007\)
& \(0.993 \pm 0.002\)
& \(0.507 \pm 0.002\)
& \(1.079 \pm 0.019\) \\
\(\Ccal_2\)
& \(0.973 \pm 0.008\)
& \(0.993 \pm 0.002\)
& \(0.507 \pm 0.002\)
& \(1.074 \pm 0.020\) \\
\(\Ccal_{\rm inc}\)
& \(0.986 \pm 0.004\)
& \(0.996 \pm 0.001\)
& \(0.509 \pm 0.002\)
& \(1.040 \pm 0.011\) \\
\bottomrule
\end{tabular}
\end{adjustbox}
\end{table}

At the operational level (Table~\ref{tab:elasticity_summary}), the mean
calibration ratio \(\eta_{24}/E_{\rm quad}\) ranges from \(0.971\) to
\(0.986\); level \(48\) raises the mean ratio to \(0.993\)--\(0.996\). The external
saturation check again gives \(q_{\rm obs}\simeq0.5\), while the mean
\(q=0.9\) upper-estimate ratio ranges from \(1.040\) to \(1.079\).

Across the three elasticity cases, \(\eta_m/E_{\rm quad}\) again increases
toward unity under enrichment
(Figure~\ref{fig:manufactured_aux_refinement:b}), in agreement with the
refinement behavior established in
Corollary~\ref{cor:conforming_refinement}.

Thus oracle-level selection and tight energy-scale calibration persist in
vector-valued elasticity with localized material contrast. A complementary
spatial validation in Figure~\ref{fig:app_elasticity_inclusion_spatial} shows
that the Riesz-projected density captures the dominant energetic concentration
associated with the localized inclusion. We next stress the auxiliary-resolution
requirement using a reentrant-corner singularity.

\begin{figure}[!t]
\centering
\begin{subfigure}[t]{0.32\textwidth}
\centering
\includegraphics[width=\linewidth]{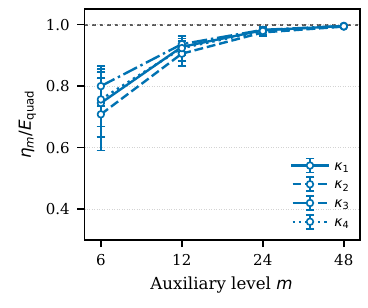}
\caption{Scalar diffusion.}
\label{fig:manufactured_aux_refinement:a}
\end{subfigure}\hfill
\begin{subfigure}[t]{0.32\textwidth}
\centering
\includegraphics[width=\linewidth]{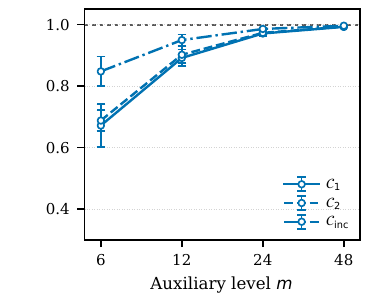}
\caption{Plane-strain elasticity.}
\label{fig:manufactured_aux_refinement:b}
\end{subfigure}\hfill
\begin{subfigure}[t]{0.32\textwidth}
\centering
\includegraphics[width=\linewidth]{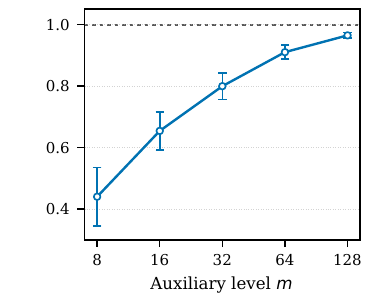}
\caption{L-shaped domain.}
\label{fig:manufactured_aux_refinement:c}
\end{subfigure}
\caption{Auxiliary refinement at the operationally selected checkpoints for
the manufactured benchmarks. Curves report the mean
\(\eta_m/E_{\rm quad}\) over five seeds, with error bars denoting one standard
deviation. Within each run, the neural checkpoint is held fixed as \(m\)
varies; the dashed line marks unit calibration.}
\label{fig:manufactured_aux_refinement}
\end{figure}

\subsubsection{L-shaped-domain stress test}
\label{subsubsec:lshape_results}

The final manufactured benchmark uses
\[
\Omega_L=(-1,1)^2\setminus\big([0,1)\times(-1,0]\big),
\qquad
-\Delta u=f,
\qquad
u=0\ \text{on }\partial\Omega_L,
\]
with
\[
u^\ast(x,y)
=(1-x^2)(1-y^2)r^{2/3}\sin\!\left(\frac{2\theta}{3}\right),
\qquad
f=-\Delta u^\ast.
\]
Here \((r,\theta)\) are polar coordinates centered at the reentrant corner,
with \(\theta\in[0,3\pi/2]\).
The reentrant-corner singularity tests checkpoint selection when uniform
auxiliary spaces require substantially greater resolution.

\begin{figure}[!t]
\centering
\begin{subfigure}[t]{0.49\textwidth}
\centering
\includegraphics[width=\linewidth]
{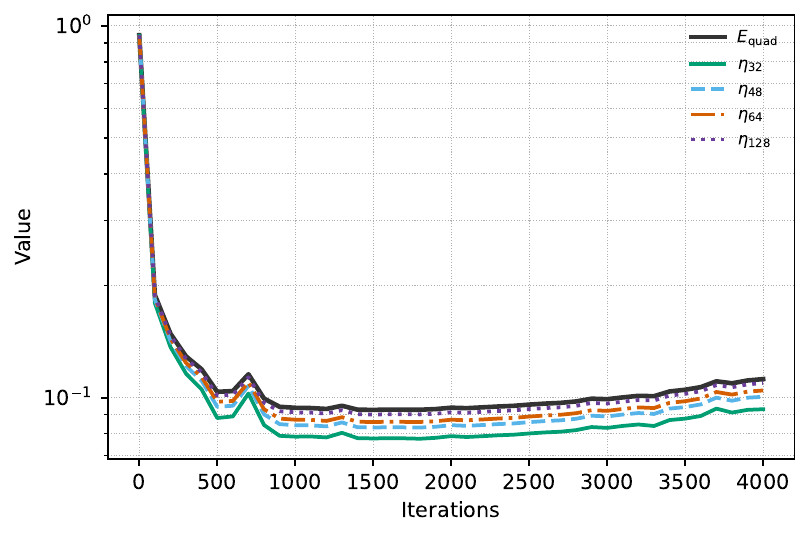}
\caption{Training diagnostics.}
\label{fig:lshape_monitor:a}
\end{subfigure}\hfill
\begin{subfigure}[t]{0.49\textwidth}
\centering
\includegraphics[width=\linewidth]
{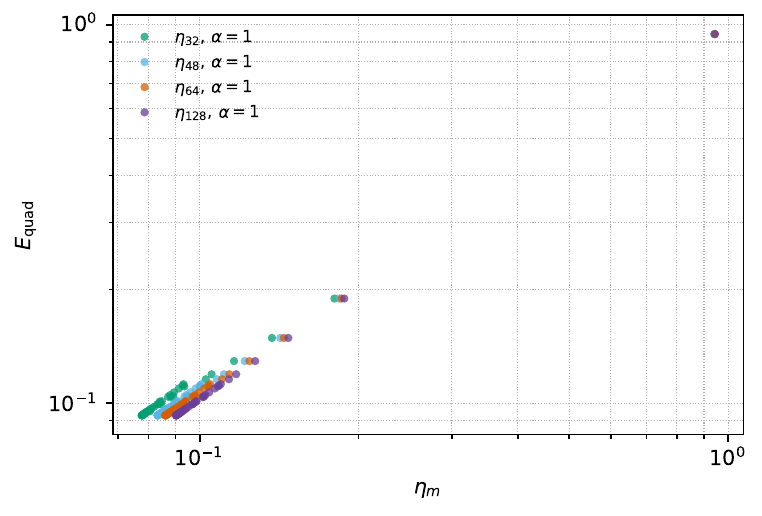}
\caption{Calibration against \(E_{\rm quad}\).}
\label{fig:lshape_monitor:b}
\end{subfigure}
\caption{L-shaped-domain monitor comparison for a representative run.
The conforming Riesz monitors at increasing auxiliary resolutions track the
reference energy error along the logged trajectory and approach the
energy-error scale under enrichment.}
\label{fig:lshape_monitor}
\end{figure}

Figure~\ref{fig:lshape_monitor} shows that all monitored levels track the
evolution of \(E_{\rm quad}\), while auxiliary enrichment improves their
energy-scale calibration. The stronger resolution requirement is consistent
with the reentrant-corner singularity.

\begin{table}[!t]
\centering
\small
\setlength{\tabcolsep}{6pt}
\renewcommand{\arraystretch}{1.10}
\caption{L-shaped domain: checkpoint selection over five seeds. The strong
residual uses the cutoff \(r>0.035\) around the reentrant corner.}
\label{tab:lshape_checkpoint_selection}
\begin{tabular}{lcc}
\toprule
Selection criterion & Selected step & \(D_c\) \\
\midrule
\(\min \eta_{32}\)
& \(1900 \pm 1086\) & \(1.003 \pm 0.005\) \\
\(\min \eta_{48}\)
& \(1980 \pm 1031\) & \(1.000 \pm 0.001\) \\
\(\min \eta_{64}\)
& \(1960 \pm 1031\) & \(1.000 \pm 0.000\) \\
\(\min \eta_{128}\)
& \(1960 \pm 1031\) & \(1.000 \pm 0.000\) \\
\addlinespace[0.3em]
Strong residual \(L^2(r>0.035)\)
& \(1640 \pm 767\) & \(1.007 \pm 0.006\) \\
Training Ritz loss \(\min J_{\rm train}\)
& \(4000 \pm 0\) & \(1.181 \pm 0.095\) \\
Validation Ritz loss \(\min J_{\rm val}\)
& \(1800 \pm 1251\) & \(1.092 \pm 0.169\) \\
\bottomrule
\end{tabular}
\end{table}

The light monitor \(\eta_{32}\) is already near-oracle, while \(\eta_{64}\)
and \(\eta_{128}\) recover the logged oracle in all five runs
(Table~\ref{tab:lshape_checkpoint_selection}). Exact logged-oracle recovery across all five runs is therefore reached only at
a finer auxiliary resolution in this benchmark than in the smooth
square-domain tests, consistent with
Corollary~\ref{cor:oracle_recovery}.

\begin{table}[!t]
\centering
\small
\setlength{\tabcolsep}{5pt}
\renewcommand{\arraystretch}{1.08}
\caption{L-shaped domain: selected auxiliary-resolution diagnostics.}
\label{tab:lshape_summary}
\begin{tabular}{lcccc}
\toprule
Level/diagnostic
& \(\eta_m/E_{\rm quad}\)
& \(\rho_{m,{\rm rel}}^{\rm quad}\)
& \(\gamma\) or \(q_{\rm obs}\)
& \(\Ucal/E_{\rm quad}\) \\
\midrule
\(32\times32\) & \(0.801\pm0.044\) & \(0.595\pm0.061\) & -- & -- \\
\(64\times64\) & \(0.911\pm0.022\) & \(0.409\pm0.050\) & -- & -- \\
\(128\times128\) & \(0.965\pm0.009\) & \(0.260\pm0.034\) & -- & -- \\
\midrule
\((32,64,128)\) & -- & -- & \(0.729\pm0.037\) & -- \\
\((64,128)\) & -- & -- & \(0.637\pm0.007\) & \(1.165\pm0.035\) \\
\bottomrule
\end{tabular}
\end{table}

At the \(\eta_{64}\)-selected checkpoint (Table~\ref{tab:lshape_summary}),
the captured fraction rises from
\(0.801\pm0.044\) at level \(32\) to \(0.965\pm0.009\) at level \(128\),
while the unresolved fraction decreases accordingly. For
\(V_{64}\subset V_{128}\), the external check gives
\(q_{\rm obs}=0.637\pm0.007<0.9\) and the conditional upper estimate is
\(1.165\pm0.035\) times \(E_{\rm quad}\). For the reported L-shaped runs, we
use \(\eta_{64}\) as the operational selector and \(\eta_{128}\) as the
enriched post-training check. Refinement remains monotone and progressively
improves energy-scale calibration
(Figure~\ref{fig:manufactured_aux_refinement:c}), consistently with
Corollary~\ref{cor:conforming_refinement}.

Notably, \(\eta_{64}/E_{\rm quad}=0.911\pm0.022\) already recovers the logged
oracle in all five runs. Thus, in this benchmark, oracle recovery precedes
near-unit calibration: the archive ordering is already resolved sufficiently
for oracle recovery before near-exact recovery of the error scale is reached,
illustrating the finite-gap mechanism of
Corollary~\ref{cor:oracle_recovery}. A complementary spatial validation in
Figure~\ref{fig:app_lshape_spatial} shows that the enriched Riesz
reconstruction captures the dominant energy concentration around the
reentrant corner.

We next quantify the cost of the operational auxiliary levels.

\subsubsection{Operational cost of checkpoint selection}
\label{subsubsec:cost_operational}

\begin{table}[!t]
\centering
\small
\setlength{\tabcolsep}{4pt}
\caption{Representative cost of the operational and enriched Riesz
reconstructions. Overhead is measured against the pure optimizer-update
time between logged checkpoints.}
\label{tab:computational_cost_summary}
\begin{adjustbox}{max width=\textwidth}
\begin{tabular}{lccccc}
\toprule
Family
& \(M_{\rm op}\)
& Operational call (s)
& Operational overhead
& Enriched level
& Enriched overhead \\
\midrule
Scalar diffusion
& \(24\) & \(0.034\)--\(0.045\) & \(0.58\)--\(0.76\%\)
& \(48\) & \(12.32\)--\(15.30\%\) \\
Elasticity/inclusion
& \(24\) & \(1.881\)--\(2.045\) & \(8.01\)--\(8.72\%\)
& \(48\) & \(8.27\)--\(9.01\%\) \\
L-shaped domain
& \(64\) & \(0.131\pm0.009\) & \(0.74\pm0.03\%\)
& \(128\) & \(4.04\pm0.74\%\) \\
\bottomrule
\end{tabular}
\end{adjustbox}
\end{table}

The operational monitor incurs less than \(1\%\) overhead in the scalar and
L-shaped tests and \(8.01\%\)--\(8.72\%\) in elasticity
(Table~\ref{tab:computational_cost_summary}). These costs support its practical
use for checkpoint ranking along the logged trajectory, while enriched
reconstructions are reserved for tighter post-training assessment and
certification.

We finally test the selector without a closed-form solution, using FEM only
for external post-training validation.

\subsection{Perforated plate: reference-free selection with external FEM validation}
\label{subsec:perforated_plate_results}

We consider plane-strain elasticity on
\[
\Omega=(0,1)^2\setminus\overline{P_{96}(c,r)},
\qquad
c=(1/2,1/2),
\qquad
r=0.16,
\]
where \(P_{96}(c,r)\) is a fixed polygonal approximation of a circular hole.
The material is homogeneous with \(E=1\), \(\nu=0.30\), zero body force, and
outer-boundary displacement
\[
u_D(x,y)=\bigl(\varepsilon_0x+\gamma_0y,-\nu\varepsilon_0y\bigr),
\qquad
\varepsilon_0=5\times10^{-2},
\qquad
\gamma_0=2.5\times10^{-2},
\]
while the hole boundary is traction-free. The neural approximation satisfies
the prescribed outer-boundary displacement strongly, and we use the nested
auxiliary hierarchy \(V_{24}\subset V_{48}\subset V_{96}\subset V_0\)
described above.

No closed-form solution is available. Both external FEM references are
conforming vector-valued \(P_1\) Galerkin solutions on fitted triangular
meshes of the same fixed polygonal domain. The level-\(256\) reference,
used only for external post-training validation, has \(179\,882\) displacement
degrees of freedom, while the level-\(192\) check has \(109\,608\). The latter
yields the same FEM-reference oracle in all five runs and changes checkpoint
errors by at most \(6.3\times10^{-4}\) relatively. Neither FEM solve enters
training, Riesz reconstruction, or checkpoint selection.

\begin{figure}[!t]
\centering
\begin{subfigure}[t]{0.55\textwidth}
\centering
\includegraphics[width=\linewidth]{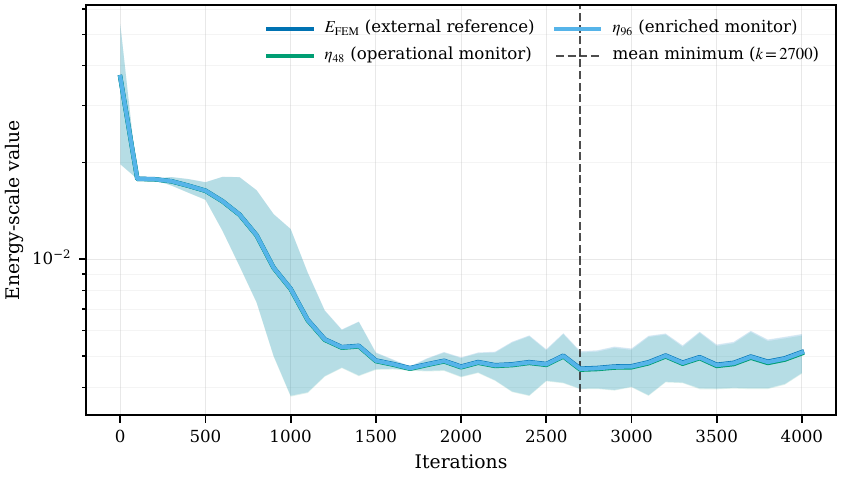}
\caption{Training diagnostics.}
\label{fig:plate_hole_fem_reference:a}
\end{subfigure}
\hfill
\begin{subfigure}[t]{0.37\textwidth}
\centering
\includegraphics[width=\linewidth]{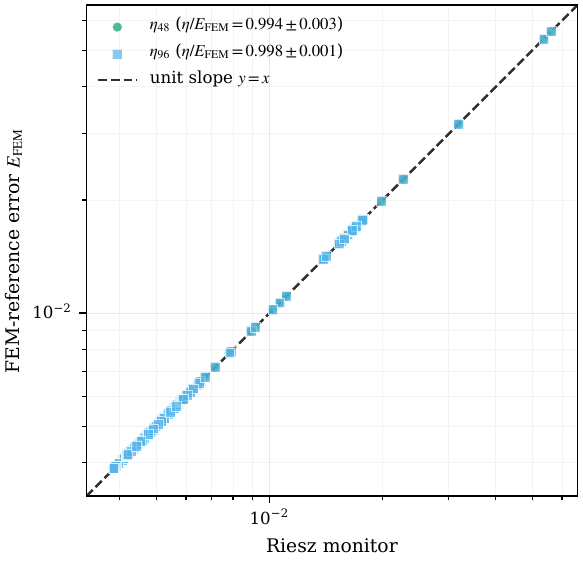}
\caption{Calibration against the FEM reference.}
\label{fig:plate_hole_fem_reference:b}
\end{subfigure}
\caption{Perforated plate: external FEM validation of the operational
\(\eta_{48}\) monitor over five runs.}
\label{fig:plate_hole_fem_reference}
\end{figure}

Figure~\ref{fig:plate_hole_fem_reference} shows that both \(\eta_{48}\) and
\(\eta_{96}\) remain closely aligned with \(E_{\rm FEM}\) along the logged
trajectory.

\begin{table}[!t]
\centering
\small
\setlength{\tabcolsep}{6pt}
\caption{Perforated plate: checkpoint selection against the external FEM
reference.}
\label{tab:plate_hole_fem_reference}
\begin{tabular}{lcc}
\toprule
Criterion \(c\) & Step & \(D_c^{\rm FEM}\) \\
\midrule
final & \(4000\pm0\) & \(1.204\pm0.138\) \\
\(\min E_{\rm FEM}\) & \(2640\pm873\) & \(1.000\pm0.000\) \\
\(\min \eta_{48}\) & \(2640\pm873\) & \(1.000\pm0.000\) \\
\(\min \eta_{96}\) & \(2640\pm873\) & \(1.000\pm0.000\) \\
\(\min J_{\rm train}\) & \(3900\pm100\) & \(1.139\pm0.161\) \\
\(\min J_{\rm val}\) & \(2720\pm1064\) & \(1.018\pm0.024\) \\
\(\min R_{\rm strong}\) & \(360\pm152\) & \(4.146\pm0.274\) \\
\bottomrule
\end{tabular}
\end{table}

Both \(\eta_{48}\) and \(\eta_{96}\) select the FEM-reference oracle in all
five runs (Table~\ref{tab:plate_hole_fem_reference}). The final iterate is
\(20.4\%\) worse on average, validation Ritz
selection remains close to the oracle, and the strong residual selects a much
less accurate checkpoint.

We next assess the sharpness of the reference-free hierarchy at the selected
checkpoint.

\begin{table}[!t]
\centering
\small
\setlength{\tabcolsep}{7pt}
\caption{Perforated plate: hierarchical diagnostics at the
\(\eta_{48}\)-selected checkpoint. Ratios involving \(E_{\rm FEM}\) are
external validation quantities; \(\Ucal_{48,96,0.9}/\eta_{48}\) is the
reference-free conditional bracket factor.}
\label{tab:plate_hole_hierarchical_calibration}
\begin{tabular}{lc}
\toprule
Quantity & Value \\
\midrule
\(\eta_{24}/E_{\rm FEM}\) & \(0.977\pm0.002\) \\
\(\eta_{48}/E_{\rm FEM}\) & \(0.993\pm0.001\) \\
\(\eta_{96}/E_{\rm FEM}\) & \(0.998\pm0.000\) \\
\midrule
\(\delta_{48,96}/E_{\rm FEM}\) & \(0.093\pm0.005\) \\
\(\gamma_{24,48,96}\) & \(0.517\pm0.001\) \\
\midrule
\(\Ucal_{48,96,0.9}/\eta_{48}\) & \(1.023\pm0.002\) \\
\(\Ucal_{48,96,0.9}/E_{\rm FEM}\) & \(1.016\pm0.002\) \\
\bottomrule
\end{tabular}
\end{table}

The auxiliary hierarchy is tightly calibrated on the FEM-reference scale
(Table~\ref{tab:plate_hole_hierarchical_calibration}):
\(\eta_m/E_{\rm FEM}\) increases from \(0.977\pm0.002\) at level \(24\) to
\(0.998\pm0.000\) at level \(96\). Independently of the FEM reference, the
conditional \(q=0.9\) bracket has upper-to-lower factor
\(1.023\pm0.002\); the FEM comparison gives the external check
\(\Ucal_{48,96,0.9}/E_{\rm FEM}=1.016\pm0.002\).

To assess the mechanical impact of checkpoint selection, we compare the
\(\eta_{48}\)-selected output with the final iterate at the solution level.

\begin{table}[!t]
\centering
\small
\setlength{\tabcolsep}{4pt}
\caption{Perforated plate: solution-level comparison with the refined FEM
reference. Here
\(e_{u,L^2}=\|u_\theta-u_{\rm FEM}^{\rm ref}\|_{L^2}/
\|u_{\rm FEM}^{\rm ref}\|_{L^2}\),
\(e_\varepsilon=\|\varepsilon(u_\theta-u_{\rm FEM}^{\rm ref})\|_{L^2}/
\|\varepsilon(u_{\rm FEM}^{\rm ref})\|_{L^2}\), and
\(e_{\rm vm}=\|\sigma_{\rm vm}(u_\theta)-\sigma_{\rm vm}(u_{\rm FEM}^{\rm ref})\|_{L^2}/
\|\sigma_{\rm vm}(u_{\rm FEM}^{\rm ref})\|_{L^2}\),
where \(\sigma_{\rm vm}\) denotes the plane-strain von Mises equivalent stress.
The sampled maximum-displacement error is
\(e_{u,\max}:=\max_{x\in\mathcal S}
\|u_\theta(x)-u_{\rm FEM}^{\rm ref}(x)\|_2\),
where \(\mathcal S\) is the union of the reference-mesh nodes and the
degree-two triangle quadrature points; the final row reports final/selected
ratios.}
\label{tab:plate_hole_solution_level}
\begin{adjustbox}{max width=\textwidth}
\begin{tabular}{lccccc}
\toprule
Selected output
& \(E_{\rm FEM}\)
& \(e_{u,L^2}\)
& \(e_\varepsilon\)
& \(e_{\rm vm}\)
& sampled \(e_{u,\max}\) \\
\midrule
\(\min \eta_{48}\)
& \(4.27{\times}10^{-3}\pm2.65{\times}10^{-4}\)
& \(0.006\pm0.001\)
& \(0.085\pm0.005\)
& \(0.061\pm0.002\)
& \(1.22{\times}10^{-3}\pm1.50{\times}10^{-4}\) \\
final
& \(5.15{\times}10^{-3}\pm7.10{\times}10^{-4}\)
& \(0.008\pm0.002\)
& \(0.097\pm0.011\)
& \(0.073\pm0.012\)
& \(1.56{\times}10^{-3}\pm2.30{\times}10^{-4}\) \\
\midrule
final / \(\min \eta_{48}\)
& \(1.204\) & \(1.405\) & \(1.140\) & \(1.206\) & \(1.275\) \\
\bottomrule
\end{tabular}
\end{adjustbox}
\end{table}

The late-iterate degradation is also visible in the solution-level errors
(Table~\ref{tab:plate_hole_solution_level}): relative to the
\(\eta_{48}\)-selected output, the final iterate
increases the displacement, strain, von Mises stress, and sampled maximum-
displacement errors by factors \(1.405\), \(1.140\), \(1.206\), and
\(1.275\), respectively.

\begin{table}[!t]
\centering
\small
\setlength{\tabcolsep}{5pt}
\caption{Perforated plate: external FEM-solve cost and Riesz monitoring cost.
FEM times are one-time reference solves used only for validation; Riesz times
are mean costs per logged checkpoint, with overhead measured against the pure
optimizer-update time between logs.}
\label{tab:plate_hole_cost}
\begin{tabular}{lccc}
\toprule
Quantity & DOFs & Time (s) & Overhead \\
\midrule
FEM ref. \(256\) & 179882 & \(53.39\) & -- \\
FEM check \(192\) & 109608 & \(34.47\) & -- \\
\midrule
\(\eta_{24}\) & 2642 & \(0.212\pm0.055\) & \(0.93\%\) \\
\(\eta_{48}\) & 10536 & \(0.959\pm0.250\) & \(4.19\%\) \\
\(\eta_{96}\) & 42080 & \(3.621\pm0.677\) & \(15.84\%\) \\
\bottomrule
\end{tabular}
\end{table}

The operational \(\eta_{48}\) monitor adds \(4.19\%\) overhead in this
benchmark (Table~\ref{tab:plate_hole_cost}); the more expensive \(\eta_{96}\) reconstruction is used only for
post-training qualification. Thus the reference-free selector operates at moderate cost without requiring
the refined FEM solve in the selection pipeline.

Taken together, the manufactured benchmarks validate energy-scale calibration,
refinement, and oracle-level selection, while the perforated-plate experiment
supports the use of the Riesz monitor as a lightweight, reference-free
post-training selector when no exact solution is available.

\FloatBarrier

\section{Discussion and outlook}
\label{sec:discussion}

This work developed a reference-free archive-level checkpoint-selection
framework for admissible neural approximations of symmetric coercive
variational problems. Conforming Riesz reconstruction converts the exact
residual--energy geometry into a computable, training-independent monitor,
making the logged energy oracle recoverable without the exact solution or a
reference solve. Monitor refinement yields eventual logged-oracle recovery,
while, under saturation, nested reconstructions provide finite-resolution
near-oracle bounds and interval-separation certificates. Logging-resolution
control
further quantifies the loss relative to a prescribed finite comparison
trajectory and provides trajectory-oracle certificates. Across the numerical
benchmarks, the monitor approaches the reference energy error under refinement
and recovers oracle-level checkpoints once sufficiently resolved. On the
perforated plate, this behavior is confirmed against an independent
FEM reference, with modest observed post-processing cost.

The key mechanism is preservation of the oracle--non-oracle ordering.
Checkpointwise recovery alone does not ensure archive-level selection because
finite-dimensional projection defects can reverse this ordering; finite-archive
uniform recovery eventually restores it. This ordering issue becomes more
fundamental beyond the symmetric coercive setting. Indeed, residual--error norm
equivalence
alone does not preserve the target-error ordering. Let
\(X=Y=\mathbb R^2\) with the Euclidean norm and
\[
B=\operatorname{diag}(1,2),
\qquad
e_i=(0,1),
\qquad
e_j=(3/2,0).
\]
Then
\[
\|e\|_2\leq\|Be\|_2\leq2\|e\|_2
\qquad
\forall e\in\mathbb R^2,
\]
but
\[
\|e_i\|_2<\|e_j\|_2,
\qquad
\|Be_i\|_2>\|Be_j\|_2.
\]
Thus, even exact norm equivalence does not in general preserve the ordering
relevant to archive selection. In the present setting, by contrast,
\[
\|R(u_{\theta_k})\|_{V_{0,a}'}=E_k,
\]
so the continuous residual and energy-error orderings coincide; the ordering
obstruction analyzed here is therefore introduced by the finite-dimensional
reconstruction.

Natural extensions are saturation-free finite-resolution certification and
archive-level selection beyond symmetric coercive formulations. Explicit
quadrature-error control could further upgrade the trajectory-wide diagnostics
to fully certified continuous-energy-norm bounds.

The framework concerns selection among candidates generated by the optimizer.
Viewed from this archive-level perspective, a further natural direction is to
carry the order-preservation principle upstream into the optimization procedure.
We leave this direction for future work.

\appendix

\section{Sensitivity to the prescribed saturation parameter}
\label{app:q_sensitivity}

The prescribed saturation parameter \(q\) enters only the conditional upper
estimate
\(\Ucal_{m,M,q}
=(\eta_m^2+\delta_{m,M}^2/(1-q^2))^{1/2}\)
and therefore does not affect monitor-based checkpoint selection. We assess
the sensitivity of the finite-resolution certificates and upper estimates over
the range \(q\in\{0.70,0.80,0.90,0.96\}\).

Table~\ref{tab:app_kappa4_q_sensitivity} reports the resulting
interval-separation certification counts on \(\kappa_4\) over five independent
trajectories.

\begin{table}[!hbtp]
\centering
\caption{Sensitivity of interval-separation certification on \(\kappa_4\).
Each entry is the number of certified runs out of five.}
\label{tab:app_kappa4_q_sensitivity}
\scriptsize
\setlength{\tabcolsep}{3.2pt}
\begin{adjustbox}{max width=\textwidth}
\begin{tabular}{c@{\qquad}cccc@{\qquad}cccc}
\toprule
& \multicolumn{4}{c}{\(V_{24}\subset V_{48}\)}
& \multicolumn{4}{c}{\(V_{48}\subset V_{96}\)} \\
\cmidrule(lr){2-5}
\cmidrule(lr){6-9}
\(q\)
& \(\mathcal K_{50}\)
& \(\mathcal K_{100}\)
& \(\mathcal K_{200}\)
& \(\mathcal K_{400}\)
& \(\mathcal K_{50}\)
& \(\mathcal K_{100}\)
& \(\mathcal K_{200}\)
& \(\mathcal K_{400}\) \\
\midrule
\(0.70\)
& \(2/5\) & \(5/5\) & \(4/5\) & \(5/5\)
& \(3/5\) & \(5/5\) & \(4/5\) & \(5/5\) \\
\(0.80\)
& \(1/5\) & \(4/5\) & \(4/5\) & \(5/5\)
& \(3/5\) & \(5/5\) & \(4/5\) & \(5/5\) \\
\(0.90\)
& \(0/5\) & \(2/5\) & \(4/5\) & \(4/5\)
& \(2/5\) & \(5/5\) & \(4/5\) & \(5/5\) \\
\(0.96\)
& \(0/5\) & \(0/5\) & \(2/5\) & \(4/5\)
& \(0/5\) & \(3/5\) & \(4/5\) & \(5/5\) \\
\bottomrule
\end{tabular}
\end{adjustbox}
\end{table}

The certification pattern remains stable over a broad range of \(q\),
particularly for \(V_{48}\subset V_{96}\), with the expected gradual loss of
certificates as the upper bounds become more conservative. Larger counts on
coarser subarchives reflect greater checkpoint separation rather than improved
trajectory coverage.

For the perforated plate, Table~\ref{tab:app_plate_hole_q_sensitivity} reports
the conditional upper estimate relative to the independent FEM-reference error
at the \(\eta_{48}\)-selected checkpoint.

\begin{table}[!hbtp]
\centering
\caption{Sensitivity of the conditional upper estimate on the perforated
plate.}
\label{tab:app_plate_hole_q_sensitivity}
\footnotesize
\setlength{\tabcolsep}{5pt}
\begin{tabular}{cc}
\toprule
\(q\) & \(\Ucal_{48,96,q}/E_{\rm FEM}\) \\
\midrule
\(0.70\) & \(1.002\pm0.000\) \\
\(0.80\) & \(1.005\pm0.001\) \\
\(0.90\) & \(1.016\pm0.002\) \\
\(0.96\) & \(1.048\pm0.005\) \\
\bottomrule
\end{tabular}
\end{table}

The upper estimate remains close to the FEM-reference error throughout the
tested range, reaching only \(1.048\pm0.005\) at \(q=0.96\). Thus the qualitative conclusions remain stable over the tested range of \(q\),
with the expected increase in conservatism as \(q\) grows.

\section{Spatial diagnostics of the conforming Riesz reconstruction}
\label{app:spatial_diagnostics}

For two manufactured stress tests, we compare the reference energy-error
density with the spatial density of the conforming Riesz reconstruction.
These comparisons provide a qualitative view of the dominant energetic
regions captured by the reconstruction.

\begin{figure}[!t]
\centering
\includegraphics[width=\textwidth]
{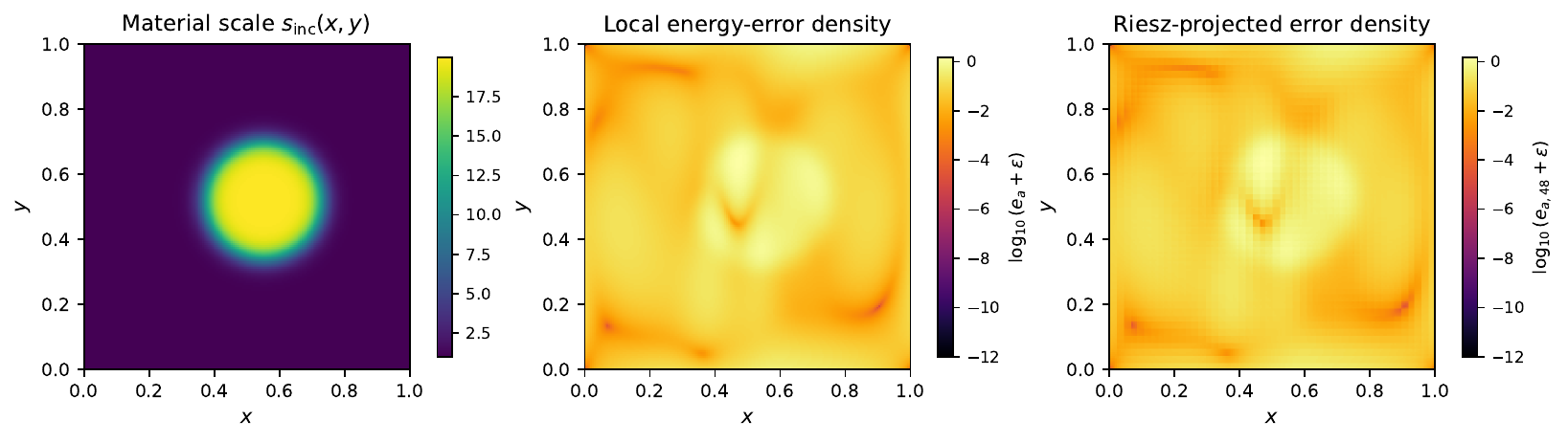}
\caption{Spatial energy diagnostics for the smoothed high-contrast inclusion.
Left: material scale \(s_{\rm inc}(x,y)\). Middle: reference energy-error
density
\(e_a(x)=2\mu|\varepsilon(u_\theta-u^\ast)|^2
+\lambda|\operatorname{div}(u_\theta-u^\ast)|^2\).
Right: conforming Riesz-projected density
\(e_{a,48}(x)=2\mu|\varepsilon(z_{48})|^2
+\lambda|\operatorname{div}z_{48}|^2\).
Both densities are displayed as \(\log_{10}(\,\cdot+\varepsilon)\).}
\label{fig:app_elasticity_inclusion_spatial}
\end{figure}

Figure~\ref{fig:app_elasticity_inclusion_spatial} shows that the
Riesz-projected density identifies the same dominant energetic region
associated with the localized material transition as the reference density.

\begin{figure}[!t]
\centering
\includegraphics[width=\textwidth]
{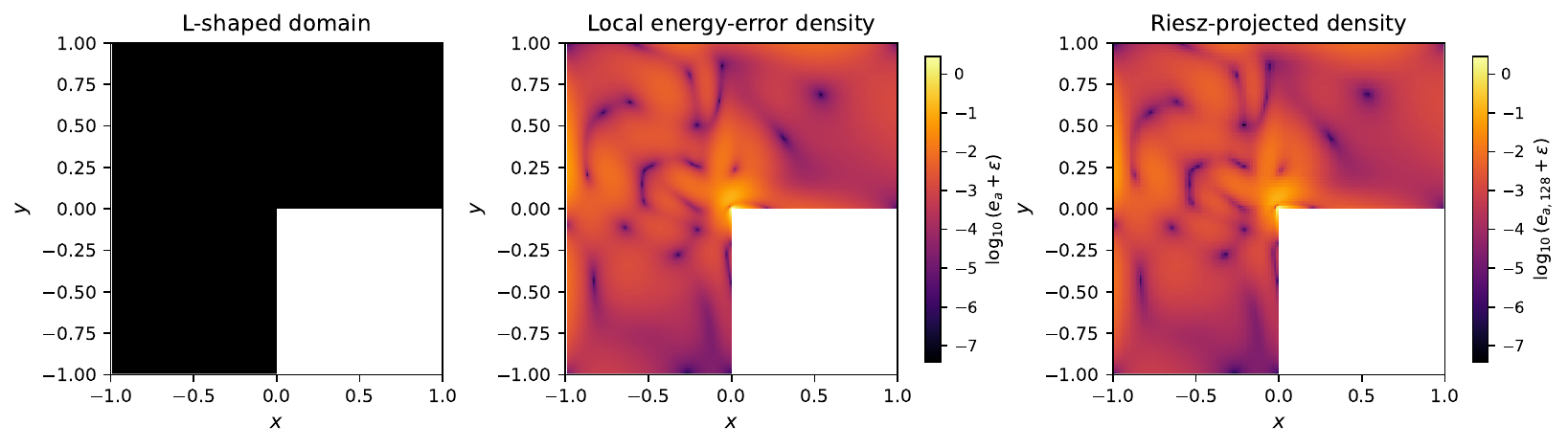}
\caption{Spatial diagnostics for the L-shaped-domain stress test.
Left: domain geometry. Middle: reference energy-error density
\(e_a(x)=|\nabla(u_\theta-u^\ast)(x)|^2\). Right: conforming
Riesz-projected density \(e_{a,128}(x)=|\nabla z_{128}(x)|^2\) on the
\(128\times128\) auxiliary space. Densities are shown as
\(\log_{10}(\,\cdot+\varepsilon)\).}
\label{fig:app_lshape_spatial}
\end{figure}

Figure~\ref{fig:app_lshape_spatial} shows that both densities concentrate
primarily near the reentrant corner while retaining nonzero contributions
away from it, consistent with the observed need for global auxiliary
refinement.

\section{Discrete-hierarchy and quadrature-refinement audit}
\label{app:discrete_quadrature_audit}

All audits below are post-training and use the same trained trajectories and
saved checkpoints as the main experiments.

\subsection{Monitor-hierarchy audit}

For the scalar-diffusion and L-shaped benchmarks, we recompute the reported
monitor diagnostics at the saved operationally selected checkpoints on the
same auxiliary levels using the common finest-level Galerkin restrictions of
Section~\ref{sec:numerical}. No monotonicity violation is observed. Increasing
the quadrature order of the common Galerkin assembly from \(3\) to \(5\)
changes the monitor values by at most \(1.314\times10^{-6}\) for scalar
diffusion and \(6.486\times10^{-4}\) for the L-shaped problem.

\subsection{Validation-oracle stability}

As noted after Proposition~\ref{prop:best_approximation}, a quadrature
approximation can perturb the ordering induced by an exact energy criterion.
For the manufactured benchmarks, \(E_{\rm quad}\) numerically defines the
validation oracle, and we assess whether the reported ordering is stable under
quadrature refinement. Without retraining, the manufactured-reference energies
are reevaluated at tensor-product Gauss orders \(Q=3,5,7\).
Table~\ref{tab:quadrature_refinement_audit} reports the largest checkpointwise
relative changes.

\begin{table}[!hbtp]
\centering
\caption{Manufactured-reference quadrature-refinement audit. The first three
rows concern saved decision-relevant outputs; the \(\kappa_4\) row uses the
complete \(205\)-checkpoint comparison archive.}
\label{tab:quadrature_refinement_audit}
\footnotesize
\setlength{\tabcolsep}{6pt}
\begin{adjustbox}{max width=\textwidth}
\begin{tabular}{lccc}
\toprule
Benchmark
& Audited outputs
& Maximum \(Q=3\to5\)
& Maximum \(Q=5\to7\) \\
\midrule
Scalar diffusion
& \(20\)
& \(1.494\times10^{-7}\)
& \(1.494\times10^{-7}\) \\
L-shaped domain
& \(5\)
& \(2.932\times10^{-3}\)
& \(5.721\times10^{-4}\) \\
Elasticity
& \(16\)
& \(1.043\times10^{-7}\)
& \(1.237\times10^{-7}\) \\
\(\kappa_4\) full archive
& \(205\)
& \(2.263\times10^{-7}\)
& \(4.759\times10^{-8}\) \\
\bottomrule
\end{tabular}
\end{adjustbox}
\end{table}

The reference-energy evaluations are stable under refinement. The largest
sensitivity occurs for the singular L-shaped problem and decreases from
\(Q=3\to5\) to \(Q=5\to7\). On the complete \(\kappa_4\) archive, the oracle
and runner-up remain unchanged in all five trajectories, while the smallest
relative oracle--runner-up gap, \(1.986\times10^{-3}\), remains well above the
observed quadrature variations. The trajectory-oracle inclusion counts are
also unchanged:
\(\mathcal K_{100}:3/5\), \(\mathcal K_{200}:1/5\), and
\(\mathcal K_{400}:0/5\). Thus the validation ordering underlying the reported
\(\kappa_4\) oracle and logging-resolution conclusions is insensitive to the
tested quadrature refinements.

For the perforated plate, refining the independent FEM reference from level
\(192\) to \(256\) likewise preserves the FEM-reference oracle in all five
runs, with maximum relative checkpoint-error change \(6.26\times10^{-4}\) and
rank correlation at least \(0.999826\).

Together, these audits support numerical stability of the reported selection
and logging-resolution conclusions under the tested quadrature and FEM
refinements.

\begin{remark}[Continuous-energy oracle certification]
The refinement audit above establishes numerical stability with respect to the
tested discretizations. Let \(E_k\) denote the exact continuous energy error
and \(\widetilde E_k\) its numerical approximation for \(k\) in a finite
archive \(\mathcal K\). If
\(\max_{k\in\mathcal K}|\widetilde E_k-E_k|\leq\varepsilon\) and the unique
numerical minimizer \(\widetilde k\in\mathcal K\) has oracle--competitor gap
\(\min_{j\in\mathcal K\setminus\{\widetilde k\}}\allowbreak
(\widetilde E_j-\widetilde E_{\widetilde k})>2\varepsilon\),
then \(\widetilde k\) is also the unique continuous-energy oracle, since
\(E_j-E_{\widetilde k}\geq
\widetilde E_j-\widetilde E_{\widetilde k}-2\varepsilon>0\)
for every \(j\in\mathcal K\setminus\{\widetilde k\}\). Thus certification of
an \(E_{\rm quad}\)-based validation oracle against the exact continuous energy
would follow from a suitable uniform numerical-integration error bound; such
bounds are left to future work.
\end{remark}

\FloatBarrier

\printbibliography

\end{document}